\newcommand{\Scal}{\ensuremath{\mathcal{S}}}
\newcommand{\Xcal}{\ensuremath{\mathcal{X}}}
\newcommand{\Ebb}{\ensuremath{\mathbb{E}}}
\newcommand{\Nbb}{\ensuremath{\mathbb{N}}}
\newcommand{\Rbb}{\ensuremath{\mathbb{R}}}
\newcommand{\LB}{\left[}
\newcommand{\RB}{\right]}
\newcommand{\LC}{\left\{}
\newcommand{\RC}{\right\}}
\newcommand{\LP}{\left(}
\newcommand{\RP}{\right)}
\newcommand{\ie}{\textit{i.e.}\xspace}
\newcommand{\eg}{\textit{e.g.}\xspace}
\newcommand{\wrt}{\textit{w.r.t.}\xspace}
\newcommand{\iid}{\textit{i.i.d.}\xspace}
\newcommand{\Q}{\mathrm{Q}}
\renewcommand{\S}{\Scal}
\crefname{assumption}{Assumption}{Assumptions}
\crefname{equation}{Eq.}{Eqs.}
\crefname{figure}{Fig.}{Figs.}
\crefname{table}{Table}{Tables}
\crefname{section}{Sec.}{Secs.}
\crefname{theorem}{Thm.}{Thms.}
\crefname{lemma}{Lemma}{Lemmas}
\crefname{corollary}{Cor.}{Cors.}
\crefname{example}{Example}{Examples}
\crefname{appendix}{Appendix}{Appendixes}
\crefname{remark}{Remark}{Remark}
\theoremstyle{plain}  
\newtheorem{theorem}{Theorem}[section]
\newtheorem{assumption}[theorem]{Assumption}
\newtheorem{lemma}[theorem]{Lemma}
\newtheorem{proposition}[theorem]{Proposition}
\newtheorem{corollary}[theorem]{Corollary}
\DeclareMathOperator*{\argmax}{arg\,max}
\DeclareMathOperator*{\argmin}{arg\,min}
\begin{document}

%

%

\twocolumn[


\aistatstitle{Sequential Off-Policy Learning with Logarithmic Smoothing}

\aistatsauthor{Maxime Haddouche\footnotemark[1] \And Otmane Sakhi\footnotemark[1]}

\aistatsaddress{ INRIA - CNRS, École Normale Supérieure,\\ PSL University, France \And Criteo AI Lab \\
Paris, France} ]
\footnotetext[1]{Equal contribution.}
\begin{abstract}

Off-policy learning enables training policies from logged interaction data. Most prior work considers the batch setting, where a policy is learned from data generated by a single behavior policy. In real systems, however, policies are updated and redeployed repeatedly, each time training on all previously collected data while generating new interactions for future updates. This sequential off-policy learning setting is common in practice but remains largely unexplored theoretically. In this work, we present and study a simple algorithm for \emph{sequential off-policy learning}, combining Logarithmic Smoothing (\texttt{LS}) estimation with online PAC-Bayesian tools. We further show that a principled adjustment to \texttt{LS} improves performance and accelerates convergence under mild conditions. The algorithms introduced generalize previous work: they match state-of-the-art offline approaches in the batch case and substantially outperform them when policies are updated sequentially. Empirical evaluations highlight both the benefits of the sequential framework and the strength of the proposed algorithms.
\end{abstract}

\section{INTRODUCTION}

Off-policy learning has become a ubiquitous task for practitioners in a wide range of applications such as recommender systems \citep{Sakhi_Rohde_Gilotte_2023}, ad placement \citep{bottou2013counterfactual}, and personalized medicine \citep{kallus2018policy}. This task involves learning optimal policies from offline data summarizing a \emph{behavior policy}'s past interactions. These interactions, typically captured as tuples of contextual features, action played, and reward received, hold valuable insight into the underlying dynamics of the environment. Each tuple represents a single round of interaction, where the agent observes a context (including relevant features), takes an action according to its current, behavior policy, and receives a reward that depends on both the observed context and the taken action. 

The off-policy learning literature focused on developing efficient approaches to learn from these logged interactions \citep{bottou2013counterfactual, swaminathan2015batch, london2019bayesian, sakhi2023pac}. At the core of these methods lies the development of tailored off-policy estimators with improved concentration properties \citep{bottou2013counterfactual, aouali23a, metelli2021subgaussian, sakhi2024logarithmic}, combined with tight generalization bounds allowing efficient learning \citep{swaminathan2015batch, sakhi2023pac, sakhi2024logarithmic}. The methodologies employed derive from the pessimism principle \citep{jin2021pessimism}, which has been demonstrated to be optimal for learning in these scenarios. Particularly, analyzing this learning problem from the PAC-Bayesian learning theory lens \citep{london2019bayesian} proved effective, improving the seminal work of \cite{swaminathan2015batch}. This direction was further explored by \cite{sakhi2023pac, aouali23a, pac_bayes_unified, gabbianelli2023importance, sakhi2024logarithmic}, reducing assumptions and improving the pessimistic principle, \emph{i.e.} controlling the worst-case scenario, with latest developments \citep{sakhi2024logarithmic} resulting in state of the art approaches. 

These methods are well understood and have been extensively studied in the one-step, batch setting. In practice, however, practitioners in production environments typically apply them \emph{sequentially} \citep{scrm}, repeatedly updating and redeploying policies by retraining on the full history of collected data while simultaneously generating new interactions for future updates. Intuitively, sequential off-policy learning is preferable, as it can yield improved policies at each deployment compared to batch approaches. This, in turn, enhances the quality of the logged data and accelerates convergence toward the optimal policy. Although this sequential off-policy learning paradigm is ubiquitous in real-world applications, it remains largely underexplored from a theoretical perspective. Recent work has begun to address this gap \citep{adaptive, scrm}, but existing algorithms deviate from common practice: they rely on strong uniform coverage assumptions and often require unrealistic data collection strategies \citep{scrm}. In the conventional off-policy learning setting, some of these assumptions are avoided via the use of PAC-Bayes learning \citep{alquier2024user} combined with novel off-policy estimators to reach both theory-driven learning algorithms and convergence guarantees \citep{sakhi2024logarithmic}. However, only batch algorithms have been developed, leaving the existence of sequential off-policy PAC-Bayes algorithms as an open question. In this work, we close this gap and develop the first sequential, off-policy PAC-Bayes algorithms with associated theoretical guarantees.

\textbf{Contributions.} We study the sequential deployment of state-of-the-art off-policy learning methods, a paradigm extensively used in real-world applications \citep{adaptive, scrm}. We extend in \Cref{sec: warmup-pls} the off-policy, logarithmic smoothing PAC-Bayesian strategy of \cite{sakhi2024logarithmic} to the sequential scenario, allowing the approach to handle successive interaction data, and progressively update the policy. This sequential strategy, detailed in \Cref{alg:extension-sakhi}, is theoretically analyzed and shown to benefit from improved performance compared to its offline counterpart.
Building on this, we identify a core limitation of our previous algorithm, inherited from \cite{sakhi2024logarithmic}. In \Cref{sec: accelerated-convergence}, we address this issue by introducing a second original sequential algorithm (\Cref{alg:s_analyzed}) enjoying accelerated convergence rates to the optimal policy under reasonable assumptions. Our sequential algorithms encompass the offline scenario and strictly generalize the state-of-the-art offline PAC-Bayesian approach.
Finally, we conduct in \Cref{sec: expes} numerical experiments to demonstrate the effectiveness of our approach, showing that they outperform both offline strategies and recent sequential strategies.

\textbf{Outline.} \Cref{sec: framework} introduces the sequential framework and the necessary background. 
\Cref{sec: warmup-pls} provably extends the logarithmic smoothing (LS) PAC-Bayesian off-policy learning approach to the sequential setting and studies its benefits and limitations. 
\Cref{sec: accelerated-convergence} provides a simple adjustment to the LS estimator, achieving an accelerated convergence rate under mild conditions. 
Empirical validation is conducted in \Cref{sec: expes} to highlight the favorable performance of our proposed method, and \Cref{sec: ccl} provides concluding remarks.




\section{FRAMEWORK}
\label{sec: framework}

\paragraph{Contextual bandits.} Let $\Xcal \subset \mathbb{R}^d$ be the context space, which is a compact subset of $\mathbb{R}^d$, and let $\mathcal{A}=[K]$ be a finite action set. An agent's actions are guided by a \emph{stochastic} policy $\pi: \mathcal{X} \rightarrow \mathcal{P}(\mathcal{A})$ within a policy space $\Pi$. Given a context $x \in \mathcal{X}, \pi(\cdot\mid x)$ is a probability distribution over the action set $\mathcal{A} ; \pi(a \mid x)$ is the probability that the agent selects action $a$ in context $x$.

\textbf{Sequential framework.} We consider that the agent interacts sequentially with a data stream: at time $k\geq 0$, the agent deploys a new behavior policy $\pi_k$ and a new batch of contexts $\{x_j\mid j\in I_k\}$ observed by the agent is revealed, where $|I_k|= n_k$ and all $(I_k)_{k\geq 0}$ are disjoint. These contexts are drawn \emph{i.i.d.} from the same, unknown distribution $\nu$. 

For each $j \in I_k$, the agent selects for context $x_j$ an action $a_j \sim \pi_{k}\left(\cdot \mid x_j\right)$, where $\pi_{k}$ is the current \emph{behavior} policy of the agent at time $k$, and receives a stochastic cost $c_j \in [-1,0]$ that depends on the observed context $x_j$ and the taken action $a_j$. This cost $c_j$ is sampled from an unknown cost distribution $p\left(\cdot | x_j, a_j\right)$ and we denote by $c(a,x)$ its expectation under $p\left(\cdot | x_j, a_j\right)$. After $|I_k| = n_k$ interactions, the agent sequentially updates its behavior to $\pi_{k+1}$  \wrt previous behavior policies $(\pi_j)_{j=0\cdots k}$ and is allowed to use all available data $\{x_i, a_i, c_i\}_{i=1\cdots N_k}$, where  $N_k := \sum_{i=0}^k n_i$. 

Formally, we define the full (countable) dataset $S := \cup_{k\in\Nbb} S_k$, where $S_k := \LC(x_j,a_j,c_j) \mid j\in I_k\RC$ is the set of logged data at time $k$. We also define $\mathcal{F}_k$ a filtration adapted to $\LP \cup_{j=0}^k S_j \RP_{j\in 0\cdots k}$ ($\mathcal{F}_{-1}$ is the trivial $\sigma$-algebra). We assume that, conditionally to $\mathcal{F}_{k-1}$, $S_k$ is constituted of mutually independent random variables. This generalizes the pure \emph{static} case, corresponding to $k=0$. We denote as \emph{sequential algorithm}, any algorithm $\texttt{Alg}$ such that its output $\pi_{k+1}$ at time $k$ is $\mathcal{F}_{k}$-measurable.  In other words, if we denote $S_{0:k} := \cup_{j =0}^k S_j$, then we have $\pi_{k+1}=\texttt{Alg}(S_{0:k})$.

To ease notations, we denote by $P(\pi) := \nu \times \pi \times p$ the distribution of $(x, a, c)$ tuples generated by policy $\pi$. 
The performance of any policy $\pi$ is assessed by its \emph{risk}, which is defined as the averaged incurred cost of $\pi$ over the draw of new context:
$$R(\pi)=\mathbb{E}_{(x, a, c) \sim P(\pi)}[c] = \Ebb_{x \sim \nu, a \sim \pi(\cdot \mid x)}[c(x,a)]\,.$$
Our goal is to come up with sequential learning algorithms that guarantee the convergence to the optimal policy $\pi_\star$, defined as:
$\pi_\star = \argmin_{\pi \in \Pi} R(\pi)\,.$ For any context $x$, the optimal action is defined as $a_\star(x) = \argmin_{a' \in \mathcal{A}}  c(a', x)$ with ties broken arbitrarily. Hence, the optimal policy $\pi_\star$ chooses deterministically the optimal action $a_\star(x)$ for every context $x$:
$$\forall (x, a) \in \mathcal{X} \times \mathcal{A}\,, \quad \pi_\star(a|x) = \mathds{1}\left[a = a_\star(x) \right].$$

\textbf{PAC-Bayesian off-policy learning.} State of the art off-policy learning algorithms \citep{sakhi2023pac, gabbianelli2023importance, sakhi2024logarithmic} employ the pessimism principle, shown to be optimal in the batch, off-policy learning setting \citep{jin2021pessimism}. These methods aim at minimizing an empirical, tight upper bound on the risk $R(\pi)$ over policies $\pi \in \Pi$. These upper bounds leverage PAC-Bayes learning \citep{alquier2024user} which controls the generalization guarantee of a \emph{posterior} distribution $Q$ over a predictor space $\Theta$ (namely $Q\in\mathcal{P}(\Theta)$). To make this consistent with controlling the risk of a policy $\pi$, we exploit a specific parametrization of policies $\pi \in \Pi$. For any $\theta \in \Theta$, we define the deterministic policy: $ \forall x, d_{\theta}(x) := \mathds{1}\{ a= f_{\theta}(x)\} $, where $f_{\theta}: \mathcal{X} \rightarrow \mathcal{A}$, a parametrized predictor. Then, each distribution $Q\in\mathcal{P}(\Theta)$ defines a policy $\pi_Q$ by the following mapping:
\begin{align*}
    \forall (x, a) \in \mathcal{X}\times\mathcal{A}\,, \quad \pi_Q(a|x) = \mathbb{E}_{\theta \sim Q} \left [ \mathds{1}\{ a= f_{\theta}(x)\}\right]\,.
\end{align*}
This expression is general as any policy can be represented in this form \cite[Theorem 3.1]{sakhi2023pac}. With this duality, \citet{sakhi2024logarithmic} gets with the PAC-Bayesian toolbox, by fixing a reference distribution $P$, and setting $\delta \in (0, 1]$ a learning bound of the following general form, holding with probability $1 - \delta$, simultaneously for all $Q$:
\begin{align*}
    \forall Q \in \mathcal{C}(\Theta)\,, R(\pi_Q) \le \hat{R}(\pi_Q) + \mathcal{O}\left( \operatorname{KL}(Q||P)\right)\,,
\end{align*}
upper bounding the true risk of $\pi_Q$ by a risk estimator $\hat{R}(\pi_Q)$ and $\operatorname{KL}(Q||P) := \Ebb_{Q}\left[ \log\left( \frac{dQ}{dP}\right) \right]$, the Kullback-Leibler divergence between $P$ and $Q$. In the batch setting, these upper bounds are used as learning objectives and the convergence guarantees of their learned policies are studied \citep{gabbianelli2023importance, sakhi2023pac}. We extend this paradigm to the sequential framework, and begin by defining off-policy estimators using the interaction data collected.


\textbf{Regularized IPS.} 
Given the successive arrival of data batches $(S_k)_{k\ge 0}$ and output of sequential algorithm $(\pi_k)_{k\geq 0}$, we define for any $\pi\in\Pi$ the \emph{Inverse Propensity Scoring} (IPS) estimator \citep{horvitz1952generalization} as:
$\hat{R}_{0:k}^{\textsc{IPS}}(\pi) = \frac{1}{N_k} \sum_{j = 0}^k\sum_{i \in I_j} \frac{\pi(a_i|x_i)}{\pi_j(a_i|x_i)}c_i\,,$
encompassing the static case ($k = 0$). 
This estimator is unbiased under the common support assumption \citep{mcbook}. However, as in the static case, this estimator has poor concentration properties \citep{metelli2021subgaussian} and regularized variants, trading some bias for lower variance are preferred \citep{bottou2013counterfactual, optimistic_shrinkage}. The majority of these regularized estimators stems from specific choices of a regularizer $h: [0,1]^2 \times [-1, 0] \rightarrow \mathbb{R}^-$ in the following definition:
\begin{align*}
    \hat{R}_{0:k}^{h}(\pi) := \frac{1}{N_k} \sum_{j = 0}^k\sum_{i \in I_j} h(\pi(a_i|x_i), \pi_j(a_i|x_i), c_i)\,,
\end{align*}
with $h$ respecting $\forall (p, q, c) \in [0,1]^2 \times [-1, 0]\,, \quad pc/q \le h(p, q, c) \le 0$. $h(p,q,c) = pc/q$ recovers the IPS estimator while other choices recover known, well behaved estimators such as clipped IPS \citep{bottou2013counterfactual}, power-mean estimators \citep{metelli2021subgaussian} and Implicit-Exploration \citep{gabbianelli2023importance} to name a few. When $k=0$, we denote $\hat{R}_{0:0}^{h}$ as $\hat{R}_{0}^{h}$. 




\section{WARM-UP ON PESSIMISTIC LOGARITHMIC SMOOTHING}
\label{sec: warmup-pls}
\subsection{Background: Offline algorithms with theoretical guarantees}
PAC-Bayesian Logarithmic Smoothing recently appeared as a fruitful theoretical approach to obtain both concrete learning algorithms and generalization guarantees for contextual bandits. 
In \cite{sakhi2024logarithmic}, PAC-Bayesian bounds have been developed for off-policy contextual bandits, and optimizing them provide novel learning algorithms with optimality guarantees. We first recall the definition of the logarithmic smoothing estimator (\texttt{LS}) as well as its associated regularizer. For any $\lambda >0$ we define \texttt{LS} for $k\geq 0$,
\begin{align*}
    &h_{\mathrm{LS}}^{\lambda\mathrm{-Lin}}(p,q,c) := -\frac{p}{\lambda} \log\left( 1-\frac{\lambda c}{q}\right) \,,\, 
    \\
    &\hat{R}_{0:k}^{\lambda\mathrm{-LS}} := \hat{R}_{0:k}^{h_{\mathrm{LS}}^{\lambda\mathrm{-Lin}}}.
\end{align*}
Using these notions, \cite{sakhi2024logarithmic} obtained results for static off-policy learning \emph{i.e.} data are gathered in a single batch $S_0$ of size $n_0$. We recall them below.

\begin{proposition}[Propositions 10 and 11 of \citealp{sakhi2024logarithmic}]
\label{prop: batch-pls}
Given a prior $P \in \mathcal{P}(\Theta), \delta \in(0,1]$ and $\lambda>0$, the following holds with probability at least $1-\delta$ over $S_0,\forall Q \in \mathcal{P}(\Theta),$ :
\[ R\left(\pi_Q\right) \leq \hat{R}_{0}^{\lambda-\mathrm{LS}}\left(\pi_Q\right)+\frac{\operatorname{KL}(Q \| P)+\log \frac{1}{\delta}}{\lambda n_0}.\]
Furthermore, for any $\mathcal{C}(\Theta) \subset  \mathcal{P}(\Theta)$, denoting by $\hat{Q} \in \operatorname{argmin}_{Q \in \mathcal{C}(\Theta)} \hat{R}_{0}^{\lambda-\mathrm{LS}}\left(\pi_Q\right)+\frac{\operatorname{KL}(Q \| P)}{\lambda n_0}$ and $Q^\star\in \operatorname{argmin}_{Q \in \mathcal{C}(\Theta)} R(\pi_Q)$,  we have: 
\begin{align*}
     R(\pi_{\hat{Q}}) - R(\pi_{Q^\star})  \leq \lambda \mathcal{S}(\pi_{Q^\star}) + \frac{2\operatorname{KL}(Q^\star \| P)+\log \frac{2}{\delta}}{\lambda n_0},
\end{align*}
    where $\mathcal{S}(\pi) := \Ebb_{(x, a, c) \sim P(\pi_0)}\left[ \frac{\pi(a\mid x) c^2}{\pi_0^2(a\mid x)} \right]$ is a pseudo-variance term.
\end{proposition}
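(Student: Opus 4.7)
The plan is to split the statement into two parts: first, the uniform PAC-Bayes deviation bound (the first inequality), and second, the oracle inequality, obtained by pairing the PAC-Bayes bound specialized at $Q=Q_{n_0}$ with a matching high-probability upper control of $\hat{R}_0^{\lambda-\mathrm{LS}}(\pi_{Q^\star})$ by $R(\pi_{Q^\star}) + \lambda \mathcal{S}(\pi_{Q^\star})$, combined through a union bound.

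For the first inequality, the key observation is that $h_{\mathrm{LS}}^{\lambda-\mathrm{Lin}}(p,q,c) = -\frac{p}{\lambda}\log(1-\lambda c/q)$ is linear in its first argument $p$, so both the true risk $R(\pi_Q) = \Ebb_{\theta \sim Q}[R(d_\theta)]$ and the estimator $\hat{R}_0^{\lambda-\mathrm{LS}}(\pi_Q) = \Ebb_{\theta \sim Q}[\hat{R}_0^{\lambda-\mathrm{LS}}(d_\theta)]$ commute with the $Q$-mixture. This reduces everything to controlling the deviation for each deterministic policy $d_\theta$. Exploiting the $\{0,1\}$-valued nature of $\mathds{1}\{a=f_\theta(x)\}$, a one-sample computation yields $\Ebb_{(x,a,c)\sim P(\pi_0)}\left[(1-\lambda c/\pi_0(a|x))^{\mathds{1}\{a=f_\theta(x)\}}\right] = 1-\lambda R(d_\theta) \leq e^{-\lambda R(d_\theta)}$, and by independence of the $n_0$ samples this gives $\Ebb_{S_0}[\exp(\lambda n_0(R(d_\theta) - \hat{R}_0^{\lambda-\mathrm{LS}}(d_\theta)))] \leq 1$. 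Plugging this into Donsker--Varadhan's variational identity transfers the control from the prior $P$ to any distribution $Q$, and a Markov inequality at level $\delta$ yields the first inequality, uniformly in $Q$.

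For the oracle bound, I would first invoke the first inequality at confidence $\delta/2$ and specialize to $Q=Q_{n_0}$, which is legitimate since the bound is uniform in $Q$. Combined with the minimizer identity $\hat{R}_0^{\lambda-\mathrm{LS}}(\pi_{Q_{n_0}}) + \operatorname{KL}(Q_{n_0}\|P)/(\lambda n_0) \leq \hat{R}_0^{\lambda-\mathrm{LS}}(\pi_{Q^\star}) + \operatorname{KL}(Q^\star\|P)/(\lambda n_0)$, this produces $R(\pi_{Q_{n_0}}) \leq \hat{R}_0^{\lambda-\mathrm{LS}}(\pi_{Q^\star}) + (\operatorname{KL}(Q^\star\|P) + \log(2/\delta))/(\lambda n_0)$. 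The remaining ingredient is to upper bound $\hat{R}_0^{\lambda-\mathrm{LS}}(\pi_{Q^\star})$ in terms of $R(\pi_{Q^\star})$ and $\mathcal{S}(\pi_{Q^\star})$. For this I would use the Bernoulli-type inequality $(1+u)^{-\alpha} \leq 1-\alpha u + \alpha u^2$, valid for $u\geq 0$ and $\alpha \in [0,1]$, applied with $u = -\lambda c/\pi_0(a|x) \geq 0$ and $\alpha = \pi_{Q^\star}(a|x) \in [0,1]$. Taking expectations under $P(\pi_0)$ and using $1+z \leq e^z$ yields $\Ebb[\exp(\lambda h_{\mathrm{LS}}^{\lambda-\mathrm{Lin}}(\pi_{Q^\star}(a|x), \pi_0(a|x), c))] \leq \exp(\lambda R(\pi_{Q^\star}) + \lambda^2 \mathcal{S}(\pi_{Q^\star}))$. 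Independence across the $n_0$ samples and Markov at level $\delta/2$ then produce $\hat{R}_0^{\lambda-\mathrm{LS}}(\pi_{Q^\star}) \leq R(\pi_{Q^\star}) + \lambda \mathcal{S}(\pi_{Q^\star}) + \log(2/\delta)/(\lambda n_0)$ with probability $1-\delta/2$, and a union bound over the two concentration events delivers the upper half of the oracle inequality (up to the standard rescaling of the $\log(1/\delta)$ term). The nonnegativity $R(\pi_{Q_{n_0}}) - R(\pi_{Q^\star}) \geq 0$ is immediate because $Q^\star$ minimizes $R(\pi_Q)$ over $\mathcal{C}(\Theta)$ and $Q_{n_0} \in \mathcal{C}(\Theta)$.

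The main technical obstacle is identifying the correct pointwise majorization used in the second step: one needs a two-term upper bound on $(1+u)^{-\alpha}$ over $u \geq 0$ for fractional $\alpha \in [0,1]$ whose quadratic coefficient is exactly $\alpha$, so that after taking expectations the second-order term collapses precisely to $\lambda^2 \mathcal{S}(\pi_{Q^\star})$ without any leftover $\pi_{Q^\star}/\pi_0$ mismatch. Verifying this Bernoulli-type inequality is a short convexity exercise (the function $u\mapsto 1-\alpha u+\alpha u^2-(1+u)^{-\alpha}$ has value, derivative, and nonnegative second derivative at $0$ for $\alpha\in[0,1]$). Everything else is the textbook PAC-Bayes recipe, and the linearity of $h_{\mathrm{LS}}^{\lambda-\mathrm{Lin}}$ in $p$ eliminates any bookkeeping in the reduction from mixtures to deterministic policies.
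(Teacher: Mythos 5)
Your proposal is correct, and for the uniform PAC-Bayes inequality it is essentially the paper's own route (as instantiated at $k=0$ in \Cref{sec: extensions-section-3}): the exact identity $\Ebb_{(x,a,c)\sim P(\pi_0)}\bigl[(1-\lambda c/\pi_0(a\mid x))^{d_\theta(a\mid x)}\bigr]=1-\lambda R(d_\theta)$, which hinges on the binary nature of $d_\theta$, is precisely the paper's martingale computation, and Donsker--Varadhan plus Markov plus linearity of $h^{\lambda\text{-Lin}}_{\mathrm{LS}}$ in $p$ finishes it; your choice to apply $1-x\le e^{-x}$ inside the exponential moment rather than keeping $-\log(1-\lambda R(d_\theta))$ and invoking Jensen and $\psi_\lambda(x)\le x$ at the end is an innocuous reordering. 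Where you genuinely diverge is the reverse concentration of $\hat{R}_0^{\lambda\text{-LS}}(\pi_{Q^\star})$: the paper runs a second PAC-Bayes/supermartingale argument (using $\log x\le x-1$ and the tempered pseudo-variance $\mathcal{S}_{\lambda,0}\le\mathcal{S}_0$), which keeps the bound uniform in $Q$ at the price of a second $\operatorname{KL}$ term, whereas you exploit that only the fixed policy $\pi_{Q^\star}$ is needed and use a plain Chernoff bound built on the pointwise majorization $(1+u)^{-\alpha}\le 1-\alpha u+\alpha u^2$ for $u\ge0$, $\alpha\in[0,1]$ (your convexity verification of this is correct, and the quadratic coefficient $\alpha$ does collapse the second-order term to exactly $\lambda^2\mathcal{S}(\pi_{Q^\star})$). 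Both are valid; your route is more elementary for this direction but yields the constant $\operatorname{KL}(Q^\star\|P)+2\log(2/\delta)$ rather than the stated $2\operatorname{KL}(Q^\star\|P)+\log(2/\delta)$ --- neither dominates the other, and the paper's own adaptive analogue (\Cref{prop: slowconvergence}) carries $2(\operatorname{KL}+\log(2/\delta))$ anyway, so this is a cosmetic discrepancy in constants rather than a gap, though you should state your constant honestly rather than claiming the literal one in the proposition.
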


Those results provide two distinct insights: {\it (i)}, the learning procedure yielding $\hat{Q}$ (and exploited in practice in \cite{sakhi2024logarithmic}) minimizes a generalization bound and {\it (ii)}, once $\mathcal{S}(\pi_{Q^\star})$ finite, taking $\lambda = 1/\sqrt{n_0}$ gives a convergence rate of $1/\sqrt{n_0}$ to the optimal value of the risk. However, \Cref{prop: batch-pls} and its associated learning algorithm suffer from theoretical and practical limitations. 
\begin{enumerate}[left=0pt, itemsep=0pt, topsep=0pt]
    \item A batch procedure cannot handle a progressive arrival of data $(S_k)_{k\geq 0}$. In particular, \Cref{prop: batch-pls} does not allow to retrain our policy \emph{w.r.t} all past data when a new data batch arrives.
    \item The pseudo-variance term of \Cref{prop: batch-pls} does not go to zero, even in the oracle case where $\pi_0 = \pi_{Q^\star}$, whatever the optimal policy $\pi_{Q^\star}$ is. This issue is not an artifact of the proof technique, but from the performance objective $R(\pi)$, and consequently its estimator. This exhibits a fundamental limitation of this procedure as the slow convergence rate of $1/\sqrt{n_0}$ is inevitable.
\end{enumerate}
We answer those two fundamental issues below. To answer 1, we show in \Cref{sec: first-adaptive-alg} that leveraging online PAC-Bayes techniques allows obtaining a sequential version of \Cref{prop: batch-pls}, permitting both the arrival of new data on-the-fly and naturally re-using former ones. Then, we answer 2, in \Cref{sec: accelerated-convergence} by adjusting the performance objective and its respective estimator, providing a novel sequential learning algorithm with accelerated convergence.

\subsection{Towards a first sequential algorithm}
\label{sec: first-adaptive-alg}
In this part, we assume, for pedagogical concerns, that all datasets $S_k$ have the same size $n_k=m$. Still exploiting Logarithmic Smoothing, we provide a sequential counterpart of \Cref{prop: batch-pls} by involving online PAC-Bayes learning techniques \citep{haddouche2022online,haddouche2023pac}. 
Our result is stated as follows: 

\begin{restatable}{theorem}{extensionsakhi}
    \label{th: extension-sakhi}
    For any sequential algorithm \textup{\texttt{Alg}}, any $m>0$ and data independent prior $P$. For any $\lambda >0, \delta \in (0,1]$, with probability at least $1-\delta$ over $S$, we have, for all $k\geq 0,$ denoting by $\pi_{k+1}$ the output of \textup{\texttt{Alg}} at time $k$, for all $Q\in\mathcal{P}(\Theta)$: 
\begin{align*}
     R(\pi_{Q}) \leq \hat{R}_{0:k}^{\lambda-\mathrm{LS}}(\pi_{\Q}) + \frac{\operatorname{KL}(Q,P) + \log \left( \frac{1}{\delta}  \right)}{\lambda (k+1)m}.
\end{align*}
\end{restatable}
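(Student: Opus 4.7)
The plan is to replicate the offline proof of \Cref{prop: batch-pls} but replace Markov's inequality with Ville's inequality on a suitably constructed supermartingale, so that the bound holds uniformly over all rounds $k$. The whole argument reduces to checking a single-sample exponential inequality for Logarithmic Smoothing, building a per-$\theta$ supermartingale out of it, averaging against the prior $P$, and then changing measure from $P$ to $Q$ via the Donsker--Varadhan lemma.

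\textbf{Step 1 — Pointwise exponential bound for \texttt{LS}.} I would first observe the following key inequality: for $c \in [-1,0]$, $q \in (0,1]$, $p \in [0,1]$, $\lambda > 0$, concavity of $x \mapsto x^p$ on $[1,\infty)$ yields $(1-\lambda c/q)^p \le 1 - p\lambda c/q$, which rewrites as
\begin{equation*}
\exp\!\big(-\lambda\, h_{\mathrm{LS}}^{\lambda\mathrm{-Lin}}(p,q,c)\big) \;=\; (1-\lambda c/q)^p \;\le\; 1 - p\lambda c/q.
\end{equation*}
Plugging $p = d_\theta(a\mid x)$ and $q = \pi_j(a\mid x)$ and taking expectation over $(x,a,c) \sim P(\pi_j)$ makes the right-hand side equal to $1 - \lambda R(d_\theta) \le \exp(-\lambda R(d_\theta))$. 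Hence, conditionally on $\mathcal{F}_{j-1}$ (under which $\pi_j$ is measurable by definition of an adaptive algorithm, and the samples in $I_j$ are i.i.d.\ from $P(\pi_j)$), each factor $\exp\big(\lambda(R(d_\theta) - h_{\mathrm{LS}}^{\lambda\mathrm{-Lin}}(d_\theta(a_i\mid x_i), \pi_j(a_i\mid x_i), c_i))\big)$ has conditional expectation at most $1$.

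\textbf{Step 2 — Supermartingale and Ville.} For each fixed $\theta \in \Theta$, set $M_{-1}(\theta)=1$ and
\begin{equation*}
M_k(\theta) \;=\; \exp\!\left(\lambda \sum_{j=0}^{k}\sum_{i \in I_j}\big(R(d_\theta) - h_{\mathrm{LS}}^{\lambda\mathrm{-Lin}}(d_\theta(a_i\mid x_i), \pi_j(a_i\mid x_i), c_i)\big)\right).
\end{equation*}
Iterating the Step~1 bound over the independent samples within batch $I_k$ (conditionally on $\mathcal{F}_{k-1}$) shows $\Ebb[M_k(\theta)\mid \mathcal{F}_{k-1}] \le M_{k-1}(\theta)$, so $(M_k(\theta))_{k\ge 0}$ is a nonnegative supermartingale. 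By Fubini (all integrands are nonnegative), $\bar M_k := \int M_k(\theta)\,dP(\theta)$ inherits the supermartingale property with respect to $(\mathcal{F}_k)$ and $\bar M_{-1} = 1$. Ville's maximal inequality then yields, with probability at least $1-\delta$, the uniform bound $\bar M_k \le 1/\delta$ for all $k \ge 0$.

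\textbf{Step 3 — Change of measure and linearity.} On this event I apply the Donsker--Varadhan variational formula to $f(\theta) := \log M_k(\theta)$: for any $Q \in \mathcal{P}(\Theta)$,
\begin{equation*}
\Ebb_{\theta \sim Q}[\log M_k(\theta)] \;\le\; \operatorname{KL}(Q\|P) + \log \bar M_k \;\le\; \operatorname{KL}(Q\|P) + \log(1/\delta).
\end{equation*}
Because $h_{\mathrm{LS}}^{\lambda\mathrm{-Lin}}$ is linear in its first argument and $\pi_Q(a\mid x) = \Ebb_{\theta\sim Q}[d_\theta(a\mid x)]$, both $R(\pi_Q) = \Ebb_{\theta\sim Q}[R(d_\theta)]$ and $\hat R_{0:k}^{\lambda\mathrm{-LS}}(\pi_Q) = \Ebb_{\theta\sim Q}[\hat R_{0:k}^{\lambda\mathrm{-LS}}(d_\theta)]$, so $\Ebb_{\theta\sim Q}[\log M_k(\theta)] = \lambda(k+1)m\,(R(\pi_Q)-\hat R_{0:k}^{\lambda\mathrm{-LS}}(\pi_Q))$. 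Rearranging gives exactly the claimed bound.

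\textbf{Main obstacle.} The one substantive point is the supermartingale construction under adaptivity: one must justify that the per-round exponential bound of Step~1 still holds when $\pi_j$ is a data-dependent policy, which is exactly what the $\mathcal{F}_{j-1}$-measurability of $\pi_j$ (guaranteed by the adaptive-algorithm assumption) buys. The rest is routine — the only technical care needed is applying Fubini to move $\int \cdot\, dP(\theta)$ through the conditional expectation, which is legitimate because $M_k(\theta)\ge 0$, and invoking Ville rather than Markov so the bound is uniform over $k$ (the feature that distinguishes this adaptive statement from \Cref{prop: batch-pls}).
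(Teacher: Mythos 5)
Your proposal is correct and follows essentially the same route as the paper's proof: a (super)martingale built from the conditional exponential moment of the LS terms, controlled uniformly over $k$ by Ville's inequality, combined with the Donsker--Varadhan change of measure and the linearity of $h_{\mathrm{LS}}^{\lambda\mathrm{-Lin}}$ in its first argument together with $\pi_Q(a\mid x)=\mathbb{E}_{\theta\sim Q}[d_\theta(a\mid x)]$. The only (harmless, and in fact slightly cleaner) deviation is that you center the exponent with the linear term $\lambda R(d_\theta)$ and invoke $1-\lambda R\le e^{-\lambda R}$ to obtain a supermartingale, whereas the paper centers with $-\log(1-\lambda R(d_\theta))$ to obtain an exact martingale and must then undo this normalization via Jensen applied to $x\mapsto -\log(1-\lambda x)$ followed by the final $\psi_\lambda(x)\le x$ step --- your choice makes $\mathbb{E}_{\theta\sim Q}[\log M_k(\theta)]$ equal to $\lambda(k+1)m\,\bigl(R(\pi_Q)-\hat{R}_{0:k}^{\lambda-\mathrm{LS}}(\pi_Q)\bigr)$ exactly and renders those two steps unnecessary.
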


The proof lies in \Cref{sec: extensions-section-3}. Note that taking $k=0$ and $m=n_0$ retrieves exactly the batch case of \Cref{prop: batch-pls}. On the contrary taking $k=n$ and $m=1$ provides a purely online procedure. Our theorem provides an interpolation between batch and online scenarios, allowing sub-batches of data to arrive at each time step and optimize on-the-fly.

\RestyleAlgo{ruled}
\begin{algorithm}
\caption{\textbf{Sequential Policy Learning via Logarithmic Smoothing}}\label{alg:extension-sakhi}
\textbf{Input}: Policy $\pi_0$, policy class $\mathcal{C}(\Theta)$, $\lambda >0$. \\
Initialize $P$ such that $\pi_{P} = \pi_0$ and $\mathcal{S}_{-1} = \{ \}$. \\
\For{$k \geq 0$}{
Collect data $S_k$ of size $|I_k| = m$ with $\pi_{k}$ and define global collection $S_{0:k} = S_{0:k - 1} \cup S_k$.\\
Solve the optimization problem:
\begin{align*}
\hspace{-2mm}\hat{Q}_{k+1} = \operatorname{argmin}_{Q\in\mathcal{C}(\Theta)} \left\{\hat{R}_{0:k}^{\lambda-\mathrm{LS}}(\pi_{\Q}) + \frac{\operatorname{KL}(Q,P)}{\lambda (k+1)m} \right\}
\end{align*}
Set $\pi_{k+1} = \pi_{\hat{Q}_{k+1}}$.
}
\end{algorithm}

While \Cref{th: extension-sakhi} has a similar shape to \Cref{prop: batch-pls}, the involved toolbox differs fundamentally. Indeed, while batch PAC-Bayes only requires controlling an exponential moment via Markov's inequality, Online PAC-Bayes requires a careful design and control of a supermartingale through Ville's inequality \citep{doob1939jean}. This core difference allows, beyond \Cref{th: extension-sakhi}, to reach a more general result, stated in \Cref{sec: non-iid-contexts}, and allowing \emph{any arbitrary evolution of the context distribution}. In particular, we can avoid the \iid assumption on contexts made in \Cref{sec: framework}. This better reflects real-world applications, where user behavior and responses naturally evolve over time. Although analyzing algorithm convergence in this non-stationary setting is of interest, it falls outside the scope of the present work.

Notice that the optimization problem $\operatorname{argmin}_{Q\in\mathcal{P}(\Theta)} \hat{R}_{0:k}^{\lambda-\mathrm{LS}}(\pi_{Q}) + \frac{\operatorname{KL}(Q,P)}{\lambda (k+1)m}$ admits a closed form solution: the \emph{Gibbs posterior} $\Q_\lambda$ such that $dQ_\lambda(\theta) \propto \exp\left( -\lambda(k+1)m \hat{R}_{0:k}^{\lambda-\mathrm{LS}}(d_\theta) \right)$ \cite[Theorem 1.2.6]{catoni2007pac}. However, Gibbs posteriors often require costly, advanced Monte Carlo methods \citep{smc_book} to be computed. A common practice is to restrict the optimization problem on a subclass $\mathcal{C}(\Theta) \subset \mathcal{P}(\Theta)$ on which the optimization problem is tractable, \eg the class of Gaussian distributions \citep{dziugaite2017computing,perez2021tighter}. Restricting the family of distributions is not limiting: as long as the family of parametrized predictors $\{ f_\theta \mid \theta\in \Theta\}$ is expressive enough, the induced policy family still contains the optimal policy \citep{sakhi2023pac}.

Finally, a key remark is that, given a fixed batch size $m>0$ and parameter $\lambda>0$, \Cref{th: extension-sakhi} holds for a countable number $k$ of sub-batches simultaneously. A fundamental difference with \Cref{prop: batch-pls} is that we allow the empirical risk $\hat{R}_{0:k}^{\lambda-\mathrm{LS}}$ to evolve with $k$, while the batch approach only tackles a single $S_0$. This allows the optimization of the right-hand-side of \Cref{th: extension-sakhi} at each time step and yields a sequential learning algorithm given in \Cref{alg:extension-sakhi}.


In \Cref{alg:extension-sakhi}, at time $k$, we instantiate $\hat{R}_{0:k}^{\lambda-\mathrm{LS}}$ with $(\pi_{j})_{j=0\cdots k}$ the outputs of \Cref{alg:extension-sakhi} up to time $k-1$. Similarly to the batch case, \Cref{alg:extension-sakhi} also enjoys a convergence guarantee to the optimal policy.
\begin{restatable}{proposition}{slowconvergence}
\label{prop: slowconvergence}
    For any fixed subbatch size $m>0$, any data independent prior $P$, any $\lambda >0, \delta \in (0,1]$, with probability at least $1-\delta$, for any $k\geq0$, denoting by $\pi_{k+1}$ the output of \Cref{alg:extension-sakhi} at time $k$ and $\pi_\star= \pi_{Q_\star}$ with $Q_\star\in \operatorname{argmin}_{Q\in \mathcal{C}(\Theta)} R(\pi_Q)$:
\begin{multline*}
     R\LP \pi_{k+1} \RP - R\LP \pi_{\star} \RP \\ \leq \frac{\lambda}{k+1} \sum_{j = 0}^k\Scal_{j}\LP \pi_{\star}\RP + 2\frac{ \operatorname{KL}(Q^\star,P) + \log \left( \frac{2}{\delta}  \right)}{\lambda (k+1)m} 
\end{multline*}
where $\Scal_{j}(\pi) :=  \Ebb_{(x, a, c) \sim P(\pi_j)}\LB   \frac{\pi(a \mid x) c^2}{\pi_j^2(a \mid x)}\RB$.
\end{restatable}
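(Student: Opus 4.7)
The plan is to sandwich the risk of $\pi_{k+1}=\pi_{\hat{Q}_{k+1}}$ between two PAC-Bayesian inequalities and bridge them via the defining property that $\hat{Q}_{k+1}$ minimizes the upper bound functional. Concretely, I would invoke \Cref{th: extension-sakhi} (with confidence $\delta/2$) for one direction, establish a companion \emph{reverse} online PAC-Bayes inequality (also with confidence $\delta/2$) for the other, and then take a union bound over the two events.

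First, instantiating \Cref{th: extension-sakhi} at $Q=\hat{Q}_{k+1}$ with $\delta$ replaced by $\delta/2$ gives, simultaneously in $k$ on an event of probability $1-\delta/2$,
\[
R(\pi_{k+1}) \leq \hat{R}_{0:k}^{\lambda-\mathrm{LS}}(\pi_{\hat{Q}_{k+1}}) + \frac{\operatorname{KL}(\hat{Q}_{k+1}, P) + \log(2/\delta)}{\lambda(k+1)m}.
\]
By the defining property of $\hat{Q}_{k+1}$ as the minimizer of $Q \mapsto \hat{R}_{0:k}^{\lambda-\mathrm{LS}}(\pi_Q) + \operatorname{KL}(Q, P)/(\lambda(k+1)m)$ over $\mathcal{C}(\Theta)$, the right-hand side is dominated by its value at $Q = Q_\star$, yielding a bound in terms of $\hat{R}_{0:k}^{\lambda-\mathrm{LS}}(\pi_{Q_\star})$ and $\operatorname{KL}(Q_\star, P)$ instead.

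Second, I would establish a reverse bound: with probability at least $1-\delta/2$, uniformly in $k$ and $Q$,
\[
\hat{R}_{0:k}^{\lambda-\mathrm{LS}}(\pi_Q) \leq R(\pi_Q) + \frac{\lambda}{k+1}\sum_{j=0}^k \mathcal{S}_j(\pi_Q) + \frac{\operatorname{KL}(Q, P) + \log(2/\delta)}{\lambda(k+1)m}.
\]
The argument follows the same online PAC-Bayes template as \Cref{th: extension-sakhi}: for each deterministic predictor $d_\theta$, I would show that $T_k(\theta) := \exp\!\bigl(\lambda \sum_{j=0}^k \sum_{i\in I_j}[h_{\mathrm{LS}}^{\lambda\text{-Lin}}(d_\theta(a_i|x_i), \pi_j(a_i|x_i), c_i) - R(d_\theta) - \lambda \mathcal{S}_j(d_\theta)]\bigr)$ is a nonnegative supermartingale adapted to $(\mathcal{F}_k)_k$ with initial value $1$. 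The per-sample MGF estimate $\mathbb{E}[\exp(\lambda Y_{j,i}^\theta) \mid \mathcal{F}_{j-1}] \leq \exp(\lambda R(d_\theta) + \lambda^2 \mathcal{S}_j(d_\theta))$, with $Y_{j,i}^\theta := h_{\mathrm{LS}}^{\lambda\text{-Lin}}(d_\theta(a_i|x_i), \pi_j(a_i|x_i), c_i)$, combines three ingredients: the inequality $e^x \leq 1 + x + x^2/2$ for $x \leq 0$ (legal since $Y_{j,i}^\theta \leq 0$); the conditional-mean bound $\mathbb{E}[Y_{j,i}^\theta \mid \mathcal{F}_{j-1}] \leq R(d_\theta) + (\lambda/2)\mathcal{S}_j(d_\theta)$ coming from $-\log(1-z) \leq z + z^2/2$ for $z \leq 0$ applied at $z = \lambda c_i/\pi_j(a_i|x_i)$; and the second-moment control $\mathbb{E}[(Y_{j,i}^\theta)^2 \mid \mathcal{F}_{j-1}] \leq \mathcal{S}_j(d_\theta)$, which exploits $|Y_{j,i}^\theta| \leq |d_\theta(a_i|x_i) c_i/\pi_j(a_i|x_i)|$ together with the indicator identity $d_\theta^2 = d_\theta$. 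Integrating $T_k$ against the prior $P$ produces a nonnegative supermartingale of initial value $\leq 1$; Ville's inequality followed by the Donsker--Varadhan variational formula then passes the estimate to an arbitrary posterior $Q$ with the announced $\operatorname{KL}(Q,P)$ penalty.

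Specializing this reverse bound at $Q = Q_\star$, chaining it with the outcome of the first step, and union-bounding the two events of probability at least $1-\delta/2$ yields the claimed excess-risk inequality, where the factor $2$ in front of $\operatorname{KL}(Q_\star,P) + \log(2/\delta)$ arises naturally from combining one $\operatorname{KL}(Q_\star,P)$ term per direction. The main obstacle is the reverse MGF estimate: since $h_{\mathrm{LS}}^{\lambda\text{-Lin}}$ is unbounded below and depends on the adaptive behavior $\pi_j$, Hoeffding-style arguments are unavailable, and one must combine the sign constraint $h \leq 0$ with the second-moment control by the pseudo-variance $\mathcal{S}_j$ to obtain a Bernstein-like exponential inequality whose associated supermartingale generates exactly the $\lambda \mathcal{S}_j$ correction appearing in the statement.
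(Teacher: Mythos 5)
Your proposal is correct and follows essentially the same architecture as the paper's proof: an application of \Cref{th: extension-sakhi} at confidence $\delta/2$ combined with the optimality of $\hat{Q}_{k+1}$, a reverse online PAC-Bayes inequality at $Q_\star$ established through a per-$\theta$ supermartingale mixed against $P$, Ville's inequality, the change-of-measure (Donsker--Varadhan) step, and a final union bound. The only difference is inside the supermartingale lemma: the paper computes $\mathbb{E}_{k-1}\bigl[\tfrac{1}{1-\lambda d_\theta c/\pi_k}\bigr]$ exactly and uses $x\le e^{x-1}$, which yields the slightly sharper correction $\mathcal{S}_{\lambda,j}\le\mathcal{S}_j$ before relaxing, whereas your Bernstein-style bound ($e^x\le 1+x+x^2/2$ for $x\le 0$ together with $-\log(1-z)\le z+z^2/2$ for $z\le 0$ and the second-moment control via $d_\theta^2=d_\theta$) lands directly on $\mathcal{S}_j$ --- both are valid and give the stated result.
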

\Cref{prop: slowconvergence} is proven in \Cref{sec: extensions-section-3}. To obtain explicit convergence rates, we make the assumption that our policy class covers the optimal actions, i.e.:
\begin{assumption}[Coverage of the optimal actions]\label{assumption} For any policy $\pi$ in our policy class $\mathcal{C}(\Theta)$, we assume that: $\inf_x\pi(a^\star(x)|x) \ge C^\star$ for some constant $C^\star > 0$.
\end{assumption}
\Cref{assumption} only requires coverage of the optimal actions, which is strictly weaker than the uniform coverage condition assumed in prior work \citep{scrm}. With this assumption, we ensure that $S_j(\pi_\star)$ is finite and uniformly upper bounded by $1/C^\star$ for all $j$. Taking $\lambda= \frac{1}{\sqrt{(k+1)m}}$ yields an overall convergence rate of $\mathcal{O}\LP \frac{1}{\sqrt{(k+1)m}}\RP$, matching the rate of \Cref{prop: batch-pls} in the batch case (i.e $k=0, m=n_0$) and yielding an original rate of $\mathcal{O}(1/\sqrt{n})$ for an online setting ($k=n,m=1$). \Cref{prop: slowconvergence} also works for any intermediate regime between batch and online settings, showing the strength of the sequential approach. Looking at \Cref{prop: slowconvergence}, we observe that the upper bound depends on the average $\frac{1}{k+1}\sum_{j = 0}^k \mathcal{S}_j(\pi_\star)$. Intuitively, this average is expected to be smaller than $\mathcal{S}_0(\pi_\star)$, as intermediate policies $\pi_j$ should be progressively closer to $\pi_\star$. However, this algorithm still suffers from the same limitations to the static case here as the pseudo-variance terms $\mathcal{S}_{j}$ do not decrease to zero, even in the perfect situation where for all $j$, $\pi_j=\pi_{\star}$. 

In this part, we showed that online PAC-Bayes techniques successfully extend offline algorithms to the sequential setting, revealing the dependence on the data-collection process while enjoying the same convergence rates. A key question is then: is it possible to find a sequential algorithm with a faster convergence rate? 


\section{SEQUENTIAL LOGARITHMIC SMOOTHING WITH ACCELERATED CONVERGENCE}
\label{sec: accelerated-convergence}

\RestyleAlgo{ruled}
\begin{algorithm}
\caption{\textbf{Sequential Policy Learning via Adjusted 
 Logarithmic Smoothing}}\label{alg:s_analyzed}
\textbf{Input}: Policy $\pi_0$, a policy class $\mathcal{C}(\Theta)$, $\lambda\in (0,1)$. \\
Initialize $P$ such that $\pi_{P} = \pi_0$ and $\mathcal{S}_{-1} = \{ \}$. \\
\For{$k \geq0$:}{
Collect data $S_k$ of size $|I_k| = n_k$ with $\pi_{k}$. \\
Define global collection $S_{0:k} = S_{0:k - 1} \cup S_k$, and  $N_k := |S_{0:k}|$.\\
Solve the following optimization problem:
\begin{align*}
    \hat{Q}_{k+1} = \operatorname{argmin}_{Q\in\mathcal{C}} \left\{\hat{R}^{\lambda-\mathrm{adj}}_{0:k}(\pi_{\Q}) + \frac{\operatorname{KL}(Q,P)}{\lambda N_k} \right\}
\end{align*}
Set $\pi_{k+1} = \pi_{\hat{Q}_{k+1}}$.
}
\end{algorithm}

While \Cref{th: extension-sakhi} provides a natural sequential extension of \cite{sakhi2024logarithmic}, \Cref{alg:extension-sakhi} still suffers (\Cref{prop: slowconvergence}) from a slow convergence rate of $\mathcal{O} (1/\sqrt{km})$ due to the nature of the performance objective, and consequently the pseudo-variance term. Is it then possible to avoid such a term? The answer is yes, by adjusting $h^{\lambda - \mathrm{Lin}}$ introduced in \Cref{sec: warmup-pls}: we define $h^{\text{adj}}_\lambda(p,q,c):= -\frac{p}{\lambda} \log\left(1- \frac{\lambda c}{q(1+\lambda c)}\right)$ and denote it as the \emph{adjusted linear LS regularizer}.
We then define the adjusted risk for any sequential algorithm $\texttt{Alg}$ and $\lambda \in (0, 1), k\geq 0$:
\begin{align*}
\hat{R}^{\lambda-\mathrm{adj}}_{0:k}(\pi)
&:= R_{0,k}^{h^{\text{adj}}_\lambda}(\pi) \\
&= - \frac{1}{N_k}\sum_{j = 0}^k \sum_{i \in I_j}
\frac{\pi(a_i|x_i)}{\lambda} \\
&\quad \cdot \log\left(
1 - \frac{\lambda c_i}{\pi_{j}(a_i|x_i) (1 + \lambda c_i)}
\right)\,,
\end{align*}
where again, $\pi_{k+1}$ is the output of $\texttt{Alg}$ at time $k$. 
Using similar tools to those in \Cref{sec: first-adaptive-alg}, we are able to derive the following theoretical guarantee for our novel adjusted regularizer $h^{\text{adj}}_\lambda$.

\begin{theorem}
    \label{th: main-upperbound}
    For any sequential algorithm $\texttt{Alg}$, any sequence $(n_k)_{k\geq 0}$ and data independent prior $P$. For any $\boldsymbol{\lambda \in(0, 1)}, \delta \in (0,1]$, with probability at least $1-\delta$ over $S$, we have, for all $k\geq 0$, denoting by $\pi_{k+1}$ the output of $\texttt{Alg}$ at time $k$, for all $ Q\in\mathcal{P}(\Theta)$: 
\begin{multline*}
     R\LP \pi_{Q} \RP - \sum_{j = 0}^k \frac{n_j}{N_k} R\LP \pi_j \RP\\ \leq \hat{R}^{\lambda-\mathrm{adj}}_{0:k}(\pi_Q) + \hat{C}_{0,k}(\lambda) + \frac{ \operatorname{KL}(Q,P) + \log \left( \frac{1}{\delta}  \right)}{\lambda N_k}, 
\end{multline*}
with $\hat{C}_{0,k}(\lambda) = \frac{1}{N_k}\sum_{j = 0}^k\sum_{i \in I_j} \frac{1}{\lambda} \log(\frac{1}{1 + \lambda c_i})$.
\end{theorem}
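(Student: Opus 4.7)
The plan is to prove \Cref{th: main-upperbound} with the same online PAC-Bayes recipe used for \Cref{th: extension-sakhi}: for each hypothesis $\theta \in \Theta$ I build a non-negative supermartingale $(M_k(\theta))_k$ adapted to $(\mathcal{F}_k)_k$, integrate it against the data-free prior $P$ (Tonelli), apply Ville's inequality \cite{doob1939jean}, and then transport the resulting deviation bound from $P$ to any posterior $Q$ via the Donsker--Varadhan change-of-measure inequality. The whole point of the adjustment $h^{\mathrm{adj}}_\lambda$, together with the compensator $\hat{C}_{0,k}(\lambda)$, is that it turns the per-sample exponential increment into an exact mean-one quantity, whereas in \Cref{th: extension-sakhi} the analogous step only gave a tight bound up to the pseudo-variance remainder.

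Concretely, I define for each $i \in I_j$ the increment
\begin{align*}
T_i^{(j)}(\theta) \defeq \lambda\bigl(R(d_\theta) - R(\pi_j)\bigr) + \mathds{1}\{a_i = f_\theta(x_i)\}\log\Bigl(1-\tfrac{\lambda c_i}{\pi_j(a_i\mid x_i)(1+\lambda c_i)}\Bigr) + \log(1+\lambda c_i),
\end{align*}
and set $M_k(\theta) \defeq \exp\bigl(\sum_{j=0}^k \sum_{i \in I_j} T_i^{(j)}(\theta)\bigr)$, with the convention $M_{-1}\equiv 1$. The assumption $\lambda \in (0,1)$ together with $c \in [-1,0]$ keeps $1+\lambda c > 0$ so all logarithms are well defined. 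The crucial step is to show $\Ebb\bigl[\exp(T_i^{(j)}(\theta))\,\big|\,\mathcal{F}_{j-1}\bigr] \leq 1$. Since $\pi_j$ is $\mathcal{F}_{j-1}$-measurable and $(x_i,a_i,c_i)\sim P(\pi_j)$, I split the inner expectation over $a$ according to whether $a = f_\theta(x)$ and exploit the algebraic identity
\begin{align*}
\Bigl(1-\tfrac{\lambda c}{q(1+\lambda c)}\Bigr)(1+\lambda c) = (1+\lambda c) - \tfrac{\lambda c}{q}.
\end{align*}
After integrating over $c$ then over $a$, the inner expectation collapses to $1 + \lambda\bigl(R(\pi_j) - R(d_\theta)\bigr)$. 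Multiplying by the deterministic prefactor $e^{\lambda(R(d_\theta)-R(\pi_j))}$ and invoking the elementary inequality $1 + u \leq e^u$ with $u = \lambda(R(\pi_j) - R(d_\theta))$ delivers the bound. This is the main technical obstacle and also the payoff of the adjustment: the $(1+\lambda c)$ rescaling inside the log, compensated by $\hat{C}_{0,k}$, is exactly what makes the contributions from $a \neq f_\theta(x)$ integrate against $\pi_j(\cdot\mid x)$ with the right weight so that the remainder is linear in $R(\pi_j) - R(d_\theta)$.

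Conditional independence of the samples in $I_j$ given $\mathcal{F}_{j-1}$ then upgrades this one-sample bound to $\Ebb[M_k(\theta)\mid \mathcal{F}_{k-1}] \leq M_{k-1}(\theta)$, so $\Ebb_{\theta \sim P}[M_k(\theta)]$ is a non-negative supermartingale starting at $1$. Ville's inequality yields $\Pbb\bigl(\sup_k \Ebb_{\theta \sim P}[M_k(\theta)] \geq 1/\delta\bigr) \leq \delta$; on the complementary event, the Donsker--Varadhan inequality gives, for every $Q \in \mathcal{P}(\Theta)$ and every $k \geq 0$,
\begin{align*}
\Ebb_{\theta \sim Q}[\log M_k(\theta)] \leq \operatorname{KL}(Q,P) + \log(1/\delta).
\end{align*}
To finish, both $R(\cdot)$ and $\hat{R}^{\lambda-\mathrm{adj}}_{0:k}(\cdot)$ are linear in $Q$ since they only depend on $\pi_Q(a\mid x) = \Ebb_{\theta \sim Q}[\mathds{1}\{a = f_\theta(x)\}]$; collecting terms and using the definitions of $\hat{R}^{\lambda-\mathrm{adj}}_{0:k}$ and $\hat{C}_{0,k}$ identifies the left-hand side with $\lambda N_k\bigl[R(\pi_Q) - \sum_j \tfrac{n_j}{N_k} R(\pi_j) - \hat{R}^{\lambda-\mathrm{adj}}_{0:k}(\pi_Q) - \hat{C}_{0,k}(\lambda)\bigr]$, and dividing by $\lambda N_k$ produces the stated bound.
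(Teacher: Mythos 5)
Your proof is correct and follows the same architecture as the paper's argument: a nonnegative (super)martingale built from the adjusted per-sample increments, Ville's inequality, and the Donsker--Varadhan change of measure, together with the identity $\log\bigl(1-\lambda(\tfrac{d_\theta(a|x)}{\pi_j(a|x)}-1)c\bigr) = d_\theta(a|x)\log\bigl(1-\tfrac{\lambda c}{\pi_j(a|x)(1+\lambda c)}\bigr) + \log(1+\lambda c)$ (valid because $d_\theta\in\{0,1\}$) to identify the empirical terms with $\hat{R}^{\lambda-\mathrm{adj}}_{0:k}$ and $\hat{C}_{0,k}$. The one genuine difference is your choice of compensator in the exponent: you use the linear term $\lambda(R(d_\theta)-R(\pi_j))$ and close the one-step bound with $1+u\le e^u$, which makes $\Ebb_{\theta\sim P}[M_k(\theta)]$ a supermartingale rather than an exact martingale. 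The paper instead compensates with $-\log\bigl(1-\lambda(R(d_\theta)-R(\pi_j))\bigr)$, which gives an exact martingale but then requires two extra steps at the end: Jensen's inequality for the convex map $x\mapsto-\log(1-\lambda x)$ to pull the $Q$-expectation inside, followed by the relaxation $\psi_\lambda(x)\le x$ to linearize the resulting bound. Your version bypasses both, since $\Ebb_{\theta\sim Q}[R(d_\theta)]=R(\pi_Q)$ makes the drift term already linear in $Q$, and the two routes land on exactly the same final inequality. (Incidentally, your version of the identity carries the correct sign on the $\log(1+\lambda c)$ term; the paper's displayed identity has a sign typo there that does not propagate to its final computation.)
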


The proof of \Cref{th: main-upperbound} is established in \Cref{sec: proofs-sec-4}. \Cref{th: main-upperbound} relies on the same analytical tools as \Cref{th: extension-sakhi}, and therefore inherits the same general insights. Notably, the result remains valid for varying batch sizes $(n_k)_{k \geq 0}$, which makes it applicable to real-world scenarios where data may arrive in irregular quantities. This theorem serves as the theoretical foundation of the main contribution of this work: a sequential learning algorithm for off-policy contextual bandits, presented in \Cref{alg:s_analyzed}.

Note that at time $k$, we instantiate $\hat{R}^{\lambda-\mathrm{adj}}_{0:k}(\pi_Q)$ with $(\pi_{j})_{j=0\cdots k}$ the previous outputs of \Cref{alg:s_analyzed}. 
\\
Note that in \Cref{th: main-upperbound}, contrary to \Cref{th: extension-sakhi}, we do not control directly $R(\pi_Q)$ but the difference $ R\LP \pi_{Q} \RP - \sum_{j = 0}^k \frac{n_j}{N_k} R\LP \pi_j \RP$. This means that running \Cref{alg:s_analyzed} at time $k$ using $\cup_{j=0}^k S_j$ tends to ensure a better generalisation ability than what we obtained (on a weighted average) by running \Cref{alg:s_analyzed} on $\cup_{j=0}^{k-1} S_j$. This phenomenon gives an intuition about why \Cref{alg:s_analyzed} converges faster to the optimum. We prove this intuition in what follows. 

\textbf{Accelerated convergence rate for \Cref{alg:s_analyzed}.} To prove the fast convergence of \Cref{alg:s_analyzed} to $\pi_\star$, we state a first proposition.


\begin{proposition}
    \label{prop: suboptimality_with_variance}
    For any data independent prior $P$, any $\boldsymbol{\lambda \in(0, 1)}, \delta \in (0,1]$, with probability at least $1-\delta$, we have for all $k\geq 0$, denoting by $\pi_{k+1}$ the output of \Cref{alg:s_analyzed} at time $k$: 
\begin{multline*}
      0 \le R\LP \pi_{k +1} \RP - R\LP \pi_{\star} \RP \\ \leq  \frac{\lambda}{1 - \lambda} \sum_{j = 0}^k\frac{n_j}{N_k} L\LP \pi_\star, \pi_j \RP  + 2\frac{ \operatorname{KL}(Q^\star,P) + \log \left( \frac{2}{\delta}  \right)}{\lambda N_k},   
\end{multline*}
where for all $\pi$,\\
$L\LP \pi, \pi_j \RP = \mathbb{E}_{x\sim \nu}\left[ \mathbb{E}_{\pi_j}\left[c^2 \right] + \mathbb{E}_{\pi}\left[c^2\left(\frac{1}{\pi_j(a|x)}
 - 2\right) \right] \right]$.
\end{proposition}
\Cref{prop: suboptimality_with_variance} is proven in \Cref{sec: proofs-sec-4} and highlights the interest of $h^{\text{adj}}_\lambda$: we avoid the pseudo variance terms $\mathcal{S}_{j}(\pi_\star)$ of \Cref{prop: slowconvergence} to get $L\LP \pi_\star, \pi_j \RP$. Contrary to $S_j(\pi_\star)$, we have that, for any deterministic policy $\pi_d$, $L(\pi_d, \pi_j) \rightarrow 0$ if $\pi_j \rightarrow \pi_d$. Such desirable behavior leaves room for accelerated convergence rates. To investigate this route, we target the convergence towards the optimal policy $\pi_\star$ and study the behavior of $L$. 

To prove the acceleration of our method, we need to define additional quantities. We define the sub-optimality of action $a$, given a context $x$ by: $\Delta_{a,x} := c(a, x) - c(a_\star(x),x) \ge 0$. Then, for any $u \in [0, 1)$, we define $\Delta_u$ such that:
\[P_x\left(\min_{a \neq a^\star(x)} \Delta_{a,x} \ge \Delta_u \right) = 1-u.\]
For a fixed $u$, $\Delta_u$ quantifies how separable $a^\star(x)$ is from the rest of actions. In other words, the magnitude of $\Delta_u$ quantifies the difficulty of the problem. $\Delta_u = 0$ for all $u \in [0, 1)$ means that the optimal action $a^\star(x)$ is not distinguishable from the second best action,  almost surely. Oppositely, $\Delta_u = 1$ for a specific $u$ means that with probability $1 - u$, the costs are deterministic and the optimal action $a^\star(x)$ is the only one to have nonzero cost (of $-1$). We want to avoid the first case, as the existence of a strictly positive $\Delta_u > 0$ will be important next. Finally, we introduce the minimum coverage of the optimal actions under $\pi_k$:
\begin{align*}
    C^\star_k = \inf_x \pi_k(a^\star(x)|x)\,.
\end{align*}

\begin{lemma} \label{lemma:acc}
    Let $\pi_k \in \Pi$, assume there exists $u \in [0, 1)$ such that $\Delta_u > 0$ and $C^\star_k > 0$, then:
    \begin{align*}
        L(\pi_\star, \pi_k) \le \gamma_k (R(\pi_k) - R(\pi_\star)),
    \end{align*}
with $\gamma_k = 1 + \left(\frac{1}{4} + \frac{1}{C_k^\star}\right)\frac{1}{\Delta_u (1 - u)} = \mathcal{O}\left(\frac{1}{(1 - u)\Delta_u C^\star_k} \right).$
\end{lemma}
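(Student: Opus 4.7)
The plan is to exploit the deterministic structure of $\pi_\star$ to reduce $L(\pi_\star,\pi_k)$ to two explicit terms, and then upper-bound each of them by a multiple of $R(\pi_k) - R(\pi_\star)$ using the separation event encoded by $\Delta_u$. First I would plug $\pi_\star(a|x) = \mathds{1}\{a = a_\star(x)\}$ into the definition of $L$ and compute the expectation over $a \sim \pi_k(\cdot\mid x)$. Setting $q_x := \pi_k(a_\star(x)\mid x)$ and performing the elementary algebra (in the single surviving term the inner ``$-1$'' cancels the ``$+1$'' constant contribution evaluated at $a = a_\star(x)$), the definition collapses to
\[L(\pi_\star,\pi_k) = \Ebb_x\!\left[\frac{(1-q_x)^2}{q_x}\,\Ebb[c^2 \mid x, a_\star(x)] + \sum_{a \ne a_\star(x)} \pi_k(a\mid x)\,\Ebb[c^2 \mid x, a]\right].\]
Since $c \in [-1, 0]$, each conditional second moment is bounded by $|c(x,a)| \le 1$, which I will use throughout.

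I would then compare the two resulting terms with $R(\pi_k) - R(\pi_\star) = \Ebb_x\!\left[\sum_{a \ne a_\star(x)}\pi_k(a\mid x)\Delta_{a,x}\right]$ via the event $G := \{x : \min_{a \ne a_\star(x)} \Delta_{a,x} \ge \Delta_u\}$, which by definition of $\Delta_u$ has $P_x(G) \ge 1-u$. On $G$, the workhorse inequality
\[1 - q_x = \sum_{a \ne a_\star(x)} \pi_k(a\mid x) \le \frac{1}{\Delta_u}\sum_{a \ne a_\star(x)} \pi_k(a\mid x)\,\Delta_{a,x}\]
converts ``mass on sub-optimal actions'' into ``gap-weighted mass''. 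Applied to the second term of the decomposition this yields an $\mathcal{O}(1/\Delta_u)$ factor (the numerical constant $1/4$ coming from $q(1-q)\le 1/4$ when coupling the bound $|c(x,a)|\le 1$ with the gap estimate at the right place); applied to the variance-like term $(1-q_x)^2/q_x$ together with $q_x \ge C^\star_k$, it yields an $\mathcal{O}(1/(C^\star_k\Delta_u))$ factor.

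The final step is to pass from a bound living on $G$ to a bound against the unconditional sub-optimality gap. Rescaling by $P_x(G) \ge 1-u$ is what produces the $\frac{1}{1-u}$ factor, so the combined contribution becomes $\bigl(\frac{1}{4} + \frac{1}{C^\star_k}\bigr)/\bigl(\Delta_u(1-u)\bigr)$ times $R(\pi_k) - R(\pi_\star)$; the leading ``$1$'' in $\gamma_k$ absorbs the trivial residual $R(\pi_k) - R(\pi_\star) \le \gamma_k(R(\pi_k) - R(\pi_\star))$ and any leftover $G^c$ contribution. I expect the main obstacle to be precisely this last step: cleanly obtaining $\frac{1}{1-u}$ as a multiplicative factor (rather than an additive $u$-dependent remainder) requires either a careful conditional-expectation argument comparing $\Ebb_x[\mathds{1}_G\cdot(\,\cdot\,)]$ with $\Ebb_x[(\,\cdot\,)]$, or showing that the $G^c$ integrand in $L$ is itself dominated by the $G^c$ slice of $R(\pi_k) - R(\pi_\star)$ via $q_x \ge C^\star_k$. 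Extracting the sharp constant $1/4$ similarly demands a precise application of $q(1-q) \le 1/4$ at the right coupling point between $|c(x,a)|$ and $\Delta_{a,x}$, rather than a naive bound.
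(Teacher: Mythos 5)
Your reduction of $L(\pi_\star,\pi_k)$ to $\Ebb_x\big[\tfrac{(1-q_x)^2}{q_x}\,\Ebb[c^2\mid x,a_\star(x)]+\sum_{a\ne a_\star(x)}\pi_k(a\mid x)\,\Ebb[c^2\mid x,a]\big]$ is correct and matches the paper's starting point, as does the comparison target $R(\pi_k)-R(\pi_\star)=\Ebb_x\big[\sum_{a\ne a_\star(x)}\pi_k(a\mid x)\Delta_{a,x}\big]$. However, the argument has a genuine gap at exactly the step you flag as the main obstacle: converting a bound that lives on $G=\{x:\min_{a\ne a_\star(x)}\Delta_{a,x}\ge\Delta_u\}$ into an unconditional bound with a multiplicative $1/(1-u)$. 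The ``rescale by $P_x(G)\ge 1-u$'' route cannot work as stated, since for a nonnegative integrand $Z$ one only has $\Ebb[\mathds{1}_G Z]\le\Ebb[Z]$ and not $\Ebb[Z]\le\Ebb[\mathds{1}_G Z]/P_x(G)$; on $G^c$ the suboptimal mass $1-q_x$ is genuinely not controlled by the risk gap because the gaps there may vanish, so a naive split leaves an additive $O(u)$ remainder rather than a multiplicative factor. The paper never splits the expectation over $x$: it lower-bounds pointwise $\sum_{a\ne a_\star(x)}\pi_k(a\mid x)\Delta_{a,x}\ge\Delta_u\sum_{a\ne a_\star(x)}\pi_k(a\mid x)\mathds{1}[\Delta_{a,x}\ge\Delta_u]$ and then argues that the $\pi_k$-weighted conditional probability $P(\Delta_{a,x}\ge\Delta_u\mid a\ne a_\star(x))$, i.e.\ the ratio $\Ebb_x[\sum_{a\ne a_\star(x)}\pi_k(a\mid x)\mathds{1}[\Delta_{a,x}\ge\Delta_u]]\,/\,\Ebb_x[1-q_x]$, is at least $1-u$, which yields $\Ebb_x[1-q_x]\le\frac{1}{\Delta_u(1-u)}\,(R(\pi_k)-R(\pi_\star))$ in one stroke. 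You would need to supply this (or an equivalent) argument; neither of your two candidate strategies is carried out.

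Separately, your constant bookkeeping does not produce the stated $\gamma_k$. Bounding $\Ebb[c^2\mid x,a]\le 1$ in the second term and $\tfrac{(1-q_x)^2}{q_x}\le\tfrac{1-q_x}{C^\star_k}$ in the first gives $L(\pi_\star,\pi_k)\le\big(1+\tfrac{1}{C^\star_k}\big)\Ebb_x[1-q_x]$ and hence $\gamma_k=\big(1+\tfrac{1}{C^\star_k}\big)\tfrac{1}{\Delta_u(1-u)}$, which is a different constant from $1+\big(\tfrac14+\tfrac{1}{C^\star_k}\big)\tfrac{1}{\Delta_u(1-u)}$. The $1/4$ does not come from $q(1-q)\le 1/4$: the paper first recenters the suboptimal-action term as $\sum_{a\ne a_\star(x)}\pi_k(a\mid x)\big(\Ebb[c^2\mid x,a]-\Ebb[c^2\mid x,a_\star(x)]\big)$, then decomposes each conditional second moment into variance plus squared mean; the variance difference is bounded by $1/4$ (the costs take values in an interval of length one), while the squared-mean difference is bounded by $c(a,x)^2-c(a_\star(x),x)^2\le\Delta_{a,x}^2\le\Delta_{a,x}$, and it is this last bound --- not an absorbed residual --- that produces the additive leading ``$1$'' in $\gamma_k$. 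Without this recentering your route proves a different inequality rather than the lemma as stated.
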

The proof is given in \Cref{sec: proofs-sec-4}. The condition $\Delta_u > 0$ is analogous to margin assumptions commonly used in multi-arm contextual bandits \citep{fast_rate_cb, adaptive}, while $C^\star_k > 0$ imposes a coverage requirement ensuring that $\pi_k$ sufficiently explores optimal actions. Importantly, this condition is strictly weaker than the uniform coverage assumption imposed in \cite{adaptive, scrm}. Instead of controlling $L$, a similar inequality on the variance of importance weights was used as an assumption to obtain fast rates in \cite{scrm}. Here, we prove such result and characterize when it is possible to obtain. For instance, this inequality cannot hold when the costs are constant. Finally, while we formulate the conditions in terms of $\Delta_u$ and $C^\star_k$ for interpretability, a sharper, but less transparent, formulation is given in \Cref{app:convergence_lemma}.

\Cref{lemma:acc} shows that the expansion of $L$ is controlled by a difference of risks. Smaller $\gamma_k$ sharpens this inequality, which is the case for problems with distinguishable optimal actions and near optimal logging policies $\pi_k$. Plugging this inequality in \Cref{prop: suboptimality_with_variance} then yields a recursive control on the sequence $\left(R(\pi_{k})\right)_{k\geq 0}$ (see \Cref{cor: recursion}). Such a recursive relation can then be transformed into an explicit accelerated convergence rate for the output of \Cref{alg:s_analyzed}, provided the following assumption holds: for all $k \ge 0$, the algorithm outputs policies $\pi_k$ such that
\begin{align*}
    \forall k \ge 0\,, \quad C^\star_k \ge C^\star >0.
\end{align*}
Intuitively, this is plausible since the initial policy $\pi_0$ is expected to cover optimal actions, and subsequent policies $\pi_k$ for $k \ge 1$ should become progressively closer to $\pi_\star$. Formally, this condition is guaranteed by \Cref{assumption}, which requires the policy class to sufficiently cover the optimal actions. Combined with the separability condition ($\Delta_u > 0$ for some $u \in [0,1)$), we obtain the uniform upper bound on $\gamma_k$:
\begin{align*}
    \forall k, \quad \gamma_k \le \gamma = \frac{3}{\Delta_u(1 - u)C^\star} < +\infty\,.
\end{align*}
With this condition, we state our principal result below.



\begin{theorem}
    \label{th: suboptimality-conv-rate-unif-batch}
    Let $n_j=m>0$ for all $j$. Then, for any $\alpha \in[0,1)$, denote by $\pi_{k+1}$ the output of \Cref{alg:s_analyzed} at time $k$ with parameter $\lambda_m :=\frac{1-\alpha}{8\gamma\sqrt{m}}$. Then with probability at least $1-\delta$ over $S$,  for all $k\geq 0 $:
    \[ R(\pi_{k+1}) - R(\pi^\star) \leq \frac{C_{\alpha}}{\sqrt{m}(k + 1)^{\alpha}}, \]
    where $C_\alpha :=  64 \gamma \frac{\operatorname{KL}(\Q^\star,P) + \log(2/\delta)}{1 - \alpha}$.
\end{theorem}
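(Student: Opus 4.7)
The plan is to chain \Cref{prop: suboptimality_with_variance} and \Cref{lemma:acc} into a recursive inequality on the excess risks $r_k := R(\pi_k) - R(\pi_\star)$, and close it by induction with the precise choice $\lambda_m = (1-\alpha)/(8\gamma\sqrt{m})$. First, on the probability-$1-\delta$ event of \Cref{prop: suboptimality_with_variance}, which is valid simultaneously for all $k\ge 0$, setting $n_j = m$ and $N_k = (k+1)m$, and bounding $L(\pi_\star, \pi_j) \le \gamma_j r_j \le \gamma r_j$ via \Cref{lemma:acc} together with the uniform assumption $\gamma_j \le \gamma$, I obtain the master recursion
\begin{align*}
    r_{k+1} \le \frac{\lambda \gamma}{1-\lambda}\cdot \frac{1}{k+1}\sum_{j=0}^{k} r_j + \frac{2K}{\lambda (k+1) m}, \qquad K := \operatorname{KL}(Q^\star, P) + \log(2/\delta).
\end{align*}

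Next I substitute $\lambda = \lambda_m$. Since $\gamma \ge 1$ (as $\gamma_k = 1 + \text{positive terms}$ in \Cref{lemma:acc}) and $m \ge 1$, I have $\lambda_m \le 1/8$, hence $1/(1-\lambda_m) \le 8/7 \le 2$, which gives $\lambda_m \gamma/(1-\lambda_m) \le 2 \lambda_m \gamma = (1-\alpha)/(4\sqrt{m})$. The KL term simplifies to $2K/(\lambda_m (k+1) m) = 16 \gamma K/((1-\alpha) \sqrt{m}(k+1)) = C_\alpha/(4\sqrt{m}(k+1))$. Isolating $r_0 \le 1$ (costs lie in $[-1,0]$) from the rest of the sum yields the clean recursion
\begin{align*}
    r_{k+1} \le \frac{1-\alpha}{4\sqrt{m}(k+1)}\left(1 + \sum_{j=1}^{k} r_j\right) + \frac{C_\alpha}{4\sqrt{m}(k+1)}.
\end{align*}

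Finally, I induct on $k \ge 0$ with hypothesis $r_{j+1} \le C_\alpha/(\sqrt{m}(j+1)^\alpha)$. The base case $k=0$ reduces to $1 + C_\alpha \le 4 C_\alpha$, which is clear since $C_\alpha \ge 64 \log 2$. For the inductive step, substituting the hypothesis and invoking the sharp integral comparison
\begin{align*}
    \sum_{j=1}^{k} j^{-\alpha} \le \frac{k^{1-\alpha}}{1-\alpha}, \qquad \alpha \in [0,1),
\end{align*}
then multiplying through by $\sqrt{m}(k+1)^\alpha/C_\alpha$ and using $(k/(k+1))^{1-\alpha} \le 1$ and $(k+1)^{\alpha-1} \le 1$, reduces the target to the scalar inequality
\begin{align*}
    \frac{1-\alpha}{4 C_\alpha} + \frac{1}{4\sqrt{m}} + \frac{1}{4} \le 1,
\end{align*}
which clearly holds for $m \ge 1$. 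The delicate point is the constant tuning in $\lambda_m$: the factor $8$ is chosen so that $\lambda_m \gamma/(1-\lambda_m)$ collapses to $(1-\alpha)/(4\sqrt{m})$, where the $(1-\alpha)$ factor precisely cancels the $1/(1-\alpha)$ coming out of the integral bound, leaving just enough slack for the $C_\alpha/4$ KL residual to be absorbed into $C_\alpha/(k+1)^\alpha$ at the next induction step; any looser choice of the leading constant would break this cancellation and cost the accelerated rate.
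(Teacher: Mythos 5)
Your proof is correct and follows essentially the same route as the paper: you combine \Cref{prop: suboptimality_with_variance} with \Cref{lemma:acc} to obtain the recursion of \Cref{cor: recursion}, then close it by strong induction with the same choice of $\lambda_m$ and $C_\alpha$ and the same integral bound $\sum_{j=1}^k j^{-\alpha} \le k^{1-\alpha}/(1-\alpha)$. The only differences are cosmetic (you normalize by $\sqrt{m}(k+1)^{\alpha}/C_\alpha$ and verify a single scalar inequality, where the paper bounds the three terms separately), so there is nothing substantive to flag.
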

\Cref{th: suboptimality-conv-rate-unif-batch} is proven in \Cref{sec: proofs-sec-4} and provides an accelerated rate of $\mathcal{O}\LP \frac{1}{\sqrt{m}k^\alpha}\RP$ compared to the $\mathcal{O}\LP \frac{1}{\sqrt{mk}}\RP$ of \Cref{prop: slowconvergence} as long as $\alpha > 1/2$. Again, we recover the same rate as \cite{sakhi2024logarithmic} in the batch setting (i.e. policy after one batch update: $k=1, m=n$) and improve the rate in the online setting (i.e. updating the behavior policy after each interaction: $k=n,m=1$) to $\mathcal{O}(1/n^\alpha)$. \Cref{th: suboptimality-conv-rate-unif-batch} also works for any intermediate regime between batch and online settings, and suggests that considering several updates is beneficial to learn efficiently. Indeed, as long as we choose a $\lambda$ that ensures $\alpha >1/2$, the number of updates $k$ will have a stronger influence than the size of the batch $m$.  It should be noted, however, that the constant $C_\alpha$ deteriorates as $\alpha$ approaches $1$. Larger values of $\lambda_m$ yield smaller constants but slower convergence rates, whereas smaller $\lambda_m$ yield faster rates at the cost of heavier constants. In practice, this creates a trade-off between accelerating convergence and keeping $C_\alpha$ at reasonable levels.

Finally, to capture a broad range of real-world scenarios where the size of the batches received $n_k$ may vary over time, we extend these results in \Cref{cor: suboptimality-conv-rate}, which provides convergence guarantees for \Cref{alg:s_analyzed} in this more general setting.

\begin{corollary}
    \label{cor: suboptimality-conv-rate}
    Let $\beta_1 := \sup_{k\geq0} \frac{k+1}{N_k} \leq 1$. Assume there exists $\beta_2 \in \mathbb{R}^+$ such that for all $k$, $ n_k \leq \beta_2$. For any $\alpha \in[0,1)$, denote by $\pi_{k+1}$ the output of \Cref{alg:s_analyzed} at time $k$ with parameter $\lambda =\frac{1}{1 + 2^{2+\alpha}\gamma \beta_1\beta_2 B(\alpha)}$. Then with probability at least $1-\delta$ over $S$, for all $k\geq 0 $:
    \[ R(\pi_{k+1}) - R(\pi^\star) \leq \frac{C_{\alpha}}{(k+1)^{\alpha}}, \]
    where $C_\alpha \leq \max \left( 1, \frac{\operatorname{KL}(\Q^\star,P) + \log(2/\delta)}{\lambda^2 \gamma \beta_2 B(\alpha)} \right)$, $B(\alpha)= \frac{1}{1-\alpha}$.
\end{corollary}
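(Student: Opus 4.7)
The plan is to combine \Cref{prop: suboptimality_with_variance} (valid uniformly in $k$ on a single probability-$(1-\delta)$ event) with \Cref{lemma:acc} and the standing assumption $\gamma_k\le\gamma$ to get a linear recursion for $u_k := R(\pi_k)-R(\pi^\star)$, then establish $u_k\le C_\alpha/(k+1)^\alpha$ by strong induction on $k$. Setting $K:=\operatorname{KL}(Q^\star\|P)+\log(2/\delta)$ and invoking \Cref{prop: suboptimality_with_variance}, then bounding each $L(\pi^\star,\pi_j)\le \gamma_j u_j\le \gamma u_j$ via \Cref{lemma:acc}, yields
\begin{align*}
    u_{k+1}\;\le\;\frac{\lambda\gamma}{1-\lambda}\sum_{j=0}^k \frac{n_j}{N_k}\,u_j\;+\;\frac{2K}{\lambda N_k}.
\end{align*}

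For the induction, the base case $k=0$ uses $u_0\in[0,1]$ (costs in $[-1,0]$), so $u_0\le 1\le C_\alpha$ by the $\max$ in the definition. For the inductive step I would plug the hypothesis $u_j\le C_\alpha/(j+1)^\alpha$ into the sum and apply $n_j\le\beta_2$ together with the standard integral bound $\sum_{j=1}^{k+1} j^{-\alpha}\le(k+1)^{1-\alpha}/(1-\alpha)=B(\alpha)(k+1)^{1-\alpha}$ (which follows from $j^{-\alpha}\le\int_{j-1}^j x^{-\alpha}\,\mathrm{d}x$), followed by $(k+1)/N_k\le\beta_1$, to obtain $\sum_j (n_j/N_k)\,u_j\le \beta_1\beta_2 B(\alpha)\,C_\alpha/(k+1)^\alpha$. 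The choice $\lambda^{-1}=1+2^{2+\alpha}\gamma\beta_1\beta_2 B(\alpha)$ is tuned precisely so that $\lambda/(1-\lambda)=1/[2^{2+\alpha}\gamma\beta_1\beta_2 B(\alpha)]$; combined with $(k+1)^{-\alpha}\le 2^\alpha(k+2)^{-\alpha}$, this collapses the first term on the right to $C_\alpha/[4(k+2)^\alpha]$.

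It then remains to absorb the residual $2K/(\lambda N_k)$ into the slack $3C_\alpha/[4(k+2)^\alpha]$. Using $1/N_k\le\beta_1/(k+1)$ and $(k+2)^\alpha/(k+1)\le 2^\alpha$ (valid for $\alpha<1$, $k\ge 0$), this reduces to the purely algebraic requirement $C_\alpha\ge 2^{3+\alpha}\beta_1 K/(3\lambda)$, which the candidate $C_\alpha=K/[\lambda^2\gamma\beta_2 B(\alpha)]$ satisfies because $\lambda^{-1}\ge 2^{2+\alpha}\gamma\beta_1\beta_2 B(\alpha)=\tfrac{3}{2}\cdot\tfrac{2^{3+\alpha}}{3}\gamma\beta_1\beta_2 B(\alpha)$. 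Together with the base case, the claimed value $C_\alpha=\max(1,K/[\lambda^2\gamma\beta_2 B(\alpha)])$ closes the induction, and since $\pi_{k+1}$ at time $k$ is the policy we indexed as $\pi_{k+1}$ above, the target bound holds for all $k\ge 0$ on the same high-probability event. The main obstacle is not conceptual but the constant bookkeeping: the specific form of $\lambda^{-1}$ has to be picked so that the geometric rescaling cost $2^\alpha$ in passing from $(k+1)^{-\alpha}$ to $(k+2)^{-\alpha}$ is absorbed \emph{and} a uniform constant fraction is left over to dominate the $O(1/(\lambda N_k))$ residual, which is precisely what the $2^{2+\alpha}$ factor in the definition of $\lambda$ encodes.
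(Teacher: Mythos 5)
Your proposal is correct and follows essentially the same route as the paper: the same recursion from \Cref{prop: suboptimality_with_variance} combined with \Cref{lemma:acc} (i.e.\ \Cref{cor: recursion}), the same strong induction with the integral bound on $\sum_j (j+1)^{-\alpha}$, the $(k+1)^{-\alpha}\le 2^\alpha(k+2)^{-\alpha}$ rescaling, and the same tuning of $\lambda$. The only differences are in constant bookkeeping (your $1/4$ vs.\ $3/4$ budget split in place of the paper's inequality $K/\lambda \le C_\alpha\tfrac{\lambda}{1-\lambda}\gamma\beta_2 B(\alpha)$), and both close the induction correctly.
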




Observe that the convergence rate in \Cref{cor: suboptimality-conv-rate} is weaker than that of \Cref{th: suboptimality-conv-rate-unif-batch}, since it no longer depends explicitly on the batch sizes $n_j$. This relaxation is necessary to accommodate adversarial scenarios where some $n_j$ may be as small as 1. For example, let $n \geq 2$ and consider a sequence where $n_{2j-1} = m$ and $n_{2j} = 1$ for all $j \geq 1$. Then, for all $k \geq 0$,  we have $N_{2k-1} = k(m+1) -1$ and $N_{2k} = k(m+1)$. Thus, for all $k$, it holds that $\frac{k+1}{N_k} \leq \frac{2}{m}$, and the assumptions of \Cref{cor: suboptimality-conv-rate} are satisfied. Importantly, \Cref{cor: suboptimality-conv-rate} matches the optimal convergence rate of \Cref{th: suboptimality-conv-rate-unif-batch} in the fully online regime (i.e., $k=n$, $n_j=1$). Having established that \Cref{alg:s_analyzed} enjoys provably accelerated convergence rates, we turn to empirical validation to assess whether this theoretical advantage is reflected in practice.

\begin{figure*}[!ht]
    \centering
    \includegraphics[width=\textwidth]{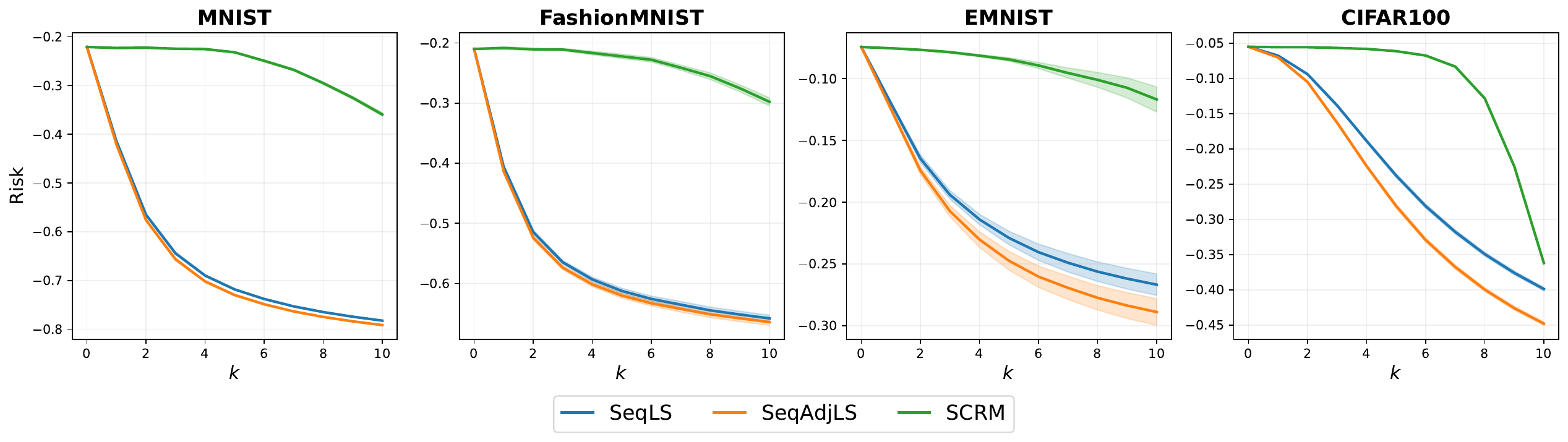}
    \caption{$R(\pi_k)$ at each intermediate step, for $k = 10$.}
    \label{fig:your_label}
\end{figure*}

\section{EXPERIMENTS}
\label{sec: expes}
\textbf{General Setup.} Our experimental framework follows the standard multiclass-to-bandit conversion widely used in prior work \citep{dudik14doubly, swaminathan2015batch}. Each multiclass dataset consists of feature-label pairs, which we transform into contextual bandit problems by treating features as contexts and labels as actions. The reward $r$ for taking action $a$ for context $x$ is modeled as Bernoulli with probability $p_x = \epsilon + \mathds{1}\left[a = a^\star(x)\right](1 - 2\epsilon)$, where $a^\star(x)$ is the true label of features $x$, and $\epsilon$ is a noise parameter, that we set to $\epsilon = 0.2$ in all our experiments. This setup ensures an average reward of $1 - \epsilon$ for the optimal action $a^\star(x)$ and $\epsilon$ for all others. In our experiments, we use parametrized Linear Gaussian Policies \citep{sakhi2023pac, aouali23a} as our class of policies. These policies interact with the contextual bandit environment to construct a logged bandit feedback dataset in the form $\{x_i, a_i, c_i\}_{i \in [n]}$, where $c_i = - r_i$ is the associated cost. \Cref{app:experiments} details the experimental design and obtained results.

\textbf{Effect of Multiple Deployments.} Unlike prior batch approaches, our sequential algorithms support multiple updates of the behavior policy $\pi_0$, while seamlessly reducing to the batch case when $k = 1$. Using the \texttt{MNIST} \citep{mnist} and \texttt{CIFAR100} \citep{cifar100} datasets, we evaluate how the number of intermediate policy updates $k$ affects the performance of the final learned policy $\pi$. Starting from a uniform policy $\pi_0$, we run both \texttt{SeqLS} (\texttt{Alg}~\ref{alg:extension-sakhi}) and the improved \texttt{SeqAdjLS} (\texttt{Alg}~\ref{alg:s_analyzed}). For each $k \in \{1, 5, 10, 100\}$, we fix the total number of interactions to $N_k = N$, resulting in $m = N/k$ interactions per deployment. The case $k = 1$ corresponds to the static setting studied in \cite{sakhi2024logarithmic}, where all data is collected using $\pi_0$. For $k > 1$, each intermediate policy $\pi_j$ interacts with $m$ examples, collectively producing $N$ sequentially collected samples. We set $\lambda = 1/\sqrt{m}$, following our theoretical guidance. 

\Cref{tab:k_effect} summarizes the results: increasing $k$ consistently improves final policy performance, with diminishing returns as $\pi$ nears optimality (e.g., \texttt{MNIST} at $k = 10$ and $100$). The gains are especially notable on \texttt{CIFAR100}, where the larger action space ($|\mathcal{A}| = 100$) makes uniform policies particularly suboptimal, thus highlighting the importance of the sequential updates. Finally, our results demonstrate the empirical advantage of \texttt{SeqAdjLS}, validating the need for our refined estimator. Across all settings, and especially the more challenging ones, \texttt{SeqAdjLS} consistently outperforms \texttt{SeqLS}, providing empirical evidence that the adjustment is essential for achieving accelerated convergence.

\begin{table}
    \centering
\caption{$R(\pi)$ while fixing $N_k = N$, varying $k$ in \texttt{MNIST} and \texttt{CIFAR100} (lower is better).}
    \label{tab:k_effect}
    \begin{tabular}{c|c|c||ccc}
        \toprule
         Dataset & $k$ & \texttt{SeqLS} (\ref{alg:extension-sakhi}) & \texttt{SeqAdjLS} (\ref{alg:s_analyzed}) \\
        \midrule
        \multirow{4}{*}{\texttt{MNIST}}&  
         $1$ & $-0.674$ & $-0.678$  \\
        
         &  $5$ & $-0.758$ & $-0.766$   \\
        
         &  $10$ & $-0.782$ & $\mathbf{-0.791}$   \\
         
         &  $100$ & $-0.781$ & $\mathbf{-0.789}$    \\

         \cmidrule{1-4}

         \multirow{4}{*}{\texttt{CIFAR100}} &  $1$ & $-0.197$ & $-0.207$  \\

         & $5$ & $-0.344$ & $-0.380$ \\
        
         & $10$ & $-0.406$ & $-0.455$  \\

         & $100$ & $-0.508$ & $\mathbf{-0.558}$  \\

        \bottomrule
    \end{tabular}
    \vspace{-0.1cm}
\end{table}

\textbf{Benchmarks.} Having demonstrated the benefits of intermediate updates and the effectiveness of our proposed \texttt{SeqAdjLS} algorithm, we now compare our methods against the recently introduced \texttt{SCRM} approach \cite{scrm}, which extends the \texttt{CRM} principle of \cite{swaminathan2015batch} to the sequential setting. We evaluate all methods on \texttt{MNIST}, \texttt{FashionMNIST} \citep{xiao2017fashion}, \texttt{EMNIST} \citep{cohen2017emnist}, and \texttt{CIFAR100} with varying action space sizes, fixing the total number of interactions to $N$, setting $k = 10$, and using theoretically recommended values for $\lambda$ to ensure a fair comparison.

\texttt{SeqLS} and \texttt{SeqAdjLS} divide the interaction budget uniformly across rounds ($n_j = N/k$), reusing all previously collected data to update policies. In contrast, \texttt{SCRM} employs exponentially increasing interaction sizes ($n_j = n_0 \cdot 2^j$ with $n_0 = \lceil N/2^k \rceil$) and updates policies using only the most recent batch, rather than the cumulative data ($n_j$ instead of $N_j$). Each method is evaluated across six random seeds, and the results are summarized in \Cref{fig:your_label}.

We observe that \texttt{SCRM} tends to be overly conservative, particularly in early rounds when the number of interactions is small. Although it improves upon the initial behavior policy $\pi_0$ over time, it lags significantly behind both of our sequential methods. While it narrows the gap with \texttt{SeqLS} on \texttt{CIFAR100}, it is consistently outperformed, especially by \texttt{SeqAdjLS}, which achieves superior results across all datasets. These findings further confirm the empirical advantages of our approach. Additional experimental details and results are provided in \Cref{app:experiments}.

\section{CONCLUSION}
\label{sec: ccl}

In this work, we extended the Logarithmic Smoothing learning principle to the sequential setting. We proposed two novel sequential algorithms for off-policy learning and established convergence guarantees to their respective optima. One of these algorithms enjoys accelerated convergence under mild assumptions and demonstrated improved empirical performance. These theoretical and practical benefits highlight the potential of combining online PAC-Bayes techniques with sequential contextual bandits. Our theoretical analysis requires our policy class to cover the optimal actions, thus weakening standard uniform coverage assumptions. An interesting open question is whether the structure of the pessimistic update can be exploited to relax this requirement. Moreover, in \Cref{sec: non-iid-contexts}, we showed that online PAC-Bayes techniques yield learning algorithms that relax the \emph{i.i.d.} assumption on contexts, thereby addressing real-world scenarios with distribution shifts. A natural and promising direction is to investigate whether the (accelerated) convergence guarantees established here extend to this more general, non-stationary setting. 
\paragraph{Acknowledgements}
M.H. is partially supported by the European Research Council Starting Grant DYNASTY – 101039676. Views and opinions expressed are however those of the authors only and do
not necessarily reflect those of the European Union or
the European Research Council Executive Agency. Neither the European Union nor the granting authority
can be held responsible for them.
\bibliography{biblio}
\bibliographystyle{abbrvnat}

\newpage

\section*{Checklist}

\begin{enumerate}

  \item For all models and algorithms presented, check if you include:
  \begin{enumerate}
    \item A clear description of the mathematical setting, assumptions, algorithm, and/or model. [Yes] We provide all proofs and algorithm statement either in the main text or appendices.
    \item An analysis of the properties and complexity (time, space, sample size) of any algorithm. [Yes] We study convergence guarantees of our algorithms as well as providing PAC-Bayes bound which justifies their existence.
    \item (Optional) Anonymized source code, with specification of all dependencies, including external libraries. [Yes]
  \end{enumerate}

  \item For any theoretical claim, check if you include:
  \begin{enumerate}
    \item Statements of the full set of assumptions of all theoretical results. [Yes] All results are stated with respect to our detailed theoretical setup of \Cref{sec: framework}.
    \item Complete proofs of all theoretical results. [Yes] All proofs are detailed in the appendix due to space constraints.
    \item Clear explanations of any assumptions. [Yes] We explain and contextualize our various assumptions in \Cref{sec: framework,sec: warmup-pls,sec: accelerated-convergence}.    
  \end{enumerate}

  \item For all figures and tables that present empirical results, check if you include:
  \begin{enumerate}
    \item The code, data, and instructions needed to reproduce the main experimental results (either in the supplemental material or as a URL). [Yes] Code lies in the supplementary material.
    \item All the training details (e.g., data splits, hyperparameters, how they were chosen). [Yes] All practical details are gathered in \Cref{sec: expes,app:experiments}.
    \item A clear definition of the specific measure or statistics and error bars (e.g., with respect to the random seed after running experiments multiple times). [Yes] Such details are retrievable in \Cref{sec: expes,app:experiments}.
    \item A description of the computing infrastructure used. (e.g., type of GPUs, internal cluster, or cloud provider). [Yes] All experiments were run on a single machine. Such details are retrievable in \Cref{app:experiments}. 
  \end{enumerate}

  \item If you are using existing assets (e.g., code, data, models) or curating/releasing new assets, check if you include:
  \begin{enumerate}
    \item Citations of the creator If your work uses existing assets. [Yes] We mention all the inspirations we exploited as well as references of the datasets we used in \Cref{sec: expes}.
    \item The license information of the assets, if applicable. [Not Applicable]
    \item New assets either in the supplemental material or as a URL, if applicable. [Not Applicable]
    \item Information about consent from data providers/curators. [Not Applicable] We only exploited standard datasets such as MNIST and CIFAR100, publicly available and not requiring any specific consent to be used.
    \item Discussion of sensible content if applicable, e.g., personally identifiable information or offensive content. [Not Applicable]
  \end{enumerate}

  \item If you used crowdsourcing or conducted research with human subjects, check if you include:
  \begin{enumerate}
    \item The full text of instructions given to participants and screenshots. [Not Applicable]
    \item Descriptions of potential participant risks, with links to Institutional Review Board (IRB) approvals if applicable. [Not Applicable]
    \item The estimated hourly wage paid to participants and the total amount spent on participant compensation. [Not Applicable]
  \end{enumerate}

\end{enumerate}

\clearpage
\appendix
\thispagestyle{empty}

\onecolumn
\aistatstitle{APPENDIX}

\section{EXTENDED RELATED WORK}

\paragraph{Off-Policy Learning.} Off-Policy Learning in the contextual bandit setting, or counterfactual risk minimization \citep{swaminathan2015batch} is a widely adopted framework for learning from past interactions. In most real world applications, large datasets summarizing past interactions are often available, allowing agents to improve their policies offline. In this setting, the goal is to find the optimal policy using previously collected interactions.  Two primary approaches exist for learning from this data. The direct method \citep{blob} focuses on modeling the reward and subsequently derives a straightforward policy. This approach is well-understood, and the quality of the learned policy depends entirely on our ability to accurately model the reward. The other approach directly learns a policy from the data by minimizing an importance weighting estimator \citep{dudik14doubly, bottou2013counterfactual} of the expected cost. The research community has predominantly focused on this second approach, as it presents significant theoretical challenges and offers greater generality by not relying on any reward modeling. At the core of directly learning a policy is the development of tailored off-policy estimators with improved concentration properties \citep{bottou2013counterfactual, aouali23a, metelli2021subgaussian, sakhi2024logarithmic}, combined with tight generalization bounds allowing efficient learning \citep{swaminathan2015batch, sakhi2023pac, sakhi2024logarithmic}. The used methods are motivated by the pessimism principle \citep{jin2021pessimism} that was proven to be optimal for learning in these scenarios. Particularly, analyzing this learning problem from the PAC-Bayesian learning theory lens \cite{london2019bayesian} proved efficient, improving the seminal work of \citep{swaminathan2015batch}. This direction was further explored by \cite{sakhi2023pac, aouali23a, gabbianelli2023importance, sakhi2024logarithmic}, reducing assumptions and improving the pessimistic principle, with latest developments \citep{sakhi2024logarithmic} resulting in state of the art pessimistic approaches. We further develop this research direction and generalize it to the sequential setting, where an online algorithm redeploys the learned policy to the environment and collect more interactions to train the next iteration.

\paragraph{Sequential Off-Policy Learning.} Sequential policy learning extends off-policy learning by allowing the decision-maker to deploy a learned policy, collect additional interactions, and then train the next policy on the entire set of previously gathered data. This approach is widely used in practice since decision systems are continuously updated. Nevertheless, its theoretical foundations remain limited and underexplored, and only a few works have addressed this question. 

A related direction is off-policy learning from adaptively collected data, studied in \cite{zhan2022policylearningadaptivelycollected} and \cite{adaptive}. These works focus on purely offline settings, where data is collected under multiple logging policies with decaying exploration. However, they remain one-shot batch approaches: their goal is to learn a single policy from the collected interactions, without analyzing the effects of redeployment. In contrast, we consider sequential updates and redeployments that occur after the arrival of batches of arbitrary sizes. Our analysis characterizes the conditions under which sequential deployment achieves fast convergence rates. This framework more accurately models real-world scenarios, provides theoretical insight into a widely used practical approach, and unifies the static (fully batch) and dynamic (fully online) regimes. Our proposed algorithms match the convergence rates of the batch setting and further accelerate learning as the behavior policy becomes refined.

Recently, \cite{scrm} extended \emph{Counterfactual Risk Minimization} \citep{swaminathan2015batch} to the sequential setting, allowing deployments of learned policies within the same framework as ours. Their algorithm achieves accelerated convergence under a Hölderian assumption, but this assumption is not always justified. Moreover, it requires bounded importance weights, which in discrete action spaces is equivalent to assuming uniform coverage. Their method further relies on unrealistic, exponentially growing batch sizes and only uses the most recent batch of data, discarding all past interactions. These limitations arise from the theoretical tools employed, making the algorithm impractical. By contrast, we develop a PAC-Bayesian approach that avoids these shortcomings. Our algorithms are the first to seamlessly interpolate between full batch and full online regimes, achieving the optimal convergence rates of both while requiring only coverage of the optimal actions.

\paragraph{PAC-Bayes learning.} PAC-Bayes learning is a recent branch of learning theory which emerged in the late 90s via the seminal work of \citep{shawe1997pac,mcallester1998some,mcallester1999pac,mcallester2003pac} and later pursued by \citep{catoni2003pac,catoni2007pac}. Modern surveys are available to describe the various advances in the field \citep{guedj2019primer,alquier2024user}. PAC-Bayes provide generalization bounds involving a complexity term (often a KL divergence), inspired here from the Bayesian learning paradigm of designing a \emph{posterior} knowledge of the learning problem based on both training data and a \emph{prior} knowledge of the considered situation. On the contrary, PAC-Bayes, while inspired from the Bayesian philosophy, relies historically on tools from information theory. This general approach benefits from additional flexibility as PAC-Bayes can be linked and applied to Bayesian learning (see \citealp{guedj2019primer}) but also blurs the notion of prior and posterior distributions, now independent of the fundamental Bayes formula. This flexibility allowed PAC-Bayes to reach various sub-fields of learning theory: optimization dynamics of learning algorithms \citep{mou2018generalization, andreeva2024topological}, reinforcement learning \citep{fard2010pac}, online learning \citep{haddouche2022online} and contrastive learning \citep{nozawa2019pacbayesian} to name a few.

\section{PROOFS OF SECTION \ref{sec: warmup-pls}}
\label{sec: extensions-section-3}
In this whole section, we assume that all datasets $S_k$ have the same cardinal $n_k=m$.
This section contains the proof of \Cref{th: extension-sakhi} and \Cref{prop: slowconvergence}. 

\subsection{Proof of Theorem \ref{th: extension-sakhi}}

We start by restating the theorem of interest: 

\extensionsakhi*

\begin{proof}[Proof of \Cref{th: extension-sakhi}]
    For any $k\geq 0$, let:
\begin{align*}
   f_k(S,\theta) := \sum_{j=0}^k \underbrace{\sum_{i\in I_k} -\log\LP 1-\lambda R(d_{\theta}) \RP + \log\LP 1 - \lambda\frac{d_{\theta}(a_i\mid x_i) c_i}{\pi_{j}(a_i\mid x_i)} \RP}_{:= Y_j(S,\theta)}.
   \end{align*}

   Now for a given posterior $Q$ and a prior $P$, we properly apply the change of measure inequality for any $m$:
   \begin{align}
   \label{eq: change-meas-opl-subbatch-sec3}
    \mathbb{E}_{\theta\sim Q}\left[ f_k(S,\theta) \right] 
     \leq \operatorname{KL}(Q,P) + \log \left( \mathbb{E}_{\theta\sim P}\exp(f_k(S,\theta))  \right).
   \end{align}

   We need to control the exponential random variable. 
   To do so we prove the following lemma:

   \begin{lemma}
     The sequence $(M_k:=\mathbb{E}_{\theta\sim P}\exp(f_k(S,\theta))_{k\geq 0}$ is a non-negative martingale.
   \end{lemma}
   \begin{proof}
   We fix $k\geq 0$. We show that $\mathbb{E}_{k-1}[M_k] = M_{k-1}$.
     \begin{align*}
       \mathbb{E}_{k-1}[M_k] &=\mathbb{E}_{k-1}\left[\mathbb{E}_{\theta \sim P}\exp(f_k(S,\theta)\right] \\
        &= \mathbb{E}_{\theta \sim P}\left[\mathbb{E}_{k-1}\left[\exp\LP f_{k-1}(S,\theta) \RP \exp\left(\sum_{i\in I_k}\log\LP \frac{1 - \lambda\frac{d_{\theta}(a_i\mid x_i) c_i}{\pi_{k}(a_i\mid x_i)}}{1-\lambda R(d_{\theta})}  \RP \right)\right] \right] \\
        &= \mathbb{E}_{\theta \sim P}\left[\exp\LP f_{k-1}(S,\theta) \RP \mathbb{E}_{k-1}\left[ \exp\left(\sum_{i\in I_k}\log\LP \frac{1 - \lambda\frac{d_{\theta}(a_i\mid x_i) c_i}{\pi_{k}(a_i\mid x_i)}}{1-\lambda R(d_{\theta})}  \RP \right)\right] \right].
     \end{align*}
 The second line comes from the application of Fubini as $P$ is data-independent and last line already demonstrated.
  Now we have for any $\theta\in\Theta$:

   \begin{align*}
     \mathbb{E}_{k-1} \left[ \exp\left(\sum_{i\in I_k}\log\LP \frac{1 - \lambda\frac{d_{\theta}(a_i\mid x_i) c_i}{\pi_{k}(a_i\mid x_i)}}{1-\lambda R(d_{\theta})}  \RP \right)\right]  & =  \mathbb{E}_{k-1}\left[\Pi_{i\in I_k}\frac{1 - \lambda\frac{d_{\theta}(a_i\mid x_i) c_i}{\pi_{k}(a_i\mid x_i)}}{1-\lambda R(d_{\theta})} \right]\\
     & = \Pi_{i\in I_k}\frac{1 - \lambda\Ebb_{k-1}\LB\frac{d_{\theta}(a_i\mid x_i) c_i}{\pi_{k}(a_i\mid x_i)}\RB}{1-\lambda R(d_{\theta})},
   \end{align*}
these computations holding as we exploited the assumption that, conditionally to $\mathcal{F}_{k-1}$, $S_k= \LP (x_i,a_i,c_i) \RP_{i\in I_k}$ is constituted of mutually independent random variables (thus an expectation of product equals a product of expectations). 
 Now, recall that $\pi_{k}$ is $\mathcal{F}_{k-1}$-measurable and that for all $i\in I_k, a_i \sim \pi_k(\cdot\mid x_i)$. We then remark that, for any $\theta$, $i\in I_k$:
    \begin{align*}
        \mathbb{E}_{k-1}\left[\frac{d_{\theta}(a_i\mid x_i) c_i}{\pi_{k}(a_i\mid x_i)}\right] & = \Ebb_{x'_i\sim \nu} \Ebb_{a'_i \sim \pi_{k}(\cdot \mid x'_i)} \Ebb_{c'_i\sim p(\cdot\mid x'_i,a'_i)} \LB \frac{d_{\theta}(a'_i\mid x'_i) c'_i}{\pi_{k}(a'_i\mid x'_i)} \RB \\
        & = \Ebb_{x'_i\sim \nu} \Ebb_{a'_i \sim \pi_{k}(\cdot \mid x'_i)} \LB \frac{d_{\theta}(a'_i\mid x'_i)}{\pi_{k}(a'_i\mid x'_i)} \Ebb_{c'_i\sim p(\cdot\mid x'_i,a'_i)} \LP c'_i\RP\RB \\
        & = \Ebb_{x'_i\sim \nu} \Ebb_{a'_i \sim \pi_{k}(\cdot \mid x'_i)} \LB c\LP x'_i,d_{\theta}(x'_i)\RP \RB \\
        & = R(d_{\theta})
    \end{align*}

 Thus $\mathbb{E}_{k-1}[M_k] = M_{k-1}$, this concludes the lemma's proof.
   \end{proof}

Now; given a martingale is also a supermartingale, we can apply Ville's inequality which implies that with probability at least $1-\delta$, for any $k\geq 0$:

 \[ \mathbb{E}_{\theta\sim P}\exp(f_k(S,\theta)) \leq \frac{1}{\delta}. \]

 Plugging this back into \Cref{eq: change-meas-opl-subbatch-sec3} yields, with probability at least $ 1-\delta$ over $S$, for any $k\geq 0$, any posterior $Q$: 

 \begin{align*}
     \sum_{j=0}^k \mathbb{E}_{\theta\sim Q} \LB \sum_{i\in I_k} -\log\LP 1-\lambda R(d_{\theta}) \RP + \log\LP 1 - \lambda\frac{d_{\theta}(a_i\mid x_i) c_i}{\pi_{j}(a_i\mid x_i)} \RP \RB   \leq \operatorname{KL}(Q,P) + \log \left( \frac{1}{\delta}  \right).
 \end{align*}
Now remark that for any $j\geq0,i\in I_j,\theta$ because $d_{\theta}(a,x)= \mathds{1}\{f_{\theta}(x)= a\} \in \{0,1\}$, we have for any $(a,x)$:
\begin{align*}
    \log\LP 1 - \lambda\frac{d_{\theta}(a\mid x) c_i}{\pi_{j}(a\mid x)} \RP = d_{\theta}(a,x)\log\LP 1 - \lambda\frac{c_i}{\pi_{j}(a\mid x)} \RP.
\end{align*}

We then have: 
\begin{align*}
    \sum_{j=1}^k \mathbb{E}_{\theta\sim Q}\LB \sum_{i\in I_j}  - \log\LP 1 - \lambda\frac{d_{\theta}(a_i\mid x_i) c_i}{\pi_{j}(a_i\mid x_i)} \RP \RB & = -\sum_{j=0}^k \sum_{i\in I_j} \pi_{Q}(a_i\mid x_i)\log\LP 1 - \lambda\frac{c_i}{\pi_{j}(a_i\mid x_i)} \RP  \\
    & = (k+1)m\lambda \hat{R}_{0:k}^{\lambda-\mathrm{LS}}(\pi_{Q}).
\end{align*}

We also use the convexity of $x \rightarrow -\log(1 - \lambda x)$ to obtain: 
\begin{align*}
    \sum_{j=0}^k  \mathbb{E}_{\theta\sim Q} \LB \sum_{i\in I_j} -\log\LP 1-\lambda R(d_{\theta}) \RP \RB & = m (k+1)  \mathbb{E}_{\theta \sim Q} \LB -\log\LP 1-\lambda R(d_{\theta}) \RP \RB\\
    &\geq - m (k+1) \log\LP 1 -\lambda R(\pi_{Q}) \RP.
\end{align*}
Finally, mixing all those calculations together, dividing by $\lambda (k+1)m$ and applying the increasing function $\psi_\lambda(x) := \frac{1}{\lambda}(1-\exp(-\lambda x)$ gives: 

\[  R(\pi_Q) \leq \psi_\lambda\left( \hat{R}_{0:k}^{\lambda-\mathrm{LS}}(\pi_{\Q}) + \frac{\operatorname{KL}(Q,P) + \log \left( \frac{1}{\delta}  \right)}{\lambda (k+1)m}\right). \]

Finally, using that $\psi_\lambda(x) \leq x$ concludes the proof.
\end{proof}

\subsection{Proof of Proposition \ref{prop: slowconvergence}}
We prove the convergence guarantee of \Cref{alg:extension-sakhi}.

\slowconvergence*

\begin{proof}
     We fix $k\geq 0$, a sequential algorithm $\texttt{Alg}$ and $P$ a data-free prior. To highlight the dependency on $\lambda$, we denote in this proof $\mathcal{S}_{j} =\mathcal{S}_{\lambda,j}$ 
   Our goal is to apply the change of measure inequality on $\mathcal{H}$ to a specific function $f_k$. We define this function below, for any sample $S$ and any predictor $\theta$
   \begin{align*}
   f_k(S,\theta) := \sum_{j=0}^k \underbrace{\sum_{i\in I_k} -\lambda R(d_{\theta}) -\lambda^2 \Scal_{\lambda,j}\LP d_{\theta}\RP - \log\LP 1 - \lambda\frac{d_{\theta}(a_i\mid x_i) c_i}{\pi_{j}(a_i\mid x_i)} \RP}_{:= Y_j(S,\theta)},
   \end{align*}
    where $\mathcal{S}_{\lambda,j}(\pi) := \mathbb{E}_{(x,a,c)\sim P(\pi_j)}\left[\frac{ \pi(a \mid x) c^2}{\pi_{j}^2(a \mid x)-\lambda \pi_j(a\mid x) c}\right]$
   Now for a given posterior  $Q$ we apply the change of measure inequality for any $m$:
   \begin{align}
   \label{eq: change-meas-subbatch-subopt}
    \sum_{j=0}^k \mathbb{E}_{\theta\sim Q} \LB Y_j(S,\theta) \RB  = \mathbb{E}_{\theta\sim Q}\left[ f_k(S,\theta) \right] 
     \leq \operatorname{KL}(Q,P) + \log \left( \mathbb{E}_{\theta\sim P}\exp(f_k(S,\theta))  \right).
   \end{align}

   We need to control the exponential random variable. 
   To do so we prove the following lemma:

   \begin{lemma}
     The sequence $(M_k:=\mathbb{E}_{\theta\sim P}\exp(f_k(S,\theta))_{k\geq 0}$ is a non-negative supermartingale.
   \end{lemma}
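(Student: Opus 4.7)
The plan is to mimic the strategy used in the martingale lemma of \Cref{th: extension-sakhi}, but this time producing a \emph{supermartingale} rather than a martingale, the extra slack being absorbed by the two deterministic corrections $-\lambda R(d_\theta) - \lambda^2 \mathcal{S}_{\lambda,j}(d_\theta)$ appearing in $Y_j$. Non-negativity of $M_k$ is immediate since it is the expectation of an exponential. For the supermartingale property, I would show $\mathbb{E}_{k-1}[M_k] \leq M_{k-1}$ by first using Fubini (valid because $P$ is data-independent) to write
\[
\mathbb{E}_{k-1}[M_k] = \mathbb{E}_{\theta\sim P}\left[\exp(f_{k-1}(S,\theta))\,\mathbb{E}_{k-1}\left[\exp(Y_k(S,\theta))\right]\right],
\]
so that it suffices to prove $\mathbb{E}_{k-1}[\exp(Y_k(S,\theta))] \leq 1$ for every $\theta$.

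Since $\pi_k$ is $\mathcal{F}_{k-1}$-measurable, the terms $R(d_\theta)$ and $\mathcal{S}_{\lambda,k}(d_\theta)$ are constants under $\mathbb{E}_{k-1}$, and the conditional mutual independence of $S_k$ given $\mathcal{F}_{k-1}$ turns the expectation of the product into a product of expectations:
\[
\mathbb{E}_{k-1}[\exp(Y_k)] = \prod_{i\in I_k} e^{-\lambda R(d_\theta) - \lambda^2\mathcal{S}_{\lambda,k}(d_\theta)}\,\mathbb{E}_{k-1}\!\left[\tfrac{1}{1-\lambda Z_i(\theta)}\right],
\]
where $Z_i(\theta) := d_\theta(a_i\mid x_i)c_i/\pi_k(a_i\mid x_i) \leq 0$. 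The key algebraic step is the identity $\tfrac{1}{1-y} = 1 + y + \tfrac{y^2}{1-y}$, which gives
\[
\mathbb{E}_{k-1}\!\left[\tfrac{1}{1-\lambda Z_i(\theta)}\right] = 1 + \lambda\,\mathbb{E}_{k-1}[Z_i(\theta)] + \lambda^2\,\mathbb{E}_{k-1}\!\left[\tfrac{Z_i(\theta)^2}{1-\lambda Z_i(\theta)}\right].
\]

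Now I would identify the two moments. The first one equals $R(d_\theta)$ by exactly the same unbiasedness calculation as in the proof of \Cref{th: extension-sakhi} (tower property using $a_i\sim\pi_k(\cdot\mid x_i)$). For the second, I would use that $d_\theta\in\{0,1\}$, so $d_\theta^2 = d_\theta$, yielding
\[
\tfrac{Z_i(\theta)^2}{1-\lambda Z_i(\theta)} = \tfrac{d_\theta(a_i\mid x_i)c_i^2}{\pi_k(a_i\mid x_i)^2 - \lambda\pi_k(a_i\mid x_i)c_i},
\]
whose conditional expectation under $(x_i,a_i,c_i)\sim P(\pi_k)$ is precisely $\mathcal{S}_{\lambda,k}(d_\theta)$. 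Combining these identifications gives
\[
\mathbb{E}_{k-1}\!\left[\tfrac{1}{1-\lambda Z_i(\theta)}\right] = 1 + \lambda R(d_\theta) + \lambda^2\mathcal{S}_{\lambda,k}(d_\theta) \leq \exp\!\left(\lambda R(d_\theta) + \lambda^2\mathcal{S}_{\lambda,k}(d_\theta)\right),
\]
the last step using $1+x\leq e^x$. Substituting back cancels the deterministic corrections and yields $\mathbb{E}_{k-1}[\exp(Y_k(S,\theta))] \leq 1$, hence $\mathbb{E}_{k-1}[M_k] \leq M_{k-1}$.

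The main obstacle is not conceptual but computational: correctly recognizing that the remainder term $\mathbb{E}[Z_i^2/(1-\lambda Z_i)]$ is exactly $\mathcal{S}_{\lambda,k}(d_\theta)$ (with the $\lambda$-adjusted denominator introduced in the proof, not the plain pseudo-variance $\mathcal{S}_k$ of the statement) relies crucially on $d_\theta$ being $\{0,1\}$-valued. Once this identification is made, the inequality $1+x\leq e^x$ cleanly turns the equality-in-expectation of the martingale lemma into the inequality that makes $(M_k)$ a supermartingale, which is exactly what is needed so that Ville's inequality can still be applied to derive the convergence bound of \Cref{prop: slowconvergence}.
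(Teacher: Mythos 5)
Your proposal is correct and follows essentially the same route as the paper: Fubini plus conditional independence reduce the claim to bounding $\mathbb{E}_{k-1}[\exp(Y_k(S,\theta))]$ by $1$ for each $\theta$, and your decomposition $\frac{1}{1-y}=1+y+\frac{y^2}{1-y}$ followed by $1+u\le e^u$ is algebraically identical to the paper's use of $x\le e^{x-1}$ applied to $\mathbb{E}_{k-1}[1/(1-\lambda Z_i)]$, with the same identifications $\mathbb{E}_{k-1}[Z_i]=R(d_\theta)$ and $\mathbb{E}_{k-1}[Z_i^2/(1-\lambda Z_i)]=\mathcal{S}_{\lambda,k}(d_\theta)$ via $d_\theta^2=d_\theta$. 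You also correctly note that the relevant pseudo-variance is the $\lambda$-adjusted $\mathcal{S}_{\lambda,k}$ defined inside the proof, which the paper later relates to $\mathcal{S}_k$ via $\mathcal{S}_{\lambda,k}\le\mathcal{S}_k$.
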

   \begin{proof}
   We fix $k\geq 0$ and we recall that for any $k$, $P$ is data free. We show that $\mathbb{E}_{k-1}[M_k] \leq M_{k-1}$.

   \begin{align*}
       \mathbb{E}_{k-1}[M_k] & =\mathbb{E}_{k-1}\left[\mathbb{E}_{\theta\sim P}\exp(f_k(S,\theta)\right] \\
       &= \mathbb{E}_{\theta\sim P}\left[\mathbb{E}_{k-1}\exp(f_k(S,\theta)\right]\\
        & = \mathbb{E}_{\theta\sim P}\left[\mathbb{E}_{k-1}\left[\exp\LP f_{k-1}(S,\theta)\RP \exp\left( Y_j(S,\theta \right)\right] \right] \\
         & = \mathbb{E}_{\theta\sim P}\left[\exp\LP f_{k-1}(S,\theta)\RP \mathbb{E}_{k-1}\left[\exp\left( Y_j(S,\theta \right)\right] \right].
     \end{align*}
 The second line holding, as $P$ is data free, by Fubini.

    Now we note $c = c(a,x)$ for the sake of conciseness. For any $K, \theta\in \Rbb^d$, using the fact that conditionally to $\mathcal{F}_{k-1}$, all the data in $\S_k$ are independent we have:
    \begin{align*} 
    \Ebb_{k-1}\left[\exp\left( Y_j(S,\theta \right)\right] & =\prod_{i\in I_k}\exp \left(-\lambda\left(R\left(d_{\theta}\right)-\lambda \mathcal{S}_{\lambda,k}\left(d_{\theta}\right)\right)\right) \mathbb{E}_{k-1}\left[\frac{1}{1-\lambda \frac{d_{\theta}(a \mid x) c}{\pi_{k}(a \mid x)}}\right] 
    \intertext{ Using $\log(x)\leq x-1$ yields}
    & \leq \prod_{i\in I_k}\exp \left(-\lambda\left(R\left(d_{\theta}\right)-\lambda \mathcal{S}_{\lambda,k}\left(d_{\theta}\right)\right)+\mathbb{E}_{k-1}\left[\frac{1}{1-\lambda \frac{d_{\theta}(a \mid x) c}{\pi_{k}(a \mid x)}}\right]-1\right) \\ 
    & = \prod_{i\in I_k}\exp \left(-\lambda\left(R\left(d_{\theta}\right)-\lambda \mathcal{S}_{\lambda,k}\left(d_{\theta}\right)\right)+\mathbb{E}_{k-1}\left[\frac{\lambda d_{\theta}(a \mid x) c}{\pi_{k}(a \mid x)-\lambda d_{\theta}(a \mid x) c}\right]\right)
    \intertext{Using that $d_{\theta}$ is binary gives the equality $\frac{\lambda d_{\theta}(a \mid x) c}{\pi_{k}(a \mid x)-\lambda d_{\theta}(a \mid x) c}= \frac{\lambda d_{\theta}(a \mid x) c}{\pi_{k}(a \mid x)-\lambda c} $, thus}
    & = \prod_{i\in I_k}\left(\exp \left(-\lambda\left(R\left(d_{\theta}\right)-\lambda \mathcal{S}_{\lambda,k}\left(d_{\theta}\right)\right)+\mathbb{E}_{k-1}\left[\frac{\lambda d_{\theta}(a \mid x) c}{\pi_{k}(a \mid x)-\lambda c}\right]\right)\right) \\ 
    & =\prod_{i\in I_K}\left(\exp \left(-\lambda^2 \mathcal{S}_{\lambda,k}\left(d_{\theta}\right)+\mathbb{E}_{k-1}\left[\frac{\lambda d_{\theta}(a \mid x) c}{\pi_{k}(a \mid x)-\lambda c}-\lambda \frac{d_{\theta}(a \mid x) c}{\pi_{k}(a \mid x)}\right]\right)\right) \\ 
    & \leq \prod_{i\in I_k}\left(\exp \left(-\lambda^2 \mathcal{S}_{\lambda,k}\left(d_{\theta}\right)+\lambda^2 \mathcal{S}_{\lambda,k}\left(d_{\theta}\right)\right)\right)  \\ &
    \leq 1.\
    \end{align*}

    This allows concluding that $\Ebb_{k-1}[M_k] \leq M_{k-1}$, thus the proof.
\end{proof}

Now, we can apply Ville's inequality which implies that with probability at least $1-\delta$, for any $K\geq 1$:

 \[ \mathbb{E}_{\theta\sim P}\exp(f_k(S,\theta)) \leq \frac{1}{\delta}. \]

 Plugging this back into \Cref{eq: change-meas-subbatch-subopt} yields, with probability at least $ 1-\delta$ over $S$, for any $k\geq 0$, any $Q$: 

 \begin{multline*}
     \mathbb{E}_{\theta\sim Q} \LB\sum_{j=0}^k  \sum_{i\in I_k} -\lambda R(d_{\theta}) -\lambda \Scal_{\lambda,j}\LP d_{\theta}\RP - \log\LP 1 - \lambda\frac{d_{\theta}(a_i\mid x_i) c_i}{\pi_{j}(a_i\mid x_i)} \RP \RB  \\
     \leq \operatorname{KL}(Q,P) + \log \left( \frac{1}{\delta}  \right).
 \end{multline*}

Then recall that 

$$\hat{R}_{0:k}^{\lambda-\mathrm{LS}}(\pi_{Q}) := -\frac{1}{(k+1)m\lambda} \sum_{j=1}^{k} \sum_{i\in I_j} \pi_{Q}(a_i\mid x_i)\log\LP 1 - \lambda\frac{c_i}{\pi_{j}(a_i\mid x_i)} \RP.$$

We have, using again that $d_\theta$ is binary for any $\theta$: 
\begin{align*}
    \mathbb{E}_{\theta\sim Q}\LB\sum_{j=0}^k  \sum_{i\in I_k}  - \log\LP 1 - \lambda\frac{d_{\theta}(a_i\mid x_i) c_i}{\pi_{i}^0(a_i\mid x_i)} \RP \RB & = -\sum_{j=0}^k \sum_{i\in I_k} \pi_{Q}(a_i\mid x_i)\log\LP 1 - \lambda\frac{c_i}{\pi_{j}(a_i\mid x_i)} \RP  \\
    & = (k+1)m\lambda \hat{R}_{0:k}^{\lambda-\mathrm{LS}}(\pi_{Q}).
\end{align*}

Mixing all those calculations together and dividing by $\lambda (k+1)m$ gives: with probability at least $1-\delta$, for any $k$, $Q$: 

\begin{align}
    \label{eq: lower-bound-single-predictor}
    \hat{R}_{0:k}^{\lambda-\mathrm{LS}}(\pi_{Q}) \leq  R(\pi_{Q}) + \frac{\lambda}{k+1} \sum_{j=0}^k \Scal_{\lambda,j}\LP \pi_{Q}\RP   + \frac{\operatorname{KL}(Q,P) + \log \left( \frac{1}{\delta}  \right)}{\lambda (k+1)m}
\end{align}

    Now, we consider the optimal sequence, defined such that, $Q^\star:= \operatorname{argmin}_{Q} R(\Q)$. 

    We also consider $\pi_{k+1}$ the output of \Cref{alg:extension-sakhi} time $k$ satisfying:

    \[ \pi_{k+1} = \pi_{\hat{Q}_k},\quad \text{with}\;\hat{Q}_k := \operatorname{argmin}_{Q}  \hat{R}_{0:k}^{\lambda-\mathrm{LS}}(\pi_{Q}) + \frac{\operatorname{KL}(Q,P)}{\lambda (k+1)m} \]

    Then using \Cref{th: extension-sakhi}, and the definition of $\hat{Q}$ yields for any $\lambda$, data-free $P$ with probability at least $1-\delta/2$, for any $k$: 
    \begin{align*}
    R\LP \pi_{k+1} \RP & \leq \hat{R}_{0:k}^{\lambda-\mathrm{LS}}(\pi_{k+1}) + \frac{\operatorname{KL}(\hat{Q}_k,P) + \log \left( \frac{2}{\delta}  \right)}{\lambda (k+1)m} \\
    & \leq \hat{R}_{0:k}^{\lambda-\mathrm{LS}}(\pi_{\star}) + \frac{ \operatorname{KL}(Q^\star,P) + \log \left( \frac{2}{\delta}  \right)}{\lambda (k+1)m} 
\end{align*}

Then, exploiting \Cref{eq: lower-bound-single-predictor} with probability $1-\delta/2$ and taking a union bound yields, with probability at least $1-\delta$, for any $k$: 

\begin{align}
    \label{eq: final-subotpimality-single-predictor}
     R\LP \pi_{k+1} \RP - R\LP \pi_{\star} \RP & \leq \frac{\lambda}{k+1} \sum_{j=0}^k \Scal_{\lambda,j}\LP \pi_{\star}\RP + 2\frac{ \operatorname{KL}(Q^\star,P) + \log \left( \frac{2}{\delta}  \right)}{\lambda (k+1)m}. 
\end{align}
Finally, using that $\mathcal{S}_{\lambda,j} \leq \mathcal{S}_{j}$ for all $j$ concludes the proof.
\end{proof}




\section{SEQUENTIAL LEARNING ALGORITHMS WITH NON-IID CONTEXTS.}
\label{sec: non-iid-contexts}
In the main text, we always considered an \iid assumption on the contexts, which may be restrictive when considering real-life situations such as the behavior of an individual, susceptible to evolve through time. In what follows, \emph{we make no assumption on the context distribution and only assume that at time $k\in\{1\cdots,K\}$, conditionally to $\mathcal{F}_{k-1}$, $S_k= \LP (x_i,a_i,c_i) \RP_{i\in I_k}$ is constituted of mutually independent random variable}. We show that we are still able to obtain a generalization bound, thanks to the supermartingale toolbox of \citet{haddouche2022online,haddouche2023pac,chugg2023unified}. We first start with some additional framework. 

\paragraph{Framework} Assume that batches are of size $m$. We define, for all $k$, the risk at time $k$ of $\pi_k$ as 
$$R_k(\pi_k) = \mathbb{E}_{x'_k \sim \nu_k, a'_k \sim \pi_k(\cdot \mid x_k)}[c(x'_k, a'_k)],$$
where $\nu_k$ denotes, at time $k$, the conditional distribution of contexts with respect to the past (\ie $\mathcal{F}_{k-1}$) and $x'_k,a'_k$ are independent of $x_k,a_k$. We also define, for any $\pi_k,\pi_{k-1}$, the empirical risk at time $k$ as 
$$\hat{R}_k^{\lambda - LS}(\pi_k,\pi_{k-1}):= \frac{1}{m}\sum_{i\in I_k} h_{LS}^{\lambda-Lin}(\pi_k(a_i\mid x_i), \pi_{k-1}(a_i\mid x_i),c_i),$$ 
with $h_{LS}^{\lambda-Lin}$ defined in \Cref{sec: warmup-pls}. Given context distributions are evolving, we aim to learn a sequence of posteriors $Q_k$ which is $\mathcal{F}_k$- measurable from an initial prior $P=Q_0$ to make the cumulative risk $\frac{1}{K}\sum_{k=0}^K R_k(\pi_{Q_k})$ small. 

Adapting our proof technique yields the following result.

\begin{theorem}
\label{th: extension-sakhi-non-iid}
    For any subbatch size $m>0$, any $\lambda >0, \delta \in (0,1]$, any learning algorithm $\mathcal{A}$ such that for all $k$, $Q_k= \mathcal{A}(S_1,\cdots,S_k)$ with probability at least $1-\delta$ over $S$, for any $K\geq 1$:

    $$\frac{1}{K+1} \sum_{k=0}^K R_k(\pi_{Q_k}) \leq \frac{1}{K+1}\sum_{k=0}^K \hat{R}_k^{\lambda - LS}(\pi_{Q_k},\pi_{Q_{k-1}}) + \frac{\sum_{k=0}^K \operatorname{KL}(Q_k,Q_{k-1}) + \log \left( \frac{1}{\delta}  \right)}{\lambda (K+1)m}.$$
\end{theorem}  

This is dynamic counterpart of \Cref{th: extension-sakhi}, allowing shifting context distribution. Note that, similarly to \Cref{th: extension-sakhi}, this bound suggests a new learning algorithm detailed in \Cref{alg:extension-sakhi-non-iid}. This algorithm only exploits $S_k$ and $\pi_{Q_{k-1}}$ to create $\pi_{Q_k}$, contrary to \Cref{alg:extension-sakhi} which exploits all previous data $S_0\cup\cdots,\cup S_k$. The reason behind this discrepancy is that past data gathers outdated information as the context distribution may arbitrarily evolve. Then our theoretical result suggests to only consider current data. Note that the toolbox we use also allows an alternative result to \Cref{th: main-upperbound} for non-iid contexts, exploiting our novel regularized IPS. 

\RestyleAlgo{ruled}
\begin{algorithm}
\caption{\textbf{Sequential Policy Learning via Logarithmic Smoothing for shifting context distributions}}\label{alg:extension-sakhi-non-iid}
\textbf{Input}: Behavior policy $\pi_{Q_{-1}}$, a policy class $\mathcal{C}(\Theta)$, $\lambda >0$. \\
Initialize $P$ such that $\pi_{P} = \pi_{Q_{-1}}$ and $\mathcal{S}_{-1} = \{ \}$. \\
\For{$k \geq 0$}{
Collect data $S_k$ of size $|I_k| = m$ with $\pi_{k}$.\\
Solve the optimization problem:
\begin{align*}
\hat{Q}_{k+1} = \operatorname{argmin}_{Q\in\mathcal{C}(\Theta)} \left\{\hat{R}_k^{\lambda - LS}(\pi_{Q_k},\pi_{Q_{k-1}}) + \frac{\operatorname{KL}(Q_k,Q_{k-1})}{\lambda m} \right\}
\end{align*}
Set $\pi_{k+1} = \pi_{\hat{Q}_{k+1}}$.
}
\end{algorithm}

We then detail the proof of \Cref{th: extension-sakhi-non-iid}. 

\begin{proof}
    First, we follow the route described in \citet{haddouche2023pac}, Section 3. 

For any $K>0$, we look for a $K$-tuple of probabilities. Thus, our predictor set of interest is $\mathcal{H}_K:= \mathcal{H}^{\otimes K}$ and then, our predictor $\theta^K$ is a tuple $(\theta_1,..,\theta_K)\in\mathcal{H}_K$.

   Our goal is to apply the change of measure inequality on $\mathcal{H}_K$ to a specific function $f_K$. We define this function below, for any sample $S$ and any predictor $\theta^K=(\theta_1,...,\theta_K)$

   $$f_K(S,\theta^K) := \sum_{k=0}^K \underbrace{\sum_{i\in I_k} -\log\left( 1-\lambda R_k(d_{\theta_k}) \right) + \log\left( 1 - \lambda\frac{d_{\theta_k}(a_i\mid x_i) c_i}{\pi_{Q_{k-1}}(a_i\mid x_i)} \right)}_{:= Y_k(S,\theta_k)}.$$

   Now for a given posterior tuple $Q_1,...Q_K$ we define $Q^K= Q_1 \otimes ...\otimes Q_K$ and also $P^K = Q_{0}\otimes...\otimes Q_{K-1}$. We can now properly apply the change of measure inequality for any $m$:

\begin{align}
    \label{eq: change-meas-opl-subbatch-app}
    \sum_{k=0}^K \mathbb{E}_{\theta_k\sim Q_k} \left[ Y_k(S,\theta_k) \right] & = \mathbb{E}_{\theta^k\sim Q}\left[ f_K(S,\theta^K) \right] 
     \leq \operatorname{KL}(Q^K,P^K) + \log \left( \mathbb{E}_{\theta^K\sim P^K}\exp(f_K(S,\theta^K))  \right) \notag\\
     & = \sum_{k=0}^K \operatorname{KL}(Q_k,Q_{k-1}) +  \log \left( \mathbb{E}_{\theta^K\sim P^K}\exp(f_K(S,\theta^K))  \right),
\end{align}   

where we define $f_K(S,\theta^K):= \sum_{k=0}^K Y_k(S,\theta_k)$.

We need to control the exponential random variable. 
   To do so we prove the following lemma:

   \begin{lemma}
     The sequence $(M_k:=\mathbb{E}_{\theta^K\sim P^K}\exp(f_K(S,\theta^K))_{K\geq 0}$ is a non-negative martingale.
   \end{lemma}
   \begin{proof}
   We fix $K\geq 1$ and we recall that for any $k$, $Q_{k-1}$ is $\mathcal{F}_{k-1}$-measurable. We show that $\mathbb{E}_{K-1}[M_K] = M_{K-1}$.

     \begin{align*}
       \mathbb{E}_{K-1}[M_K] &=\mathbb{E}_{K-1}\left[\mathbb{E}_{\theta^K\sim P^K}\exp(f_K(S,\theta^K)\right] \\
        &= \mathbb{E}_{K-1}\left[\mathbb{E}_{\theta_1,..,\theta_K\sim Q_{0}\otimes...\otimes Q_{K-1}}\exp(f_K(S,\theta^K)\right] \\
        &= \mathbb{E}_{K-1}\left[\mathbb{E}_{\theta_1,..,\theta_K\sim Q_{0}\otimes...\otimes Q_{K-1}}\left[\exp\LP f_{K-1}(S,\theta^{K-1}) \RP \exp\left(\sum_{i\in I_K}\log\LP \frac{1 - \lambda\frac{d_{\theta_K}(a_i\mid x_i) c_i}{\pi_{Q_{K-1}}(a_i\mid x_i)}}{1-\lambda R_K(d_{\theta_K})}  \RP \right)\right] \right] \\
         &= M_{K-1} \mathbb{E}_{K-1}\left[ \mathbb{E}_{\theta_K\sim Q_{K-1}}\left[\Pi_{i\in I_K}\frac{1 - \lambda\frac{d_{\theta_K}(a_i\mid x_i) c_i}{\pi_{Q_{K-1}}(a_i\mid x_i)}}{1-\lambda R_K(d_{\theta_K})} \right]\right].
     \end{align*}
 The last line holding because $P^{K-1} = Q_{0}\otimes...\otimes Q_{K-2}$ is $\mathcal{F}_{K-1}$ measurable.

   Now we exploit the fact that $Q_{K-1}$ is $\mathcal{F}_{K-1}$ measurable to apply a conditional Fubini lemma stated in \citet[][Lemma D.3]{haddouche2022online}. We have:

   \begin{align*}
     \mathbb{E}_{K-1}\left[ \mathbb{E}_{\theta_K\sim Q_{K-1}}\left[ \Pi_{i\in I_K}\frac{1 - \lambda\frac{d_{\theta_K}(a_i\mid x_i) c_i}{\pi_{Q_{K-1}}(a_i\mid x_i)}}{1-\lambda R_K(d_{\theta_K})} \right]\right] & =  \mathbb{E}_{\theta_K\sim Q_{K-1}}\left[\mathbb{E}_{K-1}\left[\Pi_{i\in I_K}\frac{1 - \lambda\frac{d_{\theta_K}(a_i\mid x_i) c_i}{\pi_{Q_{K-1}}(a_i\mid x_i)}}{1-\lambda R_K(d_{\theta_K})} \right]\right]\\
     & = \mathbb{E}_{\theta_K\sim Q_{K-1}}\left[\Pi_{i\in I_K}\frac{1 - \lambda\Ebb_{K-1}\LB\frac{d_{\theta_K}(a_i\mid x_i) c_i}{\pi_{Q_{K-1}}(a_i\mid x_i)}\RB}{1-\lambda R_K(d_{\theta_K})} \right].
   \end{align*}
The last line holding as $R_K(d_{\theta_K})$ is $\mathcal{F}_{K-1}$-measurable for all $K,\theta_K$. This also holds as we exploited the assumption that, conditionally to $\mathcal{F}_{K-1}$, $S_K= \LP (x_i,a_i,c_i) \RP_{i\in I_K}$ is constituted of mutually independent random variable (thus an expectation of product equals a product of expectations). 
 Now we remark that, for any $\theta_K$, $i\in I_K$:
    \begin{align*}
        \mathbb{E}_{K-1}\left[\frac{d_{\theta_K}(a_i\mid x_i) c_i}{\pi_{Q_{K-1}}(a_i\mid x_i)}\right] & = \Ebb_{x'_i\sim \nu_K} \Ebb_{a'_i \sim \pi_{Q_{K-1}}(\cdot \mid x'_i)} \Ebb_{c'_i\sim p(\cdot\mid x'_i,a'_i)} \LB \frac{d_{\theta_K}(a'_i\mid x'_i) c'_i}{\pi_{Q_{K-1}}(a'_i\mid x'_i)} \RB \\
        & = \Ebb_{x'_i\sim \nu_K} \Ebb_{a'_i \sim \pi_{Q_{K-1}}(\cdot \mid x'_i)} \LB \frac{d_{\theta_K}(a'_i\mid x'_i)}{\pi_{Q_{K-1}}(a'_i\mid x'_i)} \Ebb_{c'_i\sim p(\cdot\mid x'_i,a'_i)} \LP c'_i\RP\RB \\
        & = \Ebb_{x'_i\sim \nu_K} \Ebb_{a'_i \sim \pi_{Q_{K-1}}(\cdot \mid x'_i)} \LB c\LP x'_i,d_{\theta_K}(x'_i)\RP \RB \\
        & = R_K(d_{\theta_K})
    \end{align*}

 Thus $\mathbb{E}_{K-1}[M_K] = M_{K-1}$, this concludes the lemma's proof.
   \end{proof}

Now; given a martingale is also a supermartingale, we can apply Ville's inequality which implies that with probability at least $1-\delta$, for any $k\geq 0$:

 \[ \mathbb{E}_{\theta^k\sim P^k}\exp(f_k(S,\theta)) \leq \frac{1}{\delta}. \]

 Plugging this back into \Cref{eq: change-meas-opl-subbatch-app} yields, with probability at least $ 1-\delta$ over $S$, for any $K\geq 0$, \begin{align*}
     \sum_{k=0}^K \mathbb{E}_{\theta_k\sim Q_k} \LB \sum_{i\in I_k} -\log\LP 1-\lambda R_k(d_{\theta_k}) \RP + \log\LP 1 - \lambda\frac{d_{\theta_k}(a_i\mid x_i) c_i}{\pi_{Q_{k-1}}(a_i\mid x_i)} \RP \RB   \leq \sum_{k=0}^K \operatorname{KL}(Q_k,Q_{k-1}) + \log \left( \frac{1}{\delta}  \right).
 \end{align*}
Now remark that for any $k\geq1,i\in I_k,\theta_k$, because $d_{\theta_k}(a,x)= \mathds{1}\{f_{\theta_k}(x)= a\} \in \{0,1\}$, we have for any $(a,x)$:
\begin{align*}
    \log\LP 1 - \lambda\frac{d_{\theta_k}(a\mid x) c_i}{\pi_{Q_{k-1}}(a\mid x)} \RP = d_{\theta_k}(a,x)\log\LP 1 - \lambda\frac{c_i}{\pi_{Q_{k-1}}(a\mid x)} \RP.
\end{align*}

We then have: 
\begin{align*}
    \sum_{k=0}^K \mathbb{E}_{\theta_k\sim Q_k}\LB \sum_{i\in I_k}  - \log\LP 1 - \lambda\frac{d_{\theta_i}(a_i\mid x_i) c_i}{\pi_{i}^0(a_i\mid x_i)} \RP \RB & = -\sum_{k=0}^K \sum_{i\in I_k} \pi_{Q_k}(a_i\mid x_i)\log\LP 1 - \lambda\frac{c_i}{\pi_{Q_{k-1}}(a_i\mid x_i)} \RP  \\
    & = m\lambda \sum_{k=0}^K \hat{R}_k^{\lambda - LS}(\pi_{Q_k}).
\end{align*}

We also use twice the convexity of $x \rightarrow -\log(1 - \lambda x)$ to obtain: 
\begin{align*}
    \sum_{k=0}^K  \mathbb{E}_{\theta_k\sim Q_k} \LB \sum_{i\in I_k} -\log\LP 1-\lambda R_k(d_{\theta_k}) \RP \RB & = m \sum_{k=0}^K \mathbb{E}_{\theta_k\sim Q_k} \LB -\log\LP 1-\lambda R_k(d_{\theta_k}) \RP \RB\\
    & \geq m \sum_{k=0}^K - \log\LP 1 -\lambda R_k(\pi_{Q_k}) \RP, \\
    & \geq -(K+1)m \log\LP 1 -\frac{\lambda}{K+1} \sum_{k=0}^K R_k(\pi_{Q_k}) \RP.
\end{align*}

Finally, mixing all those calculations together and dividing by $\lambda Km$ gives: with probability at least $1-\delta$, for any $K$: 

\begin{align*}
    -\frac{1}{\lambda} \log\LP 1 -\frac{\lambda}{K+1} \sum_{k=0}^K R_k(\pi_{Q_k}) \RP \leq \frac{1}{K+1}\sum_{k=0}^K \hat{R}_k^{\lambda - LS}(\pi_{Q_k},\pi_{Q_{k-1}}) + \frac{\sum_{k=0}^K \operatorname{KL}(Q_k,Q_{k-1}) + \log \left( \frac{1}{\delta}  \right)}{\lambda (K+1)m}
\end{align*}

Applying the function $\psi_\lambda$
Finally, mixing all those calculations together, dividing by $\lambda (k+1)m$ and applying the increasing function $\psi_\lambda(x) := \frac{1}{\lambda}(1-\exp(-\lambda x)$ gives: 

\[  \frac{1}{K+1} \sum_{k=0}^K R_k(\pi_{Q_k}) \leq \psi_\lambda\left( \frac{1}{K+1}\sum_{k=0}^K \hat{R}_k^{\lambda - LS}(\pi_{Q_k},\pi_{Q_{k-1}}) + \frac{\sum_{k=0}^K \operatorname{KL}(Q_k,Q_{k-1}) + \log \left( \frac{1}{\delta}  \right)}{\lambda (K+1)m}\right). \]

Finally, using that $\psi_\lambda(x) \leq x$ concludes the proof.
\end{proof}

\section{PROOFS OF SECTION \ref{sec: accelerated-convergence}}
\label{sec: proofs-sec-4}
\subsection{Proof of Theorem \ref{th: main-upperbound}}

\begin{proof}
For any $k \ge 0$, let:
\begin{align*}
   &Y_{i, j} := \log \LP 1-\lambda \left(R\LP d_\theta \RP -  R\LP \pi_j \right)\RP \RP -\log\left(1 - \lambda  \left(\frac{d_\theta(a_i|x_i)}{\pi_j(a_i|x_i)} - 1\right) c_i\right) \\
   &f_k(S,\theta) := - \sum_{j = 1}^k\sum_{i \in I_j} Y_{i, j}.
   \end{align*}
Now for a given posterior $Q$ and a prior $P$, we apply the change of measure inequality \citep{csizar1975divergence,donsker1976asymptotic} for any $k$:
   \begin{align}
   \label{eq: change-meas-opl-subbatch}
    \mathbb{E}_{\theta\sim Q}\left[ f_k(S,\theta) \right] 
     \leq \operatorname{KL}(Q,P) + \log \left( \mathbb{E}_{\theta\sim P}\exp(f_k(S,\theta))  \right).
   \end{align}

   We need to control the exponential random variable. 
   To do so we prove the following lemma:

   \begin{lemma}
     The sequence $(M_k:=\mathbb{E}_{\theta\sim P}\exp(f_k(S,\theta))_{k\geq 0}$ is a non-negative martingale.
   \end{lemma}
   \begin{proof}
   We fix $k\geq 0$. We show that $\mathbb{E}_{k-1}[M_k] = M_{k-1}$.
     \begin{align*}
       \mathbb{E}_{k-1}[M_k] &=\mathbb{E}_{k-1}\left[\mathbb{E}_{\theta \sim P}\exp(f_k(S,\theta)\right] \\
        &= \mathbb{E}_{\theta \sim P}\left[\mathbb{E}_{k-1}\left[\exp\LP f_{k-1}(S,\theta) \RP \exp\left(\sum_{i\in I_k}\log\LP \frac{1 - \lambda \left(\frac{d_{\theta}(a_i\mid x_i)}{\pi_{k}(a_i\mid x_i)} - 1\right)c_i}{1-\lambda (R(d_{\theta}) - R(\pi_k) )}  \RP \right)\right] \right] \\
        &= \mathbb{E}_{\theta \sim P}\left[\exp\LP f_{k-1}(S,\theta) \RP  \mathbb{E}_{k-1}\left[\exp\left(\sum_{i\in I_k}\log\LP \frac{1 - \lambda \left(\frac{d_{\theta}(a_i\mid x_i)}{\pi_{k}(a_i\mid x_i)} - 1\right)c_i}{1-\lambda (R(d_{\theta}) - R(\pi_k) )}  \RP \right)\right] \right].
     \end{align*}
 The second line comes from the application of Fubini as $P$ is data-independent. Now we have for any $\theta\in\Theta$:

   \begin{multline*}
     \mathbb{E}_{k-1} \left[ \exp\left(\sum_{i\in I_k}\log\LP \frac{1 - \lambda \left(\frac{d_{\theta}(a_i\mid x_i)}{\pi_{k}(a_i\mid x_i)} - 1\right)c_i}{1-\lambda (R(d_{\theta}) - R(\pi_k) )}  \RP \right)\right]  \\
     =  \mathbb{E}_{k-1}\left[\Pi_{i\in I_k}\frac{1 - \lambda \left(\frac{d_{\theta}(a_i\mid x_i)}{\pi_{k}(a_i\mid x_i)} - 1\right)c_i}{1-\lambda (R(d_{\theta}) - R(\pi_k) )} \right]\\
      = \Pi_{i\in I_k}\frac{1 - \lambda \Ebb_{k-1}\left[\left(\frac{d_{\theta}(a_i\mid x_i)}{\pi_{k}(a_i\mid x_i)} - 1\right)c_i\right]}{1-\lambda (R(d_{\theta}) - R(\pi_k) )},
   \end{multline*}
these computations holding as we exploited the assumption that, conditionally to $\mathcal{F}_{k-1}$, $S_k= \LP (x_i,a_i,c_i) \RP_{i\in I_k}$ is constituted of mutually independent random variables (thus an expectation of product equals a product of expectations). 
 Now, recall that $\pi_{k}$ is $\mathcal{F}_{k-1}$-measurable and that for all $i\in I_k, a_i \sim \pi_k(\cdot\mid x_i)$. We then remark that, for any $\theta$, $i\in I_k$:
    \begin{align*}
        \mathbb{E}_{k-1}\left[\frac{d_{\theta}(a_i\mid x_i) c_i}{\pi_{k}(a_i\mid x_i)}\right] & = \Ebb_{x'_i\sim \nu} \Ebb_{a'_i \sim \pi_{k}(\cdot \mid x'_i)} \Ebb_{c'_i\sim p(\cdot\mid x'_i,a'_i)} \LB \frac{d_{\theta}(a'_i\mid x'_i) c'_i}{\pi_{k}(a'_i\mid x'_i)} \RB \\
        & = \Ebb_{x'_i\sim \nu} \Ebb_{a'_i \sim \pi_{k}(\cdot \mid x'_i)} \LB \frac{d_{\theta}(a'_i\mid x'_i)}{\pi_{k}(a'_i\mid x'_i)} \Ebb_{c'_i\sim p(\cdot\mid x'_i,a'_i)} \LP c'_i\RP\RB \\
        & = \Ebb_{x'_i\sim \nu} \Ebb_{a'_i \sim \pi_{k}(\cdot \mid x'_i)} \LB c\LP x'_i,d_{\theta}(x'_i)\RP \RB \\
        & = R(d_{\theta})
    \end{align*}
 Similarly, notice that $\Ebb_{k-1}[c_i] = R(\pi_k)$.
 
 Thus $\mathbb{E}_{k-1}[M_k] = M_{k-1}$, this concludes the lemma's proof.
   \end{proof}

Now; given a martingale is also a supermartingale, we can apply Ville's inequality \citep{doob1939jean}, which implies that with probability at least $1-\delta$, for any $k\geq 0$:

 \[ \mathbb{E}_{\theta\sim P}\exp(f_k(S,\theta)) \leq \frac{1}{\delta}. \]

Plugging this into \Cref{eq: change-meas-opl-subbatch} yields, with probability at least $ 1-\delta$, for any $k\geq 0$, any $Q$: 

 \begin{align*}
     \sum_{j=1}^k \mathbb{E}_{\theta\sim Q} \LB \sum_{i\in I_j} -\log\LP 1-\lambda \left( R(d_{\theta}) - R(\pi_j)\right) \RP + \log\LP 1 - \lambda \left(\frac{d_{\theta}(a_i\mid x_i)}{\pi_{j}(a_i\mid x_i) } - 1\right) c_i\RP \RB   \leq \operatorname{KL}(Q,P) + \log \left( \frac{1}{\delta}  \right).
 \end{align*}
Now remark that for any $j\geq1,i\in I_j,\theta$ because $d_{\theta}(a,x)= \mathds{1}\{f_{\theta}(x)= a\} \in \{0,1\}$, we have for any $(a,x)$:
\begin{align*}
    \log\LP 1 - \lambda \left(\frac{d_{\theta}(a_i\mid x_i)}{\pi_{j}(a_i\mid x_i) } - 1\right) c_i\RP =  d_\theta(a_i|x_i)\log\LP 1 - \lambda\frac{c_i}{\pi_{j}(a_i\mid x_i)(1 + \lambda c_i)} \RP - \log(1 + \lambda c_i).
\end{align*}

We then have: 
\begin{align*}
    \sum_{j=1}^k \mathbb{E}_{\theta\sim Q}\LB \sum_{i\in I_j}  - \log\LP 1 - \lambda \left(\frac{d_{\theta}(a_i\mid x_i)}{\pi_{j}(a_i\mid x_i) } - 1\right) c_i\RP \RB & = N_k \lambda \left( \hat{R}^{\lambda-\mathrm{adj}}_{0:k}(\pi_Q) + \hat{C}_{0,k}(\lambda) \right).
\end{align*}

We also use the convexity of $x \rightarrow -\log(1 - \lambda x)$ to obtain: 
\begin{align*}
    \sum_{j=1}^k  \mathbb{E}_{\theta\sim Q} \LB \sum_{i\in I_j} -\log\LP 1-\lambda \left(R(d_{\theta}) - R(\pi_j) \right) \RP \RB &= - \sum_{j=1}^k  \mathbb{E}_{\theta\sim Q} \LB n_j \log \LP 1-\lambda \left( R(d_{\theta}) - R(\pi_j) \right) \RP \RB\\
    &= - N_k \sum_{j=1}^k  \mathbb{E}_{\theta\sim Q} \LB \frac{n_j}{N_k} \log\LP 1-\lambda \left(R(d_{\theta}) - R(\pi_j) \right) \RP \RB\\
    &\geq - N_k \log\LP 1 -\lambda \left( R(\pi_{Q}) - \sum_{j = 1}^k\frac{n_j}{N_k}R(\pi_j) \right) \RP.
\end{align*}
Finally, mixing all those calculations together, dividing by $\lambda N_k$ and applying the increasing function $\psi_\lambda(x) := \frac{1}{\lambda}(1-\exp(-\lambda x)$ gives: 

\[  R(\pi_Q) \leq \psi_\lambda\left( \hat{R}_{0,k}^{\lambda}(\pi_{Q}) + \hat{C}_{0,k}(\lambda) + \frac{\operatorname{KL}(Q,P) + \log \left( \frac{1}{\delta}  \right)}{\lambda N_k}\right). \]

Finally, using that $\psi_\lambda(x) \leq x$ concludes the proof.
\end{proof}

\subsection{Proof of Proposition \ref{prop: suboptimality_with_variance}}

We need a preliminary proposition to prove \Cref{prop: suboptimality_with_variance}.

After we proved our upper bound, we need another theorem to analyze the algorithm. 

\begin{proposition}
\label{prop: main-lowerbound}
    For any sequential algorithm $\texttt{Alg}$, data independent prior $P$, any $\boldsymbol{\lambda \in(0, 1)}, \delta \in (0,1]$, with probability at least $1-\delta$ over $S$, we have, for all $k\geq0$, denoting by $\pi_{k+1}$ the output of $\texttt{Alg}$ at time $k$, for all $Q\in\mathcal{P}(\Theta)$: 
\begin{align*}
     \hat{R}^{\lambda-\mathrm{adj}}_{0:k}(\pi_Q) + \hat{C}_{0,k}(\lambda) & \leq R\LP \pi_Q \RP - \sum_{j = 1}^k \frac{n_j}{N_k} R\LP \pi_j \RP  + \frac{\lambda}{1 - \lambda} \sum_{j = 1}^k\frac{n_j}{N_k} L\LP \pi_Q, \pi_j \RP  + \frac{ \operatorname{KL}(Q,P) + \log \left( \frac{1}{\delta}  \right)}{\lambda N_k}
\end{align*}
with $L\LP \pi_Q, \pi_j \RP = \mathbb{E}_{x \sim \nu}\left[ \mathbb{E}_{\pi_j}\left[c^2 \right] + \mathbb{E}_{\pi_Q}\left[c^2\left(\frac{1}{\pi_j(a|x)}
 - 2\right) \right] \right].$
\end{proposition}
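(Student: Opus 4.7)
The plan is to mirror the proof of \Cref{th: main-upperbound} with the inequality reversed. This reversal is not free: the ratio used in the upper bound is a martingale and inverting it destroys the martingale property, so I will have to introduce a second--order correction that will ultimately materialise as the pseudo--variance $L(\pi_Q,\pi_j)$ and the dilation factor $\tfrac{\lambda}{1-\lambda}$. As before, the scheme is: build a non--negative supermartingale indexed by $k$, apply Ville's inequality, then use the Donsker--Varadhan change of measure from $P$ to $Q$.

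Write $X_{i,j}(\theta):=\bigl(\tfrac{d_\theta(a_i|x_i)}{\pi_j(a_i|x_i)}-1\bigr)c_i$, so that $\mathbb{E}_{k-1}[X_{i,j}(\theta)] = R(d_\theta)-R(\pi_j)$, and set
\[
\tilde Y_{i,j}(\theta) := -\log\!\bigl(1-\lambda X_{i,j}(\theta)\bigr) - \lambda\bigl(R(d_\theta)-R(\pi_j)\bigr) - \lambda^{2} W_{i,j}(\theta),
\]
with the correction $W_{i,j}(\theta):=\mathbb{E}_{k-1}\!\bigl[\tfrac{X_{i,j}(\theta)^{2}}{1-\lambda X_{i,j}(\theta)}\bigr]$. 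Let $\tilde f_k(S,\theta):=\sum_{j=1}^{k}\sum_{i\in I_j}\tilde Y_{i,j}(\theta)$ and $M_k:=\mathbb{E}_{\theta\sim P}[\exp\tilde f_k(S,\theta)]$. The key conditional estimate is $\mathbb{E}_{k-1}[\exp\tilde Y_{i,j}(\theta)]\le 1$: the identity $\tfrac{1}{1-y}=1+y+\tfrac{y^{2}}{1-y}$ applied with $y=\lambda X_{i,j}$, combined with $1+u\le e^{u}$, gives
\[
\mathbb{E}_{k-1}\!\Bigl[\tfrac{1}{1-\lambda X_{i,j}}\Bigr] \;\le\; \exp\!\bigl(\lambda(R(d_\theta)-R(\pi_j)) + \lambda^{2} W_{i,j}\bigr),
\]
which is exactly what cancels the correction once one takes logarithms. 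Swapping $\mathbb{E}_{k-1}$ and $\mathbb{E}_{\theta\sim P}$ (Fubini, since $P$ is data--free) and using the conditional mutual independence of $S_k$ yields $\mathbb{E}_{k-1}[M_k]\le M_{k-1}$, making $(M_k)$ a non--negative supermartingale starting at $1$; Ville's inequality then gives $M_k\le 1/\delta$ uniformly in $k$ with probability $1-\delta$.

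Applying Donsker--Varadhan produces $\mathbb{E}_{\theta\sim Q}[\tilde f_k(S,\theta)]\le \operatorname{KL}(Q,P)+\log(1/\delta)$. The identification of the three pieces of $\mathbb{E}_{\theta\sim Q}[\tilde f_k]$ proceeds as in \Cref{th: main-upperbound}. For the first piece, using that $d_\theta$ is binary, the identity
\[
-\log\!\bigl(1-\lambda X_{i,j}(\theta)\bigr) \;=\; -d_\theta(a_i|x_i)\log\!\Bigl(1-\tfrac{\lambda c_i}{\pi_j(a_i|x_i)(1+\lambda c_i)}\Bigr) - \log(1+\lambda c_i)
\]
turns $\mathbb{E}_{\theta\sim Q}\sum_{j,i}[-\log(1-\lambda X_{i,j})]$ into $\lambda N_k\bigl(\hat R^{\lambda-\mathrm{adj}}_{0:k}(\pi_Q)+\hat C_{0,k}(\lambda)\bigr)$. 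The linear piece gives $-\lambda N_k\sum_j\tfrac{n_j}{N_k}\bigl(R(\pi_Q)-R(\pi_j)\bigr)$ by $\mathbb{E}_{\theta\sim Q}[R(d_\theta)]=R(\pi_Q)$. For the variance piece, the crude bound $1-\lambda X_{i,j}\ge 1-\lambda$ (valid because $c\in[-1,0]$ and $d_\theta\in\{0,1\}$ force $\lambda X_{i,j}\le \lambda<1$) gives $W_{i,j}\le\tfrac{1}{1-\lambda}\mathbb{E}_{k-1}[X_{i,j}^{2}]$; expanding $\bigl(\tfrac{d_\theta}{\pi_j}-1\bigr)^{2}c^{2}$ via $d_\theta^{2}=d_\theta$ and integrating against $Q$ using $\mathbb{E}_{\theta\sim Q}[d_\theta(a|x)]=\pi_Q(a|x)$ recovers exactly the quantity $L(\pi_Q,\pi_j)$ stated in the proposition. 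Dividing through by $\lambda N_k$ delivers the claimed inequality.

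The one non--routine step is guessing the correction $W_{i,j}$ so that the supermartingale property holds with the tightest possible proxy; the $\tfrac{1}{1-\lambda}$ dilation and the specific form of $L(\pi_Q,\pi_j)$ are then forced by this choice, and the rest is the same bookkeeping as in \Cref{th: main-upperbound}.
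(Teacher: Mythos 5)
Your proposal is correct and follows essentially the same route as the paper: the same supermartingale built from $-\log(1-\lambda X_{i,j})$ recentered by $\lambda\,\mathbb{E}_{k-1}[X_{i,j}]$ and the second-order correction $\lambda^{2}\mathbb{E}_{k-1}\bigl[X_{i,j}^{2}/(1-\lambda X_{i,j})\bigr]$ (the paper's $M_\lambda(d_\theta,\pi_j)$), followed by Ville's inequality, the change of measure, the binary-$d_\theta$ rewriting into $\hat R^{\lambda-\mathrm{adj}}_{0:k}+\hat C_{0,k}$, and the bound $1-\lambda X_{i,j}\ge 1-\lambda$ yielding the $\tfrac{\lambda}{1-\lambda}L(\pi_Q,\pi_j)$ term. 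No gaps.
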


\begin{proof}

For any $k \ge 0$, and $\lambda \in (0, 1)$, let:
\begin{align*}
   &Y_{i, j} :=  \log\left(1 - \lambda  \left(\frac{d_\theta(a_i|x_i)}{\pi_j(a_i|x_i)} - 1\right) c_i\right) +  \lambda \left( R\LP d_\theta \RP -  R\LP \pi_j \RP + \lambda M_\lambda(d_\theta, \pi_j)\right) \\
   &f_k(S,\theta) := - \sum_{j = 1}^k\sum_{i \in I_j} Y_{i, j}.
   \end{align*}
with $M_\lambda(d_\theta, \pi_j) = \mathbb{E}_{(x, a ,c ) \sim P(\pi_j)}\left[\frac{\left(\frac{d_\theta(a|x)}{\pi_j(a|x)} - 1 \right)^2c^2}{1 - \lambda \left(\frac{d_\theta(a|x)}{\pi_j(a|x)} - 1 \right)c} \right].$ Now for a given posterior $Q$ and a prior $P$, we apply the change of measure inequality \cite{csizar1975divergence,donsker1976asymptotic} for any $k$:
\begin{align}
   \label{eq: change-meas-opl-subbatch-2}
    \mathbb{E}_{\theta\sim Q}\left[ f_k(S,\theta) \right] 
     \leq \operatorname{KL}(Q,P) + \log \left( \mathbb{E}_{\theta\sim P}\exp(f_k(S,\theta))  \right).
   \end{align}

We also need to control the exponential random variable. We prove the following lemma:

   \begin{lemma}
     The sequence $(M_k:=\mathbb{E}_{\theta\sim P}\exp(f_k(S,\theta)))_{k\geq 0}$ is a non-negative supermartingale.
   \end{lemma}
   \begin{proof}
   We fix $k\geq 0$. We show that $\mathbb{E}_{k-1}[M_k] \le  M_{k-1}$.
   
   Let $L_k = \mathbb{E}_{k-1}\left[\exp\left(\sum_{i\in I_k}\log\LP \frac{1}{1 - \lambda \left(\frac{d_{\theta}(a_i\mid x_i)}{\pi_{k}(a_i\mid x_i)} - 1\right)c_i}\RP -  \lambda \left( R\LP d_\theta \RP -  R\LP \pi_k \RP + \lambda M_\lambda(d_\theta, \pi_k)\right)\right)\right].$ We have:
     \begin{align*}
       \mathbb{E}_{k-1}[M_k] &=\mathbb{E}_{k-1}\left[\mathbb{E}_{\theta \sim P}\exp(f_k(S,\theta)\right] \\
        &= \mathbb{E}_{\theta \sim P}\left[\exp\LP f_{k-1}(S,\theta) \RP  L_k \right],
     \end{align*}
 where the last line comes from Fubini's theorem as $P$ is data-independent. 
 Let us prove that $L_k$ is smaller than $1$. We have: 
 \begin{align*} 
  L_k &=\prod_{i\in I_k}\exp \left(-\lambda\left( R\left(d_{\theta}\right) - R\left(\pi_k\right) + \lambda M_\lambda\left(d_{\theta}, \pi_k\right)\right)\right) \mathbb{E}_{k-1}\left[\frac{1}{1-\lambda (\frac{d_{\theta}(a \mid x)}{\pi_{k}(a \mid x)} - 1)c }\right] 
    \intertext{ Using $\log(x)\leq x-1$ yields}
    L_k & \leq \prod_{i\in I_k}\exp \left(-\lambda\left( R\left(d_{\theta}\right) - R\left(\pi_k\right) + \lambda M_\lambda\left(d_{\theta}, \pi_k\right)\right) + \mathbb{E}_{k-1}\left[\frac{1}{1-\lambda (\frac{d_{\theta}(a \mid x)}{\pi_{k}(a \mid x)} - 1)c }\right] - 1\right)   \\
     & = \prod_{i\in I_k}\exp \left(-\lambda\left( R\left(d_{\theta}\right) - R\left(\pi_k\right) + \lambda M_\lambda\left(d_{\theta}, \pi_k\right)\right) + \mathbb{E}_{k-1}\left[\frac{\lambda (\frac{d_{\theta}(a \mid x)}{\pi_{k}(a \mid x)} - 1)c}{1-\lambda (\frac{d_{\theta}(a \mid x)}{\pi_{k}(a \mid x)} - 1)c }\right]\right)   \\
     & = \prod_{i\in I_k}\exp \left(-\lambda^2  M_\lambda\left(d_{\theta}, \pi_k\right) + \mathbb{E}_{k-1}\left[\frac{\lambda (\frac{d_{\theta}(a \mid x)}{\pi_{k}(a \mid x)} - 1)c}{1-\lambda (\frac{d_{\theta}(a \mid x)}{\pi_{k}(a \mid x)} - 1)c }- \lambda(\frac{d_{\theta}(a \mid x)}{\pi_{k}(a \mid x)} - 1)c \right] \right)   \\
     & = \prod_{i\in I_k}\exp \left(-\lambda^2  M_\lambda\left(d_{\theta}, \pi_k\right) + \mathbb{E}_{k-1}\left[\frac{\lambda^2 (\frac{d_{\theta}(a \mid x)}{\pi_{k}(a \mid x)} - 1)^2c^2}{1-\lambda (\frac{d_{\theta}(a \mid x)}{\pi_{k}(a \mid x)} - 1)c } \right] \right)   \\
     & = \prod_{i\in I_k}\exp \left(-\lambda^2  M_\lambda\left(d_{\theta}, \pi_k\right) + \lambda^2 M_\lambda\left(d_{\theta}, \pi_k\right)  \right)   \\
    &= 1.
    \end{align*}

Thus $\mathbb{E}_{k-1}[M_k] \le M_{k-1}$, this concludes the lemma's proof.
   \end{proof}

We can apply Ville's inequality \cite{doob1939jean} which implies that with probability at least $1-\delta$, for any $k\geq 0$:

 \[ \mathbb{E}_{\theta\sim P}\exp(f_k(S,\theta)) \leq \frac{1}{\delta}. \]

Plugging this into \Cref{eq: change-meas-opl-subbatch-2} yields, with probability at least $ 1-\delta$, for any $k\geq 0$, any $Q$: 

 \begin{align*}
     - \sum_{j=0}^k \sum_{i\in I_j} \mathbb{E}_{\theta\sim Q} \LB  Y_{i,j} \RB   \leq \operatorname{KL}(Q,P) + \log \left( \frac{1}{\delta}  \right).
 \end{align*}
with $Y_{i,j} = \log\left(1 - \lambda  \left(\frac{d_\theta(a_i|x_i)}{\pi_j(a_i|x_i)} - 1\right) c_i\right) +  \lambda \left( R\LP d_\theta \RP -  R\LP \pi_j \RP + \lambda M_\lambda(d_\theta, \pi_j)\right)$.

Now remark that for any $j\geq1,i\in I_j,\theta$ because $d_{\theta}(a,x)= \mathds{1}\{f_{\theta}(x)= a\} \in \{0,1\}$, we have for any $(a,x)$:
\begin{align*}
    \log\LP 1 - \lambda \left(\frac{d_{\theta}(a_i\mid x_i)}{\pi_{j}(a_i\mid x_i) } - 1\right) c_i\RP =  d_\theta(a_i|x_i)\log\LP 1 - \lambda\frac{c_i}{\pi_{j}(a_i\mid x_i)(1 + \lambda c_i)} \RP - \log(1 + \lambda c_i).
\end{align*}

We thus obtain by re-organising the terms and dividing by $\lambda N_k$:
\begin{align*}
    \hat{R}^{\lambda-\mathrm{adj}}_{0:k}(\pi_Q) + \hat{C}_{0,k}(\lambda) & \leq R\LP \pi_Q \RP - \sum_{j = 1}^k \frac{n_j}{N_k} R\LP \pi_j \RP  + \lambda \sum_{j = 1}^k\frac{n_j}{N_k} \mathbb{E}_{\theta \sim Q} \left[ M_\lambda\LP d_\theta, \pi_j \RP \right]  + \frac{ \operatorname{KL}(Q,P) + \log \left( \frac{1}{\delta}  \right)}{\lambda N_k}.
\end{align*}
We proceed then by bounding $\mathbb{E}_{\theta \sim Q} \left[ M_\lambda\LP d_\theta, \pi_j \RP \right]$:
\begin{align*}
    \mathbb{E}_{\theta \sim Q} \left[ M_\lambda\LP d_\theta, \pi_j \RP \right] &= \mathbb{E}_{\theta \sim Q} \left[ M_\lambda\LP d_\theta, \pi_j \RP \right]\\
    &= \mathbb{E}_{\theta \sim Q} \left[\mathbb{E}_{\pi_j}\left[\frac{\left(\frac{d_\theta(a|x)}{\pi_j(a|x)} - 1 \right)^2c^2}{1 - \lambda \left(\frac{d_\theta(a|x)}{\pi_j(a|x)} - 1 \right)c} \right] \right] \\
    &\le \frac{1}{1 - \lambda}\mathbb{E}_{\theta \sim Q} \left[\mathbb{E}_{\pi_j}\left[\left(\frac{d_\theta(a|x)}{\pi_j(a|x)} - 1 \right)^2c^2 \right] \right] \\
    &= \frac{1}{1 - \lambda}\mathbb{E}_{\pi_j}\left[\mathbb{E}_{\theta \sim Q} \left[\left(\frac{d_\theta(a|x)}{\pi_j(a|x)} - 1 \right)^2c^2 \right] \right] \\
    &= \frac{1}{1 - \lambda}\mathbb{E}_{\pi_j}\left[\mathbb{E}_{\theta \sim Q} \left[d_\theta(a|x)\left(\frac{1}{\pi_j(a|x)} - 1 \right)^2c^2 + (1 - d_\theta(a|x))c^2 \right] \right] \\
    &= \frac{1}{1 - \lambda}\mathbb{E}_{\pi_j}\left[\pi_Q(a|x)\left(\frac{1}{\pi_j(a|x)} - 1 \right)^2c^2 + (1 - \pi_Q(a|x))c^2 \right] \\
    &= \frac{1}{1 - \lambda}\mathbb{E}_{\pi_j}\left[\pi_Q(a|x)\left(\frac{1}{\pi_j(a|x)} - 1 \right)^2c^2 + (1 - \pi_Q(a|x))c^2 \right] \\
    &= \mathbb{E}_{\pi_j}\left[c^2 \right] + \mathbb{E}_{\pi_Q}\left[c^2\left(\frac{1}{\pi_j(a|x)}
 - 2\right) \right] = L(\pi_Q, \pi_j).
\end{align*}
We used several times above the property $d_\theta\in\{0,1\}$ to transform the analytical expression of the random variable with $\mathbb{E}_{\pi_j}$. This gives our result and ends the proof.
\end{proof}

Now we are able to prove our main result. Let us recall it below for conciseness. 

\begin{theorem}
    For any data independent prior $P$, any $\boldsymbol{\lambda \in(0, 1)}, \delta \in (0,1]$, with probability at least $1-\delta$, we have, denoting by $\pi_j$ the output of \Cref{alg:s_analyzed} at time $j$, for all $k\geq 0$:
\begin{align*}
      0 \le R\LP \pi_{k +1} \RP - R\LP \pi_{\star} \RP & \leq  \frac{\lambda}{1 - \lambda} \sum_{j = 1}^k\frac{n_j}{N_k} L\LP \pi_\star, \pi_j \RP  + 2\frac{ \operatorname{KL}(Q^\star,P) + \log \left( \frac{2}{\delta}  \right)}{\lambda N_k}
\end{align*}
with $\pi_{k+1}$ defined as the output of the sequential algorithm \Cref{alg:s_analyzed} at time $k$.
\end{theorem}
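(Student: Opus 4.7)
The plan is to combine two complementary probability-$1-\delta/2$ statements via a union bound and then exploit the optimality of $\hat{Q}_{k+1}$ in the empirical objective. The key observation is that the adjusted risk is balanced by a matching lower bound (analogous to the role played by the pseudo-variance term in the slow-rate analysis of \Cref{prop: slowconvergence}), so the term $\sum_{j} \frac{n_j}{N_k} R(\pi_j)$ appearing in \Cref{th: main-upperbound} cancels when we chain an upper and a lower PAC-Bayesian bound together.

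First, I would establish a ``lower-bound'' counterpart to \Cref{th: main-upperbound}, analogous to \Cref{eq: lower-bound-single-predictor} in \Cref{sec: extensions-section-3}. Concretely, one defines the supermartingale
\begin{align*}
    f_k(S,\theta) := -\sum_{j=0}^{k}\sum_{i\in I_j}\Bigl[\log\bigl(1-\lambda\bigl(\tfrac{d_\theta(a_i\mid x_i)}{\pi_j(a_i\mid x_i)}-1\bigr)c_i\bigr)+\lambda\bigl(R(d_\theta)-R(\pi_j)+\lambda M_\lambda(d_\theta,\pi_j)\bigr)\Bigr],
\end{align*}
where $M_\lambda(d_\theta,\pi_j)$ is the natural quadratic term $\mathbb{E}_{(x,a,c)\sim P(\pi_j)}\bigl[\bigl(\tfrac{d_\theta(a\mid x)}{\pi_j(a\mid x)}-1\bigr)^2 c^2 / (1-\lambda(\tfrac{d_\theta(a\mid x)}{\pi_j(a\mid x)}-1)c)\bigr]$. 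The inequality $\log(x)\le x-1$ combined with the conditional independence structure of $S_k$ given $\mathcal{F}_{k-1}$ (exactly as in the proof of \Cref{th: main-upperbound}) shows that $\mathbb{E}_{\theta\sim P}\exp(f_k)$ is a non-negative supermartingale. Ville's inequality then yields, with probability $1-\delta/2$, a bound of the form
\begin{align*}
    \hat{R}^{\lambda-\mathrm{adj}}_{0:k}(\pi_Q)+\hat{C}_{0,k}(\lambda) \le R(\pi_Q)-\sum_{j=0}^k\tfrac{n_j}{N_k}R(\pi_j)+\tfrac{\lambda}{1-\lambda}\sum_{j=0}^k\tfrac{n_j}{N_k} L(\pi_Q,\pi_j)+\tfrac{\operatorname{KL}(Q,P)+\log(2/\delta)}{\lambda N_k},
\end{align*}
where the bound $\frac{1}{1-\lambda(d_\theta/\pi_j-1)c}\le \frac{1}{1-\lambda}$ together with the binary nature of $d_\theta$ converts $\mathbb{E}_{\theta\sim Q}[M_\lambda(d_\theta,\pi_j)]$ into $\frac{1}{1-\lambda} L(\pi_Q,\pi_j)$ by the same reorganization as in \Cref{sec: warmup-pls}.

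Second, I would apply \Cref{th: main-upperbound} with confidence $\delta/2$ to the posterior $\hat{Q}_{k+1}$ returned by \Cref{alg:s_analyzed}. By the defining optimality of $\hat{Q}_{k+1}$ in \eqref{eq:seq_solve},
\begin{align*}
\hat{R}^{\lambda-\mathrm{adj}}_{0:k}(\pi_{\hat{Q}_{k+1}})+\tfrac{\operatorname{KL}(\hat{Q}_{k+1},P)}{\lambda N_k}\le \hat{R}^{\lambda-\mathrm{adj}}_{0:k}(\pi_\star)+\tfrac{\operatorname{KL}(Q^\star,P)}{\lambda N_k}.
\end{align*}
Substituting into \Cref{th: main-upperbound} gives
\begin{align*}
R(\pi_{k+1})-\sum_{j=0}^k\tfrac{n_j}{N_k}R(\pi_j)\le \hat{R}^{\lambda-\mathrm{adj}}_{0:k}(\pi_\star)+\hat{C}_{0,k}(\lambda)+\tfrac{\operatorname{KL}(Q^\star,P)+\log(2/\delta)}{\lambda N_k}.
\end{align*}

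Third, I would invoke the lower-bound step at $Q=Q^\star$ (so $\pi_Q=\pi_\star$), take a union bound to get both inequalities simultaneously valid with probability at least $1-\delta$, add them, and observe that the adjusted empirical risk $\hat{R}^{\lambda-\mathrm{adj}}_{0:k}(\pi_\star)+\hat{C}_{0,k}(\lambda)$ and the term $\sum_{j}\tfrac{n_j}{N_k}R(\pi_j)$ both cancel. This yields exactly the claimed upper bound, while the non-negativity $0\le R(\pi_{k+1})-R(\pi_\star)$ is immediate from the optimality of $\pi_\star$. The main technical obstacle is the lower-bound supermartingale: one must choose the quadratic correction $M_\lambda$ carefully so that $\mathbb{E}_{k-1}\bigl[\exp(-\lambda(R(d_\theta)-R(\pi_k)+\lambda M_\lambda)-\log(1-\lambda(d_\theta/\pi_k-1)c))\bigr]\le 1$, and then convert the resulting $M_\lambda$ term into the statement's $L$ quantity by exploiting $d_\theta\in\{0,1\}$ and bounding $1/(1-\lambda(d_\theta/\pi_k-1)c)\le 1/(1-\lambda)$.
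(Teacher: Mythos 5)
Your proposal is correct and follows essentially the same route as the paper: the paper likewise first proves a ``lower-bound'' counterpart (\Cref{prop: main-lowerbound}) via a supermartingale built from exactly the quadratic correction $M_\lambda(d_\theta,\pi_j)$ you describe, controlled with $\log(x)\le x-1$, Ville's inequality, and the bound $\mathbb{E}_{\theta\sim Q}[M_\lambda]\le \frac{1}{1-\lambda}L(\pi_Q,\pi_j)$, and then combines it with \Cref{th: main-upperbound} applied to $\hat{Q}_{k+1}$, the optimality of $\hat{Q}_{k+1}$ in \eqref{eq:seq_solve}, and a union bound so that the empirical terms and the $\sum_j \frac{n_j}{N_k}R(\pi_j)$ terms cancel. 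No substantive differences.
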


\begin{proof}
We know that by \Cref{th: main-upperbound}, for any data independent prior $P$, any $\boldsymbol{\lambda \in(0, 1)}, \delta \in (0,1]$, with probability at least $1-\delta/2$ over $S$, we have for all $k\geq 0$: 
\begin{align*}
     R\LP \pi_{k+1} \RP - \sum_{j = 1}^k \frac{n_j}{N_k} R\LP \pi_j \RP & \leq \psi_\lambda \left( \hat{R}^{\lambda-\mathrm{adj}}_{0:k}(\pi_{k+1}) + \hat{C}_{0,k}(\lambda) + \frac{ \operatorname{KL}(Q_{k+1},P) + \log \left( \frac{2}{\delta}  \right)}{\lambda N_k} \right) \\
     & \leq \hat{R}^{\lambda-\mathrm{adj}}_{0:k}(\pi_{k+1}) + \hat{C}_{0,k}(\lambda) + \frac{ \operatorname{KL}(Q_{k+1},P) + \log \left( \frac{1}{\delta}  \right)}{\lambda N_k},
\end{align*}
as $\psi_\lambda(x)\leq x$. Then, by definition, $\hat{Q}_{k+1} = \operatorname{argmin}_{Q} \left\{\hat{R}^{\lambda-\mathrm{adj}}_{0:k}(\pi_{\Q}) + \frac{\operatorname{KL}(Q,P)}{\lambda N_k} \right\}$, thus: 

\begin{align*}
 \forall k \ge 0, \quad    R\LP \pi_{k+1} \RP - \sum_{j = 1}^k \frac{n_j}{N_k} R\LP \pi_j \RP & \leq \hat{R}^{\lambda-\mathrm{adj}}_{0:k}(\pi_\star) + \hat{C}_{0,k}(\lambda) + \frac{ \operatorname{KL}(Q^\star,P) + \log \left( \frac{2}{\delta}  \right)}{\lambda N_k}.
\end{align*}

Now, we exploit \Cref{prop: main-lowerbound}, with probability at least $1-\delta/2$ over $S$, we have $\forall k \ge 0$: 
\begin{align*}
     \hat{R}^{\lambda-\mathrm{adj}}_{0:k}(\pi_\star) + \hat{C}_{0,k}(\lambda) & \leq R\LP \pi_\star \RP - \sum_{j = 1}^k \frac{n_j}{N_k} R\LP \pi_j \RP  + \frac{\lambda}{1 - \lambda} \sum_{j = 1}^k\frac{n_j}{N_k} L\LP \pi_\star, \pi_j \RP  + \frac{ \operatorname{KL}(Q^\star,P) + \log \left( \frac{2}{\delta}  \right)}{\lambda N_k}. 
\end{align*}

Now, taking an union bound and combining the two previous results gives, with probability at least $1-\delta$ over $S$, $\forall k \ge 0$:

\begin{align*}
R\LP \pi_{k+1} \RP - \sum_{j = 1}^k \frac{n_j}{N_k} R\LP \pi_j \RP & \leq R\LP \pi_\star \RP - \sum_{j = 1}^k \frac{n_j}{N_k} R\LP \pi_j \RP + \frac{\lambda}{1 - \lambda} \sum_{j = 1}^k\frac{n_j}{N_k} L\LP \pi_\star, \pi_j \RP \\ 
& \quad +  2\frac{ \operatorname{KL}(Q^\star,P) + \log \left( \frac{2}{\delta}  \right)}{\lambda N_k}.
\end{align*}

Re-organising the terms conclude the proof.
\end{proof}

\subsection{Proof of Lemma \ref{lemma:acc}} \label{app:convergence_lemma}

\begin{proof}
We are looking for $\gamma_k > 0$ such that:
\begin{align*}
     \frac{L(\pi_\star, \pi_k)}{R(\pi_k) - R(\pi_\star)} \le \gamma_k.
\end{align*}
Let us start rewriting the suboptimality $R(\pi_k) - R(\pi_\star)$. We have:
\begin{align*}
    R(\pi_k) - R(\pi_\star) &= \mathbb{E}_x\left[\mathbb{E}_{\pi_k}\left[c(a, x)\right] - c(a_\star(x), x)\right] \\
    &= \mathbb{E}_x\left[\mathbb{E}_{\pi_k}\left[c(a, x) - c(a_\star(x), x)\right]\right] \\
    &= \mathbb{E}_x\left[\sum_{a \neq a_\star(x)} \pi_{k}(a|x)\Delta_{a,x}\right].
\end{align*}
On the other hand, we have by definition of $L(\pi_\star, \pi_k)$:
\begin{align*}
     L(\pi_\star, \pi_k) &= \mathbb{E}_{x} \left[ \mathbb{E}_{\pi_k}\left[ \mathbb{E}_{p(\cdot|x, a)}\left[c^2 \right]\right] + \left(\frac{1}{\pi_k(a^\star(x)|x)} - 2 \right)\mathbb{E}_{p(\cdot|x, a_\star(x))}\left[c^2 \right]\right] \\
      &= \mathbb{E}_{x} \left[ \mathbb{E}_{\pi_k}\left[ \mathbb{E}_{p(\cdot|x, a)}\left[c^2 \right] - \mathbb{E}_{p(\cdot|x, a_\star(x))}\left[c^2 \right]\right] + \left(\frac{1}{\pi_k(a^\star(x)|x)} - 1 \right)\mathbb{E}_{p(\cdot|x, a_\star(x))}\left[c^2 \right]\right] \\
       &= \mathbb{E}_{x} \left[ \sum_{a \neq a_\star(x)}\pi_k(a|x)\left[ \mathbb{E}_{p(\cdot|x, a)}\left[c^2 \right] - \mathbb{E}_{p(\cdot|x, a_\star(x))}\left[c^2 \right]\right] + \left(\frac{1}{\pi_k(a^\star(x)|x)} - 1 \right)\mathbb{E}_{p(\cdot|x, a_\star(x))}\left[c^2 \right]\right] \\
       &= \mathbb{E}_{x} \left[ \sum_{a \neq a_\star(x)}\pi_k(a|x)\left[ \sigma^2(a,x) - \sigma^2(a_\star(x), x) + c^2(a, x) - c^2(a_\star(x), x)\right] \right]\\ 
        & + \mathbb{E}_{x}\left[\left(\frac{1}{\pi_k(a^\star(x)|x)} - 1 \right)\mathbb{E}_{p(\cdot|x, a_\star(x))}\left[c^2 \right]\right].
        \end{align*}
As the costs are in $[0, 1]$, we can bound the variance term by:
$$\mathbb{E}_{x} \left[ \sum_{a \neq a_\star(x)}\pi_k(a|x)\left[ \sigma^2(a,x) - \sigma^2(a_\star(x), x)\right] \right] \le \frac{1}{4} \mathbb{E}_{x} \left[ \sum_{a \neq a_\star(x)}\pi_k(a|x) \right] = \frac{1}{4} \mathbb{E}_{x} \left[ 1 -\pi_k(a_\star(x)|x) \right]. $$ 
We then use the definition of $\Delta_{x, a}$ to develop $c(a, x)^2$ and obtain:
\begin{align*}
    c(a, x)^2 &= \left(\Delta_{a, x} + c(a_\star(x), x) \right)^2 \\
    &= \Delta_{a, x}^2  + 2 c(a_\star(x), x)\Delta_{x,a} + c(a_\star(x), x)^2 \le \Delta_{a, x}^2  + c(a_\star(x), x)^2.
\end{align*}
Plugging these two results into our previous equation, we get:
\begin{align*}
     L(\pi_\star, \pi_k) &\le \frac{1}{4} \mathbb{E}_{x} \left[ 1 -\pi_k(a_\star(x)|x) \right] + \mathbb{E}_{x} \left[ \sum_{a \neq a_\star(x)}\pi_k(a|x) \Delta_{x, a}^2   \right] + \mathbb{E}_{x}\left[\left(\frac{1}{\pi_k(a^\star(x)|x)} - 1 \right)\mathbb{E}_{p(\cdot|x, a_\star(x))}\left[c^2 \right]\right] \\
     &\le \frac{1}{4} \mathbb{E}_{x} \left[ 1 -\pi_k(a_\star(x)|x) \right] + \mathbb{E}_{x} \left[ \sum_{a \neq a_\star(x)}\pi_k(a|x) \Delta_{x, a}^2   \right] + \mathbb{E}_{x}\left[\left(\frac{1}{\pi_k(a^\star(x)|x)} - 1 \right)\right]\\
     &\le \frac{1}{4} \mathbb{E}_{x} \left[ 1 -\pi_k(a_\star(x)|x) \right] + \mathbb{E}_{x} \left[ \sum_{a \neq a_\star(x)}\pi_k(a|x) \Delta_{x, a}^2   \right] + \frac{1}{C^\star_k}\mathbb{E}_{x}\left[1 - \pi_k(a^\star(x)|x)\right].
\end{align*}
We then proceed at upper bound the quantity:
\begin{align*}
     \frac{L(\pi_\star, \pi_k)}{R(\pi_k) - R(\pi_\star)} &\le \frac{\frac{1}{4} \mathbb{E}_{x} \left[ 1 -\pi_k(a_\star(x)|x) \right] + \mathbb{E}_{x} \left[ \sum_{a \neq a_\star(x)}\pi_k(a|x) \Delta_{x, a}^2   \right] + \frac{1}{C^\star_k}\mathbb{E}_{x}\left[1 - \pi_k(a^\star(x)|x)\right]}{\mathbb{E}_x\left[\sum_{a \neq a_\star(x)} \pi_{k}(a|x)\Delta_{a,x}\right]} \\
     &\le 1 + \left(\frac{1}{4} + \frac{1}{C_k^\star}\right)\frac{\mathbb{E}_{x} \left[ 1 -\pi_k(a_\star(x)|x) \right]}{\mathbb{E}_x\left[\sum_{a \neq a_\star(x)} \pi_{k}(a|x)\Delta_{a,x}\right]},
\end{align*}
because $\Delta_{x,a}^2 \le \Delta_{x,a}$ as $\Delta_{x,a} \le 1$. 

Now, we proceed to define:
$$\bar{\Delta}_k = \frac{\mathbb{E}_x\left[\sum_{a \neq a_\star(x)} \pi_{k}(a|x)\Delta_{a,x}\right]}{\mathbb{E}_{x} \left[ 1 -\pi_k(a_\star(x)|x) \right]}.$$
$\bar{\Delta}_k$ represents the expected sub-optimality incurred when policy $\pi_k$ selects a non-optimal action. A large $\bar{\Delta}_k$ means that the environment penalizes us more for not choosing an optimal action, while $\bar{\Delta}_k \approx 0$ means that the environment has multiple near-optimal actions. We can characterize $\gamma_k$ by this quantity, or loosen our bound and have a more interpretable condition. We choose the second approach. Let us suppose that there exists a $\Delta_u > 0$ for a $u \in [0, 1)$ such that:
\begin{align*}
    P_x\left(\min_{a \neq a^\star(x)} \Delta_{x,a} \ge \Delta_u \right) = 1 - u.
\end{align*}
Let us focus on the numerator of $\bar{\Delta}_k$. We have:
\begin{align*}
    P\left( \Delta_{x,a} \ge \Delta_u |a \neq a^\star(x)\right) \ge P_x\left(\min_{a \neq a^\star(x)} \Delta_{x,a} \ge \Delta_u \right) = 1-u. 
\end{align*}
Indeed, the event $E_1 =\{\Delta_{x,a} \ge \Delta_u |a \neq a^\star(x) \}$ represents choosing a random action from the suboptimal set, and finding that $\Delta_{x,a} \ge \Delta_u$, while the event $E_0 = \{\min_{a \neq a^\star(x)} \Delta_{x,a} \ge \Delta_u \}$ means that all non-optimal actions have a sub-optimality bigger than $\Delta_u$. We have $E_0 \subset E_1$, and thus:
\begin{align*}
    P\left( \Delta_{x,a} \ge \Delta_u |a \neq a^\star(x)\right) \ge   1-u. 
\end{align*}
Additionally, we have by the definition of conditional expectations:
\begin{align*}
    P\left( \Delta_{x,a} \ge \Delta_u |a \neq a^\star(x)\right) &= \frac{\mathbb{E}_x\left[\sum_{a \neq a_\star(x)} \pi_{k}(a|x)\mathds{1}\left[\Delta_{a,x} > \Delta_u \right]\right]}{\mathbb{E}_x\left[\sum_{a \neq a_\star(x)} \pi_{k}(a|x)\right]} \\
    &= \frac{\mathbb{E}_x\left[\sum_{a \neq a_\star(x)} \pi_{k}(a|x)\mathds{1}\left[\Delta_{a,x} > \Delta_u \right]\right]}{\mathbb{E}_{x} \left[ 1 -\pi_k(a_\star(x)|x) \right]} \ge 1 - u.
\end{align*}

Now, we write:
\begin{align*}
    \mathbb{E}_x\left[\sum_{a \neq a_\star(x)} \pi_{k}(a|x)\Delta_{a,x}\right] &\ge \mathbb{E}_x\left[\sum_{a \neq a_\star(x)} \pi_{k}(a|x) \Delta_{a,x}\mathds{1}\left[\Delta_{a,x} > \Delta_u \right]\right] \\
    &\ge \Delta_u\mathbb{E}_x\left[\sum_{a \neq a_\star(x)} \pi_{k}(a|x)\mathds{1}\left[\Delta_{a,x} > \Delta_u \right]\right] \\
    &\ge \Delta_u (1 - u) \mathbb{E}_{x} \left[ 1 -\pi_k(a_\star(x)|x) \right].
\end{align*}
Finally, we have:
\begin{align*}
    \bar{\Delta}_k = \frac{\mathbb{E}_x\left[\sum_{a \neq a_\star(x)} \pi_{k}(a|x)\Delta_{a,x}\right]}{\mathbb{E}_{x} \left[ 1 -\pi_k(a_\star(x)|x) \right]} \ge \Delta_u (1 - u).
\end{align*}
and we finally obtain our result: 
\begin{align*}
     \frac{L(\pi_\star, \pi_k)}{R(\pi_k) - R(\pi_\star)} &\le  1 + \left(\frac{1}{4} + \frac{1}{C_k^\star}\right)\frac{1}{\Delta_u(1 - u)} = \gamma_k
\end{align*}
This ends the proof.
\end{proof}

\subsection{Proof of Theorem \ref{th: suboptimality-conv-rate-unif-batch}}


We first start with a key intermediary result, derived from Lemma \ref{lemma:acc} coupled with \Cref{prop: suboptimality_with_variance}.
\begin{corollary}
    \label{cor: recursion}
    Let $\pi_{k+1}$ be the output of \Cref{alg:s_analyzed} at time $k$ with parameter $\lambda\in(0,1)$. For any data independent prior $P$, any $\boldsymbol{\lambda \in(0, 1)}, \delta \in (0,1]$, with probability at least $1-\delta$, we have for all $k \ge 0$:
\begin{align*}
      0 \le D_{k+1} & \leq  \frac{\lambda}{1 - \lambda} \sum_{j = 0}^k\frac{n_j}{N_k} \gamma_j D_j  + 2\frac{ \operatorname{KL}(Q^\star,P) + \log \left( \frac{2}{\delta}  \right)}{\lambda N_k},
\end{align*}
where $D_k = R(\pi_k) - R(\pi^\star)$.
\end{corollary}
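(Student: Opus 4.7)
The plan is to derive the corollary by a direct combination of Proposition~\ref{prop: suboptimality_with_variance} and Lemma~\ref{lemma:acc}, so that no new concentration argument is required. The high-probability work has already been done in proving Proposition~\ref{prop: suboptimality_with_variance}; Lemma~\ref{lemma:acc}, by contrast, is a deterministic inequality that holds path-by-path once its coverage/margin hypotheses are satisfied. This separation is what makes a single union bound unnecessary.

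First, I would invoke Proposition~\ref{prop: suboptimality_with_variance} applied to the output sequence $(\pi_{k+1})_{k\ge 0}$ of \Cref{alg:s_analyzed} together with the reference distribution $Q^\star \in \operatorname{argmin}_Q R(\pi_Q)$. This gives the event $E$ of probability at least $1-\delta$ over $S$ on which, simultaneously for every $k \ge 0$,
\[
0 \le D_{k+1} \le \frac{\lambda}{1-\lambda} \sum_{j=0}^k \frac{n_j}{N_k}\, L(\pi_\star,\pi_j) + 2\,\frac{\operatorname{KL}(Q^\star,P) + \log(2/\delta)}{\lambda N_k}.
\]
Since this statement is already uniform in $k$, no further union bound over $k$ will be needed in what follows.

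Second, working on the same event $E$, I would apply Lemma~\ref{lemma:acc} pointwise to each intermediate policy $\pi_j$: under the standing hypothesis $\Delta_u > 0$ and the coverage condition $C^\star_j > 0$, the lemma yields $L(\pi_\star,\pi_j) \le \gamma_j\,(R(\pi_j) - R(\pi_\star)) = \gamma_j D_j$ with $\gamma_j$ finite and deterministically well-defined once $\pi_j$ is fixed. Substituting these pointwise bounds term-by-term into the inner sum gives exactly the upper bound stated in the corollary, while the lower bound $0 \le D_{k+1}$ is immediate from the optimality of $\pi_\star$.

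Since the proof is essentially a substitution, I do not anticipate a real technical obstacle. The only point that deserves attention is the qualitative assumption that each realization of $\pi_j$ produced by \Cref{alg:s_analyzed} satisfies $C^\star_j > 0$, so that Lemma~\ref{lemma:acc} is legitimately applicable at every $j \in \{0,\ldots,k\}$; this will be handled by assuming (as the paper subsequently does for the convergence rate) a uniform lower bound $C^\star_j \ge C^\star > 0$, under which the entire substitution is valid on the full event $E$.
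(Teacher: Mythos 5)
Your proposal is correct and matches the paper's own derivation: the paper obtains \Cref{cor: recursion} exactly by taking the high-probability, uniform-in-$k$ bound of \Cref{prop: suboptimality_with_variance} and substituting the deterministic inequality $L(\pi_\star,\pi_j)\le \gamma_j D_j$ from \Cref{lemma:acc} term by term, with no additional union bound. Your remark that the applicability of \Cref{lemma:acc} at each $j$ rests on $C^\star_j>0$ (handled via the standing assumption $\gamma_j\le\gamma<+\infty$) is also consistent with how the paper treats this point.
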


We now proceed to prove Theorem \ref{th: suboptimality-conv-rate-unif-batch}.
\begin{proof}
    Assume that for all $j$, $n_j=m$, then $N_k= (k+1)m$ for all $k$.
     We re-use the notation $D_k =R(\pi_k) - R(\pi^\star) $ for all $k\geq 0$.
    We define 
    \begin{align*}
        \lambda_m &:= \frac{1 - \alpha}{8\gamma \sqrt{m} } \\
        C_\alpha &:= 64 \gamma \frac{\operatorname{KL}(\Q^\star,P) + \log(2/\delta)}{ (1 - \alpha)}.
    \end{align*}
    
Now let us prove the result by strong recursion. 
    
    \paragraph{Initialisation} Take $k=0$. We have, by \Cref{cor: recursion}:
    \begin{align*}
        D_1 & \leq \frac{\lambda}{1-\lambda}\gamma_1 D_0 + 2\frac{ \operatorname{KL}(Q^\star,P) + \log \left( \frac{2}{\delta}  \right)}{\lambda m}, \\
        & \leq \frac{\lambda}{1-\lambda}\gamma D_0 + 2\frac{ \operatorname{KL}(Q^\star,P) + \log \left( \frac{2}{\delta}  \right)}{\lambda m} 
        \intertext{Now we take $\lambda= \lambda_m $ and notice that $\lambda_m \le 1/2$, thus $\frac{\lambda_m}{1-\lambda_m} \leq 2\lambda_m$:}
        & \leq  \frac{1}{\sqrt{m}}\left( \frac{(1 - \alpha)}{4} D_0 + 16\gamma\frac{ \operatorname{KL}(Q^\star,P) + \log \left( \frac{2}{\delta}  \right)}{1 - \alpha} \right), \\
        & \leq  \frac{1}{\sqrt{m}}\left( \frac{(1 - \alpha)}{4} D_0 + 16\gamma\frac{ \operatorname{KL}(Q^\star,P) + \log \left( \frac{2}{\delta}  \right)}{1 - \alpha} \right), \intertext{Now as we have $\frac{1 - \alpha}{4}D_0 \le \frac{1}{4} \le \frac{C_\alpha}{2}$ we write:}
        & \leq \frac{1}{\sqrt{m}}\left( \frac{C_\alpha}{2} + \frac{C_\alpha}{4}\right),
    \end{align*}

This then ensures that $D_1 \leq \frac{C_\alpha}{\sqrt{m}}$ and prove the result for $k=1$
    
    \paragraph{General case.} Assume the result true for all $1\leq j\leq k$. By \Cref{cor: recursion}, we have for any $\lambda \le 1/2$: 

    \begin{align*}
     D_{k+1} & \leq  \frac{\lambda}{1 - \lambda} \frac{1}{k+1}\sum_{j = 0}^k \gamma_j D_{j}  + 2\frac{ \operatorname{KL}(Q^\star,P) + \log \left( \frac{2}{\delta}  \right)}{\lambda (k+1)m},\\
     & \leq 2\lambda \gamma \frac{1}{k+1}\sum_{j = 0}^k  D_{j}  + 2\frac{ \operatorname{KL}(Q^\star,P) + \log \left( \frac{2}{\delta}  \right)}{\lambda km},
     \intertext{Now, we use the recursion assumption to get}
      D_{k+1} & \leq 2\lambda \gamma\left( \frac{D_0}{k+1} +\frac{1}{(k+1)\sqrt{m}}\sum_{j = 1}^k  \frac{C_\alpha}{k^\alpha} \right)  + 2\frac{ \operatorname{KL}(Q^\star,P) + \log \left( \frac{2}{\delta}  \right)}{\lambda (k+1)m},\\
      & \leq 2\lambda \gamma\left( \frac{D_0}{k+1} +\frac{C_\alpha B(\alpha)}{(k+1)\sqrt{m}}k^{1-\alpha} \right)  + 2\frac{ \operatorname{KL}(Q^\star,P) + \log \left( \frac{2}{\delta}  \right)}{\lambda (k+1)m} \\
      & \leq 2\lambda \gamma\left( \frac{D_0}{k+1} +\frac{C_\alpha B(\alpha)}{(k+1)^{\alpha}\sqrt{m}} \right)  + 2\frac{ \operatorname{KL}(Q^\star,P) + \log \left( \frac{2}{\delta}  \right)}{\lambda (k+1)m},
      \intertext{Now as $\lambda_m \le 1/2$, take $\lambda = \lambda_m$, we have:}
      D_{k+1} & \leq \frac{1}{\sqrt{m}}\left( \frac{D_0(1 - \alpha)}{4(k+1)} +\frac{C_\alpha}{4(k+1)^{\alpha} \sqrt{m}} \right)  + 16\gamma\frac{ \operatorname{KL}(Q^\star,P) + \log \left( \frac{2}{\delta}  \right)}{ (1 - \alpha)(k+1)\sqrt{m}},\\
       & \leq \frac{1}{\sqrt{m}(k+1)^{\alpha}}\left( \frac{D_0(1 - \alpha)}{4} +\frac{C_\alpha}{4 \sqrt{m}}  + 16\gamma\frac{ \operatorname{KL}(Q^\star,P) + \log \left( \frac{2}{\delta}  \right)}{ (1 - \alpha)} \right),
      \intertext{We use $C_{\alpha} \geq D_0$ and its expression to get:}
      D_{k+1} & \leq \frac{1}{\sqrt{m}(k+1)^{\alpha}}\left( \frac{C_\alpha}{4} +\frac{C_\alpha}{4 \sqrt{m}}  + \frac{C_\alpha}{4} \right),
     \end{align*}

We then have: $D_{k+1} \leq \frac{C_\alpha}{\sqrt{m}(k+1)^\alpha}$, this concludes the proof.
\end{proof}

\subsection{Proof of Corollary \ref{cor: suboptimality-conv-rate}}

\begin{proof}[Proof of \Cref{cor: suboptimality-conv-rate}]   
    We re-use the notation $D_k =R(\pi_k) - R(\pi^\star) $ for all $ k\geq 0$.
    We define, for any $\alpha\in[0,1)$, 

    \[ C_\alpha := \max \left( D_0, (1-\lambda)\frac{\operatorname{KL}(\Q^\star,P) + \log(2/\delta)}{\lambda^2 \gamma^* \beta_2 B(\alpha)} \right) \leq \max \left( 1, \frac{\operatorname{KL}(\Q^\star,P) + \log(2/\delta)}{\lambda^2 \gamma^* \beta_2 B(\alpha)} \right), \]
    where $\lambda =\frac{1}{1 + 2^{2+\alpha}\gamma^\star \beta_1\beta_2 B(\alpha)}$
    Note that thanks to this definition, we have 
    \begin{align}
        \label{eq: temp_recursion}
        C_{\alpha}  \frac{\lambda}{1-\lambda}\gamma^* \beta_2B(\alpha) \geq \frac{\operatorname{KL}(\Q^\star,P)+\log(2/\delta)}{\lambda}.
    \end{align}
    Now let us prove the result by strong recursion. \paragraph{Initialisation.} By definition of $C_\alpha$, we have $D_0 \leq C_\alpha$.

    \paragraph{General case.}
    For $k\geq 0$, Assume the result true for all $j\leq k$. By \Cref{cor: recursion}, we have: 

    \begin{align*}
     D_{k+1} & \leq  \frac{\lambda}{1 - \lambda} \sum_{j = 0}^k\frac{n_j}{N_k} \gamma_j D_{j}  + 2\frac{ \operatorname{KL}(Q^\star,P) + \log \left( \frac{2}{\delta}  \right)}{\lambda N_k},\\
     & \leq \frac{\lambda}{1 - \lambda} \gamma^\star \beta_1\beta_2\frac{1}{k+1}\sum_{j = 0}^k D_{j}  + 2\frac{ \operatorname{KL}(Q^\star,P) + \log \left( \frac{2}{\delta}  \right)}{\lambda (k+1)} \beta_1.
     \intertext{In the last line, we used $\frac{k+1}{N_k}\leq \beta_1$, $\frac{n_j}{N_k} = \frac{n_j}{k+1}\frac{k+1}{N_k} \leq\frac{n_j}{k+1}\beta_1 \leq  \beta_1\frac{\beta_2}{k+1}$ and $\gamma_j\leq \gamma^{\star}$ for all $k\geq 0, j\leq k$. Now we use the recursion assumption.}
     D_{k+1} & \leq C_{\alpha} \frac{\lambda}{1 - \lambda} \gamma^\star \beta_1\beta_2\frac{1}{k+1}\sum_{j = 0}^k \frac{1}{(j+1)^{\alpha}}  + 2\frac{ \operatorname{KL}(Q^\star,P) + \log \left( \frac{2}{\delta}  \right)}{\lambda (k+1)} \beta_1.
     \intertext{Yet, $\sum_{j = 0}^k \frac{1}{(j+1)^{\alpha}} \leq 1 + B(\alpha)(k+1)^{1-\alpha} \leq 2B(\alpha)(k+1)^{1-\alpha}$, thus}
     D_{k+1} & \leq 2C_{\alpha} \frac{\lambda}{1 - \lambda} \gamma^\star \beta_1\beta_2B(\alpha)\frac{1}{(k+1)^\alpha}+ 2\frac{ \operatorname{KL}(Q^\star,P) + \log \left( \frac{2}{\delta}  \right)}{\lambda (k+1)} \beta_1. \\
     & \leq 2C_{\alpha} \frac{\lambda}{1 - \lambda} \gamma^\star \beta_1\beta_2B(\alpha)\frac{1}{(k+1)^\alpha} + 2\frac{ \operatorname{KL}(Q^\star,P) + \log \left( \frac{2}{\delta}  \right)}{\lambda (k+1)^\alpha} \beta_1.
     \intertext{Now, use that for all $k, (\frac{k+2}{k+1})^\alpha \leq 2^\alpha$:}
     D_{k+1} & \leq 2^{1+\alpha} \frac{\beta_1}{(k+2)^\alpha}\left(C_{\alpha} \frac{\lambda}{1 - \lambda} \gamma^\star \beta_2 B(\alpha)  + \frac{ \operatorname{KL}(Q^\star,P) + \log \left( \frac{2}{\delta}  \right)}{\lambda } \right).
     \intertext{Now using \Cref{eq: temp_recursion} gives:}
     D_{k+1} & \leq   \frac{\lambda}{1 - \lambda} 2^{2+\alpha}\gamma^\star \beta_1\beta_2 B(\alpha)   \frac{C_{\alpha}}{(k+2)^\alpha}.
\end{align*}

Now notice that $\lambda$ is such that

\[ \frac{\lambda}{1 - \lambda} 2^{2+\alpha}\gamma^\star \beta_1\beta_2 B(\alpha) = 1.  \]

This ensures that $D_{k+1} \leq \frac{C_{\alpha}}{(k+2)^\alpha}$ and concludes the recursion.   
\end{proof}


\section{EXPERIMENTAL DESIGN AND ADDITIONAL RESULTS} \label{app:experiments}

\subsection{Detailed Setup}

\paragraph{General Setup.} Our experimental framework follows the standard multiclass-to-bandit conversion widely used in prior work \citep{dudik14doubly, swaminathan2015batch}. Each multiclass dataset consists of feature-label pairs, which we transform into contextual bandit problems by treating features as contexts and labels as actions. The reward $r$ for taking action $a$ for context $x$ is modeled as Bernoulli with probability $p_x = \epsilon + \mathds{1}\left[a = a^\star(x)\right](1 - 2\epsilon)$, where $a^\star(x)$ is the true label of features $x$, and $\epsilon$ is a noise parameter, that we set to $\epsilon = 0.2$ in all our experiments. This setup ensures an average reward of $1 - \epsilon$ for the optimal action $a^\star(x)$ and $\epsilon$ for all others. In our experiments, we use parametrized Linear Gaussian Policies \citep{sakhi2023pac, aouali23a}, these policies interact with the contextual bandit environment to construct a logged bandit feedback dataset in the form $\{x_i, a_i, c_i\}_{i \in [n]}$, where $c_i = - r_i$ is the associated cost. 

In this procedure, we need three splits: $D_l$ (of size $n_l$) to train the logging policy $\pi_0$, another split $D_c$ (of size $n_c$) to generate the logging feedback with $\pi_0$, and finally a test split $D_{test}$ (of size $n_{test}$) to compute the true risk $R(\pi)$ of any policy $\pi$. In our experiments, we split the training split $D_{train}$ (of size $N$) of the four datasets considered into $D_l$ ($n_l = 0.05N$) and $D_c$ ($n_c = 0.95N$) and use their test split $D_{test}$. The detailed statistics of the different splits can be found in Table \ref{table:det_stats}. Recall that $K$ is the number of actions and $p$ the number of features. 

All datasets are used without processing aside from \texttt{CIFAR100} that we embed using a pretrained Resnet, obtaining a low dimensional vector of $100$ for each image. All experiments do not require heavy computations, they were conducted on a single 16 CPU/ 1 GPU V100 machine.

\begin{table}[h]
\centering
\begin{tabular}{ |c||c|c|c|c|c|c|}
 \hline
 Datasets & $N$ & $n_l$ & $n_c$ & $n_{test}$ & $K$ & $p$ \\
 \hline
 \textbf{MNIST}  & 60 000 & 3000 &  57 000 &  10 000  & 10 & 784\\
 \textbf{FashionMNIST} &  60 000 &  3000 &  57 000 &  10 000  & 10 & 784\\
 \textbf{EMNIST} & 112 800 &  5640 &  107 160 &  18 800  & 47 & 784\\
 \textbf{CIFAR100} & 60 000 & 3000 &  57 000 &  10 000  & 100 & 500\\
 \hline
\end{tabular}
\caption{Detailed statistics of the datasets used.}
\label{table:det_stats}
\end{table}

\paragraph{Linear Gaussian Policies} 
We use the Linear Gaussian Policy of \citep{sakhi2023pac}. To obtain these policies, we restrict $f_\theta$ to:
\begin{align}
    \forall x \in \mathcal{X}, \quad f_\theta(x) = \argmax_{a' \in \mathcal{A}} \left\{x^t\theta_{a'}\right\}
\end{align}
This results in a parameter $\theta$ of dimension $d = p \times K$ with $p$ the dimension of the features $\phi(x)$ and $K$ the number of actions. We also restrict the family of distributions $\mathcal{Q}_{d + 1} = \{Q_{\boldsymbol{\mu}, \sigma} = \mathcal{N}(\boldsymbol{\mu}, \sigma^2 I_d), \boldsymbol{\mu} \in \mathbb{R}^d, \sigma > 0\}$ to independent Gaussians with shared scale.  Estimating the propensity of $a$ given $x$ reduces the computation to a one dimensional integral:
\begin{align*}
    \pi_{\boldsymbol{\mu}, \sigma}(a|x) &= \mathbb{E}_{\epsilon \sim \mathcal{N}(0, 1)}\left[\prod_{a' \neq a} \Phi\left(\epsilon + \frac{\phi(x)^T(\boldsymbol{\mu}_a - \boldsymbol{\mu}_{a'})}{\sigma ||\phi(x)||}\right) \right]
\end{align*}
with $\Phi$ the cumulative distribution function of the standard normal.

\paragraph{The starting behavior policy $\pi_0$.} $\pi_0$ is trained on $D_l$ (supervised manner) with the following parameters: We use $L_2$ regularization of $10^{-4}$. This is used to prevent the logging policy $\pi_0$ from being close to deterministic, allowing efficient learning with importance sampling.  We use Adam \citep{kingma2014adam} with a learning rate of $10^{-1}$ for $10$ epochs. Once it is trained, we use an inverse temperature parameter $\alpha$ on its score to interpolate between a uniform policy $\alpha = 0$ and a trained policy $\alpha = 1$.

\paragraph{Optimizing our learning objectives.} In each optimization subroutine, we use Adam with a learning rate of $10^{-3}$ for $10$ epochs. The gradient of \textbf{LIG} policies is a one dimensional integral, and is approximated using $S = 32$ samples.
\begin{align*}
    \pi_{\boldsymbol{\mu}, \sigma}(a|x) &= \mathbb{E}_{\epsilon \sim \mathcal{N}(0, 1)}\left[\prod_{a' \neq a} \Phi\left(\epsilon + \frac{\phi(x)^T(\boldsymbol{\mu}_a - \boldsymbol{\mu}_{a'})}{\sigma ||\phi(x)||}\right) \right] \\
    &\approx \frac{1}{S} \sum_{s = 1}^S \prod_{a' \neq a} \Phi\left(\epsilon_s + \frac{\phi(x)^T(\boldsymbol{\mu}_a - \boldsymbol{\mu}_{a'})}{\sigma ||\phi(x)||}\right) \quad \epsilon_1, ..., \epsilon_S \sim \mathcal{N}(0, 1).
    \end{align*}

\subsection{More Experiments}

\paragraph{Effect of the quality of $\pi_0$.} In the main experiments, we used $\alpha = 0.2$, which makes the starting behavior policy keep some information about the good actions while being diffuse enough to put mass on all actions. We give here a more comprehensive view of how the algorithms behave depending on $\alpha$. The main take away, which reflects our first intuition, is that if we start far from $\pi_\star$, the improvement brought by sequential off-policy approaches is substantial compared to using the static behavior policy $\pi_0$ to collect data.

\textbf{Effect of Multiple Deployments while varying $\alpha$.} We focus on \texttt{CIFAR100} of large action space ($|\mathcal{A}| = 100$). $\alpha = 0.$ is the purely uniform behavior policy $\pi_0$, $\alpha = 0.5$ is a very good policy but is still diffuse and $\alpha = 1.$ is a peaked policy. Results are shown in \Cref{tab:alpha_effect}. \texttt{SeqAdjLS} outperforms \texttt{SeqLS} in all scenarios, with a bigger gap when $\pi_0$ is far from a good policy. 

\begin{table}
\vspace{-0.15cm}
    \centering
    \label{tab:alpha_effect}
    \begin{tabular}{c|c|c||c}
        \toprule
         $\alpha$ & $k$ & \texttt{SeqLS} (\texttt{Alg} \ref{alg:extension-sakhi}) & \texttt{SeqAdjLS} (\texttt{Alg} \ref{alg:s_analyzed}) \\
        \midrule
        \multirow{5}{*}{$\alpha = 0.$} & 
         $0$ & $-0.010$ & $-0.010$  \\
          & $1$ & $-0.011$ & $-0.011$ \\
          & $5$ & $-0.011$ & $-0.011$ \\
          & $10$ & $-0.020$ & $-0.026$  \\
          & $100$ & $-0.044$ & $\mathbf{-0.077}$  \\
        \cmidrule{1-4}
        \multirow{5}{*}{$\alpha = 0.5$} & 
           $0$ & $-0.268$ & $-0.268$  \\
          & $1$ & $-0.430$ & $-0.440$ \\
          & $5$ & $-0.500$ & $-0.520$ \\
          & $10$ & $-0.527$ & $-0.555$  \\
          & $100$ & $-0.583$ & $\mathbf{-0.621}$  \\
        \cmidrule{1-4}
        \multirow{5}{*}{$\alpha = 1.$} & 
           $0$ & $-0.531$ & $-0.531$  \\
          & $1$ & $-0.569$ & $-0.572$ \\
          & $5$ & $-0.590$ & $-0.600$ \\
          & $10$ & $-0.600$ & $-0.614$  \\
          & $100$ & $-0.630$ & $\mathbf{-0.649}$  \\
        \bottomrule
    \end{tabular}
    \caption{$R(\pi)$ varying $k$ and $\alpha$ in \texttt{CIFAR100} (lower is better).}
    \vspace{-0.35cm}
\end{table}

\textbf{Effect of $\alpha$ on \texttt{SCRM}.} We extend our evaluation of \texttt{SCRM} by varying the parameter $\alpha \in {0, 0.5, 1}$, with $k$ fixed at 10 across all datasets. When $\alpha = 0$, the behavior policy is entirely uniform, making the learning task considerably more difficult—particularly in environments with large action spaces—since the policy provides minimal guidance and infrequently selects low-cost actions. Despite these challenges, both \texttt{SeqLS} and \texttt{SeqAdjLS} consistently outperform \texttt{SCRM}. As $\alpha$ increases, the behavior policy becomes more informative, improving \texttt{SCRM}'s performance. Nonetheless, our sequential methods, especially \texttt{SeqAdjLS}, continue to achieve superior results. The accompanying plots highlight the consistent advantage of \texttt{SeqAdjLS} across all settings.

\begin{figure}[htbp]
    \centering
    \includegraphics[width=\textwidth]{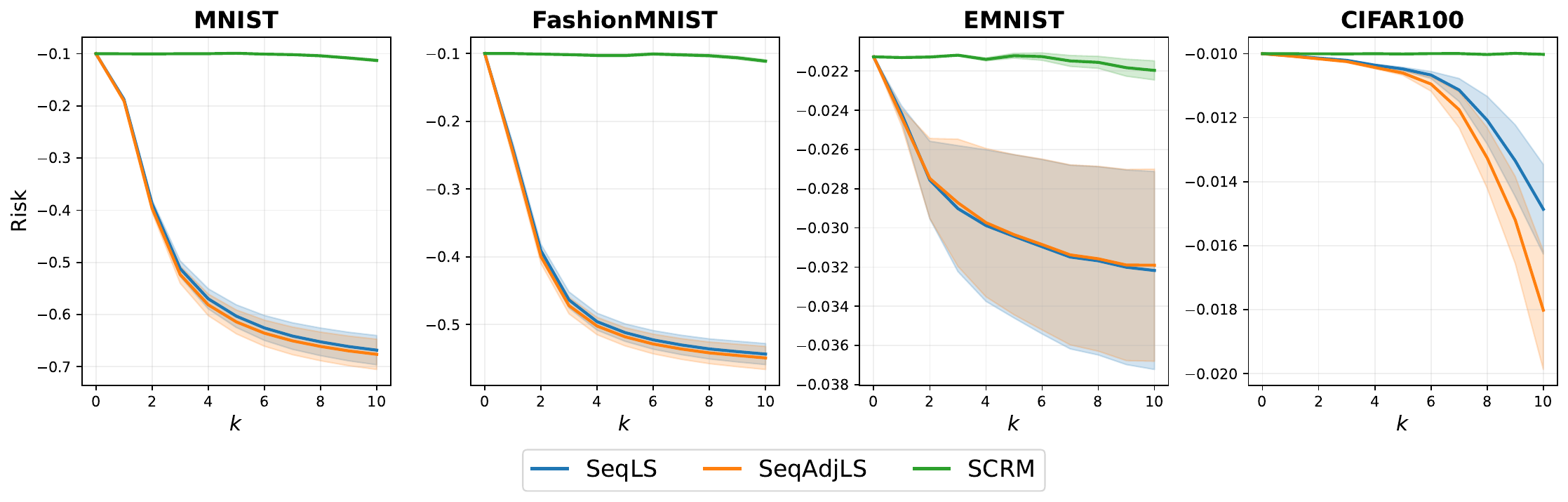}
    \caption{$R(\pi_k)$ at each intermediate step, for $k = 10$ and $\alpha = 0.$.}
    \label{fig:bench_alpha0}
\end{figure}

\begin{figure}[htbp]
    \centering
    \includegraphics[width=\textwidth]{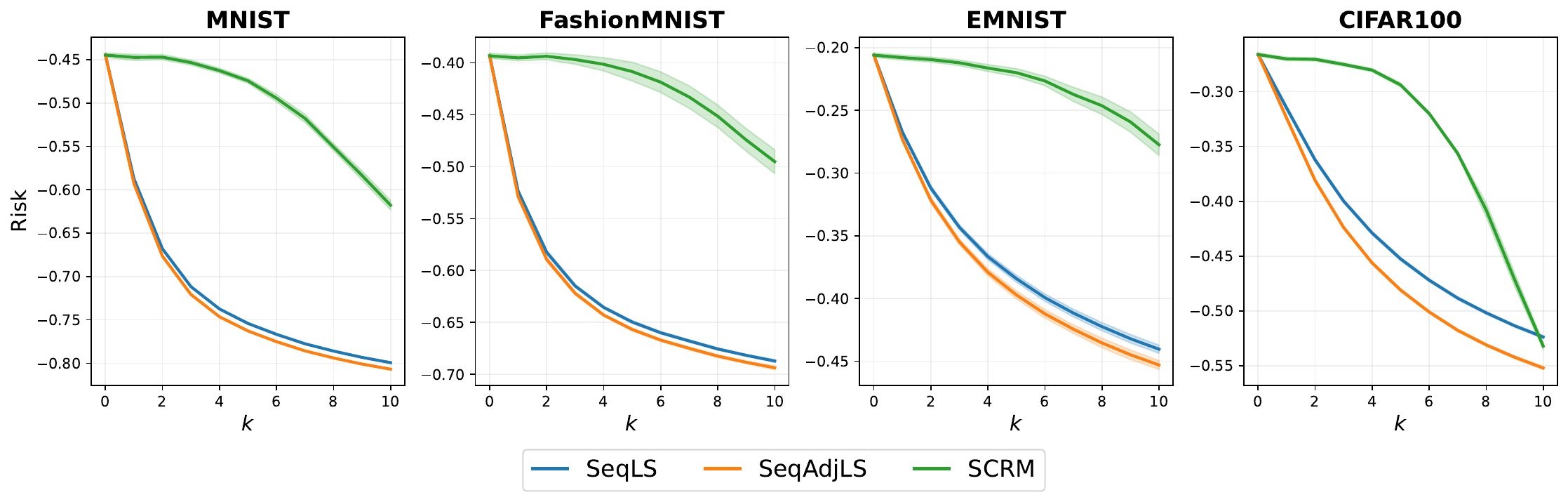}
    \caption{$R(\pi_k)$ at each intermediate step, for $k = 10$ and $\alpha = 0.5$.}
    \label{fig:bench_alpha0.5}
\end{figure}

\begin{figure}[htbp]
    \centering
    \includegraphics[width=\textwidth]{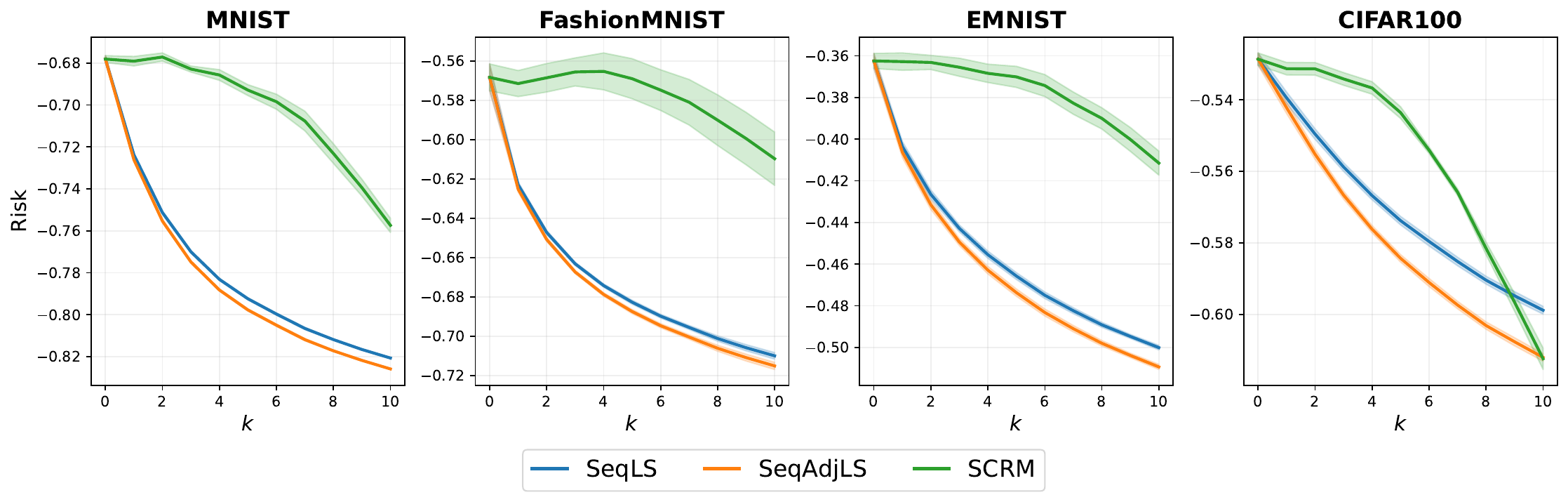}
    \caption{$R(\pi_k)$ at each intermediate step, for $k = 10$ and $\alpha = 1$.}
    \label{fig:bench_alpha1}
\end{figure}

\end{document}